\newtheorem{lem}[theorem]{Lemma}
\newtheorem{assumption}{Assumption}
\newtheorem{defn}{Definition}
\numberwithin{defn}{section}
\newtheorem{prop}[theorem]{Proposition}
\newcommand{\R}{\mathbb{R}}
\newcommand{\N}{\mathbb{N}}
\newcommand{\X}{\mathcal{X}}
\newcommand{\risk}{\mathcal{R}}
\newcommand{\Prob}{\mathbb{P}}
\newcommand{\E}{\mathbb{E}}
\newcommand{\one}{\mathbbm{1}}
\newcommand{\est}[1][]{\hat{#1}}
\newcommand{\marginalDistribution}{\mu}
\newcommand{\regressionFunction}{\eta}
\newcommand{\Y}{\mathcal{Y}}
\newcommand{\dist}{\rho}
\newcommand{\oracle}{\decisionRule^*}
\newcommand{\sample}{\mathcal{D}}
\newcommand{\setOfDecisionRules}{\mathcal{F}(\X,\Y)}
\newcommand{\decisionRule}{\phi}
\newcommand{\classifier}{\hat{\phi}_n}
\newcommand{\distributionClass}{\mathcal{P}}
\newcommand{\ExpectationOverNoisySamples}{\E_{\text{corr}}^{\otimes n}}
\newcommand{\ExpectationOverCleanSamples}{\E_{\text{clean}}^{\otimes n}}
\newcommand{\ProbOverNoisySamples}{\Prob_{\text{corr}}^{\otimes n}}
\newcommand{\ProbOverCleanSamples}{\Prob_{\text{clean}}^{\otimes n}}
\newcommand{\cleanDistribution}{\Prob_{\text{clean}}}
\newcommand{\noisyDistribution}{\Prob_{\text{corr}}}
\newcommand{\noisy}[1][]{
    \ifthenelse{\equal{#1}{Y}}
    {\tilde{Y}}
    {{#1}_{\text{corr}}}
    } 
\newcommand{\noisySample}{\noisy[\sample]}
\newcommand{\noisyRegressionFunction}{\noisy[\regressionFunction]}
\newcommand{\estNoisyRegressionFunction}{\est[\regressionFunction]\noisy[]}
\newcommand{\probFlipFrom}[1][y]{\pi_{#1}}
\newcommand{\probFlipZeroToOne}{\probFlipFrom[0]}
\newcommand{\probFlipOneToZero}{\probFlipFrom[1]}
\newcommand{\estProbFlipZeroToOne}{\est[\pi]_0}
\newcommand{\estProbFlipOneToZero}{\est[\pi]_1}
\newcommand{\marginExponent}{\alpha}
\newcommand{\marginConstant}{C_{\marginExponent}}
\newcommand{\holderExponent}{\beta}
\newcommand{\holderConstant}{C_{\holderExponent}}
\newcommand{\dimensionExponent}{d}
\newcommand{\dimensionRadius}{r_d}
\newcommand{\densityFunction}{\omega_{\marginalDistribution}}
\newcommand{\maxAggregateFlippingProbability}{\nu_{\max}}
\newcommand{\tailsExponent}{\gamma}
\newcommand{\tailsConstant}{C_{\tailsExponent}}
\newcommand{\tailsThreshold}{t_{\tailsExponent}}
\newcommand{\fastLimitsExponent}{\tau}
\newcommand{\fastLimitsConstant}{C_{\fastLimitsExponent}}
\newcommand{\fastLimitsThreshold}{t_{\fastLimitsExponent}}
\newcommand{\suppMarginalDistribution}{\X_{\marginalDistribution}}
\newcommand{\kullbackLeiblerDivergence}{D_{KL}}
\newcommand*{\QEDA}{\hfill\ensuremath{\blacksquare}}
\title[Classification with label noise on non-compact feature spaces]{Classification with unknown class-conditional label noise on non-compact feature spaces}
\begin{document}

\maketitle

\begin{abstract}%
We investigate the problem of classification in the presence of unknown class-conditional label noise in which the labels observed by the learner have been corrupted with some unknown class dependent probability. In order to obtain finite sample rates, previous approaches to classification with unknown class-conditional label noise have required that the regression function is close to its extrema on sets of large measure. We shall consider this problem in the setting of non-compact metric spaces, where the regression function need not attain its extrema.

In this setting we determine the minimax optimal learning rates (up to logarithmic factors). The rate displays interesting threshold behaviour: When the regression function approaches its extrema at a sufficient rate, the optimal learning rates are of the same order as those obtained in the label-noise free setting. If the regression function approaches its extrema more gradually then classification performance necessarily degrades. In addition, we present an adaptive algorithm which attains these rates without prior knowledge of either the distributional parameters or the local density. This identifies for the first time a scenario in which finite sample rates are achievable in the label noise setting, but they differ from the optimal rates without label noise.

\end{abstract}

\begin{keywords}%
Label noise, minimax rates, non-parametric classification, metric spaces.
\end{keywords}

\section{Introduction}
In this paper we investigate the problem of classification with unknown class-conditional label noise on non-compact metric spaces. We determine minimax optimal learning rates which reveal an interesting dependency upon the behaviour of the regression function in the tails of the distribution. 

Classification with label noise is a problem of great practical significance in machine learning. Whilst it is typically assumed that the train and test distributions are one and the same, it is often the case that the labels in the training data have been \emph{corrupted} with some unknown probability \cite[]{Frenay1}. We shall focus on the problem of class-conditional label noise, where the label noise depends on the class label \citep{bootkrajang2014learning}. This has numerous applications including learning from positive and unlabelled data \citep{ElkanNoto,li2019positive} and nuclear particle classification \cite[]{natarajan2013learning,blanchard2016classification}. Learning from class-conditional label noise is complicated by the fact that the optimal decision boundary will typically differ between test and train distributions. This effect can be accommodated for if the learner has prior knowledge of the label noise probabilities (the probability of flipping from one class to another) \citep{natarajan2013learning}. Unfortunately, this is rarely the case in practice.

The seminal work of \cite{scott2013classification} showed that the label noise probabilities may be consistently estimated from the data, under the mutual irreducibility assumption, which is equivalent to the assumption that the regression function $\regressionFunction$ has infimum zero and supremum one \cite[]{menon2015learning}. Without further assumptions the rate of convergence may be arbitrarily slow \cite[]{blanchard2010semi,scott2013classification,blanchard2016classification}. However, \cite{scott2015rate} demonstrated that a finite sample rate of order $O(1/\sqrt{n})$ may be obtained provided that the following strong irreducibility condition holds: There exists a family of sets $\mathcal{S}$ of finite VC dimension (eg. the set of metric balls in $\R^{\dimensionExponent}$), such that for a pair of sets $S_0$, $S_1 \in \mathcal{S}$ of positive measure, the regression function $\regressionFunction$ is uniformly zero on $S_0$ and uniformly one on $S_1$. Finite sample rates have also been obtained by \cite{reeveKaban19a} for the robust $k$-nearest neighbour classifier of \cite{gao2018}, with a strong uniform smoothness condition, in conjunction with the mutual irreducibility condition of \cite{scott2013classification}. In both cases, the learning rates for classification with unknown-class conditional label noise match the optimal rates for the corresponding label noise free setting, up to logarithmic terms.  This motivates the question of whether there are scenarios in which finite sample rates are achievable in the label noise setting, yet the rates differ from the optimal rates without label noise?

In this work we focus on a flexible non-parametric setting which incorporates various natural examples where the marginal distribution is supported on a non-compact metric space. We will make a flexible tail assumption, due to \cite{gadat2016}, which controls the decay of the measure of regions of the feature space where the density is below a given threshold. This avoids the common yet restrictive assumption that the density is bounded uniformly from below or the assumption of finite covering dimension \citep{audibert2007fast}. For non-compact metric spaces it is natural to consider settings where the regression function never attains its infimum and supremum, and instead approaches these values asymptotically, in the tails of the distribution. This occurs, for example, when the class-conditional distributions are mixtures of multivariate Gaussians. In this work we explore the relationship between the rate at which the regression function approaches its extrema and the optimal learning rates. Our contributions are as follows:
\begin{itemize}
\item We determine the minimax optimal learning rate (up to logarithmic factors) for classification in the presence of unknown class-conditional label noise on non-compact metric spaces (Theorems \ref{mainMinimaxThmLowerBound} and \ref{mainUpperBoundThm}). The rate displays interesting threshold behaviour: When the regression function approaches its extrema at a sufficient rate, the optimal learning rates are of the same order as those obtained by \cite{gadat2016} in the label-noise free setting. If the regression function approaches its extrema more gradually then classification performance necessarily degrades. This identifies, for the first time, a scenario in which finite sample rates are achievable in the label noise setting, but they differ from the rates achievable without label noise.
\item We present an algorithm for classification with unknown class-conditional label noise on non-compact metric spaces. The algorithm is straightforward to implement and adaptive, in the sense that it does not require any prior knowledge of the distributional parameters or the local density. A high probability upper bound is proved which demonstrates that the performance of the algorithm is optimal, up to logarithmic factors (Theorem \ref{mainUpperBoundThm}).
\item As a byproduct of our analysis, we introduce a simple and adaptive method for estimating the maximum of a function on a non-compact domain. A high probability bound on its performance is given, with a rate governed by the local density at the maximum, if the maximum is attained, or the rate at which the function approaches its maximum otherwise (Theorem \ref{funcMaxThm}). 
\end{itemize}

We begin by formalising the statistical setting in Section \ref{statisticalSettingSec}. We then present our minimax lower bound in Section \ref{mainLBSec}. In Section \ref{mainAlgoSec} we introduce an adaptive algorithm with a high probability upper bound. Formal proofs may be found within the Appendix.

\section{The statistical setting}\label{statisticalSettingSec}

We consider the problem of binary classification in metric spaces with class-conditional label noise. Suppose we have a metric space $\left(\X,\dist\right)$, a set of possible labels $\Y= \{0,1\}$, and a distribution $\Prob$ over triples $(X,Y,\noisy[Y])\in \X\times\Y^2$, consisting of a feature vector $X \in \X$, a true class label $Y \in \Y$ and a corrupted label $\noisy[Y] \in \Y$, which may be distinct from $Y$. We let $\cleanDistribution$ denote the marginal distribution over $(X,Y)$ and let $\noisyDistribution$ denote the marginal distribution over $(X,\noisy[Y])$. Let $\setOfDecisionRules$ denote the set of all decision rules, which are Borel measureable mappings $\decisionRule:\X\rightarrow \Y$. The goal of learner is to determine a decision rule $\decisionRule \in \setOfDecisionRules$ which minimises the risk
\begin{align*}
\risk\left(\decisionRule\right):=\cleanDistribution\left[\decisionRule(X)\neq Y\right] = \int \left(\decisionRule(x)  \left(1-\regressionFunction(x)\right) +\left(1-\decisionRule(x)\right)\regressionFunction(x)\right) d\marginalDistribution(x) 
\end{align*}
 Here $\regressionFunction:\X \rightarrow [0,1]$ denotes the regression function $\regressionFunction(x):= \cleanDistribution\left[Y=1|X=x\right]$ and $\marginalDistribution$ denotes the marginal distribution over $X$. The risk is minimised by the Bayes decision rule $\oracle\in \setOfDecisionRules$ defined by $\oracle(x):= \one\left\lbrace \regressionFunction(x) \geq \frac{1}{2}\right\rbrace$. Since $\regressionFunction$ is unobserved, the learner must rely upon data. We assume that the learner has access to a corrupted sample $\noisySample =\{ (X_i,\noisy[Y]_i)\}_{i \in [n]}$ where each $(X_i,\noisy[Y]_i)$ is sampled from $\noisyDistribution$ independently. We let $(\noisyDistribution)^{\otimes n}$ denote the corresponding product distribution over samples $\noisySample$, and let $\ExpectationOverNoisySamples$ denote expectation with respect to $(\noisyDistribution)^{\otimes n}$. The sample $\noisySample$ is utilised to train a classifier $\classifier$, which is a random member of $\setOfDecisionRules$, measurable with respect to $\noisySample$. The key difficulty of classification with label noise is that $\noisyDistribution$ and $\cleanDistribution$ may differ. Without further assumptions on the relationship between $\noisyDistribution$ and $\cleanDistribution$ the problem is clearly intractable. We utilise the assumption of \emph{class-conditional label noise} introduced by \cite{scott2013classification}:

\begin{assumption}[Class-conditional label noise]\label{classConditionalNoiseAssumption} We say that $\Prob$ satisfies the class-conditional label noise assumption with parameter $\maxAggregateFlippingProbability \in (0,1)$ if there exists $\probFlipZeroToOne,\probFlipOneToZero \in \left(0,1\right)$ with $\probFlipZeroToOne+\probFlipOneToZero<\maxAggregateFlippingProbability$ such that for Borel sets $A \subset \X$, $\Prob[\noisy[Y]=1|X \in A, \hspace{0mm} Y=0] = \probFlipZeroToOne$ and $\Prob[\noisy[Y]=0|X \in A, \hspace{0mm} Y=1] = \probFlipOneToZero$.
\end{assumption}

The remainder of our assumptions depend solely upon $\cleanDistribution$ and are specified in terms of $\marginalDistribution$ and $\regressionFunction$. We begin with two assumptions which are standard in the literature on non-parametric classification. The first is Tysbakov's margin assumption \cite[]{mammen1999}.

\begin{assumption}[Margin assumption]\label{marginAssumption} Given $\marginExponent \in [0, \infty)$ and $\marginConstant\in [ 1,\infty)$, we shall say that $\Prob$ satisfies the margin assumption with parameters $\left(\marginExponent,\marginConstant\right)$ if the following holds for all $\xi \in (0,1)$, 
\begin{align*}
\marginalDistribution\left( \left\lbrace x \in \X: 0< \left| \regressionFunction(x) -\frac{1}{2}\right| <\xi\right\rbrace \right) \leq \marginConstant \cdot \xi^{\marginExponent}.
\end{align*}
\end{assumption}
We will also assume that the regression function $\regressionFunction$ is H\"{o}lder continuous.

\begin{assumption}[H\"{o}lder assumption]\label{holderAssumption} Given a function $f:\X\rightarrow [0,1]$ and constants $\holderExponent \in (0,1]$, $\holderConstant \geq 1$ we shall say that $f$ satisfies the H\"{o}lder assumption with parameters $(\holderExponent,\holderConstant)$ if for all $x_0, x_1 \in \X$ with $\dist(x_0,x_1)<1$ we have $\left| f(x_0)-f(x_1)\right| \leq \holderConstant \cdot \dist(x_0,x_1)^{\holderExponent}$.
\end{assumption}

We shall also require some assumptions on $\marginalDistribution$. We let $\suppMarginalDistribution \subset \X$ denote the measure-theoretic support of $\marginalDistribution$ and for each $x \in \X$ and $r \in (0,\infty)$ we let $B_r(x):= \left\lbrace z \in \X: \dist(x,z)<r\right\rbrace$.

\begin{assumption}[Minimal mass assumption]\label{minimalMassAssumption} Given $\dimensionExponent  > 0$ and a function $\densityFunction:\X\rightarrow \left(0,1\right)$. We shall say that $\marginalDistribution$ satisfies the minimal mass assumption with parameters $(\dimensionExponent,\densityFunction)$ if we have $\marginalDistribution\left(B_r(x)\right) \geq  \densityFunction(x) \cdot r^d$ for all $x \in \suppMarginalDistribution$ and $r \in \left(0,1\right)$. 
\end{assumption}
\begin{assumption}[Tail assumption]\label{tailAssumption} Given $\tailsExponent \in (0, \infty)$, $\tailsConstant \geq 1$, $\tailsThreshold \in (0,1)$ and a density function $\densityFunction:\X\rightarrow \left(0,1\right)$, we shall say that $\marginalDistribution$ satisfies the tail assumption with parameters $(\tailsExponent, \tailsConstant, \tailsThreshold, \densityFunction)$ if for all $\epsilon  \in (0, \tailsThreshold)$ we have $\marginalDistribution\left(\left\lbrace x \in \X: \densityFunction(x) <\epsilon \right\rbrace\right) \leq \tailsConstant \cdot \epsilon^{\tailsExponent}$.
\end{assumption}

Assumptions \ref{minimalMassAssumption} and \ref{tailAssumption} are natural generalisations to metric spaces of the corresponding assumptions from \cite{gadat2016} in the Euclidean setting. In particular, these assumptions apply to various examples such as Gaussian, Laplace and Cauchy distributions \cite[Table 1]{gadat2016}, with $\densityFunction$ proportional to the probability density function. 

Our final assumption is the most distinctive. It is a quantitative analogue of the mutual irreducibility assumption from \citep{scott2013classification} which implies that $\inf_{x \in \suppMarginalDistribution}\left\lbrace \regressionFunction(x)\right\rbrace = 0$ and\\ $\sup_{x \in \suppMarginalDistribution}\left\lbrace \regressionFunction(x)\right\rbrace = 1$. Rather than assume the existence of positive measure regions of the feature space upon which $\regressionFunction$ is uniformly zero and one, as required for the finite sample rates in \citep[Theorem 2]{scott2015rate}, \cite[Theorem 14]{blanchard2016classification}, we make a weaker assumption that governs the rate at which the regression function approaches its extrema in the tail of the distribution.

\begin{assumption}[Quantitative range assumption]\label{quantitativeRangeAssumption} Given $\fastLimitsExponent\in (0,\infty)$, $\fastLimitsConstant\geq 1$, $\fastLimitsThreshold \in (0,1)$ and a function $\densityFunction:\X\rightarrow \left(0,1\right)$, we shall say that $\Prob$ satisfies the quantitative range assumption with parameters $\left(\fastLimitsExponent, \fastLimitsConstant, \fastLimitsThreshold, \densityFunction\right)$ if for all $\epsilon \in (0,\fastLimitsThreshold)$ we have \[\max\left\lbrace \inf_{x \in \suppMarginalDistribution}\left\lbrace \regressionFunction(x): \densityFunction(x) >\epsilon \right\rbrace, \inf_{x \in \suppMarginalDistribution}\left\lbrace 1-\regressionFunction(x): \densityFunction(x) >\epsilon \right\rbrace\right\rbrace \leq \fastLimitsConstant \cdot \epsilon^{\fastLimitsExponent}.\]
\end{assumption}
If there are regions $S_{\min}\subset \X$ and $S_{\max}\subset \X$ with positive measure $\min\{\marginalDistribution(S_{\min}),\marginalDistribution(S_{\max})\}>0$ such that $\forall x \in S_{\min}$, $\regressionFunction(x)=0$ and $\forall x \in S_{\max}$, $\regressionFunction(x)=1$ then Assumption \ref{quantitativeRangeAssumption} holds with arbitrarily large $\fastLimitsExponent>0$ (see Figure \ref{figForQuantitativeRangeAssumption} (A)). More generally, if there exists $x_{\min}, x_{\max} \in \suppMarginalDistribution$ with $\min\{\densityFunction(x_{\min}),\densityFunction(x_{\min})\}>0$, $\regressionFunction(x_{\min})=0$ and $\regressionFunction(x_{\max})=1$  then Assumption \ref{quantitativeRangeAssumption} again holds with arbitrarily large $\fastLimitsExponent>0$ (see Figure \ref{figForQuantitativeRangeAssumption} (B)). However, Assumption \ref{quantitativeRangeAssumption} can also hold in scenarios in which the extrema of the regression function approaches its extrema gradually in the tails of the distribution. For example, consider a family of distributions $\{\Prob_{\fastLimitsExponent}\}_{\fastLimitsExponent>0}$ where for each $\fastLimitsExponent$, $\Prob_{\fastLimitsExponent}$ has a marginal distribution $\marginalDistribution$ equal to the standard Laplace measure on $\R$ with probability density function $p(x)=\frac{1}{2}\cdot e^{-|x|}$ and regression function $\regressionFunction_{\fastLimitsExponent}(x)=1/(1+e^{-\fastLimitsExponent\cdot x})$ (see Figure \ref{figForQuantitativeRangeAssumption} (C)). For each $\fastLimitsExponent>0$, $\regressionFunction_{\fastLimitsExponent}$ does not attain its extrema, yet Assumption \ref{quantitativeRangeAssumption} holds with exponent $\fastLimitsExponent$. The exponent $\fastLimitsExponent$ controls the rate at which the regression function approaches its extrema as the density function $\densityFunction \approx p$ approaches zero.

\begin{figure}[!htb]
\center{\includegraphics[width=\textwidth]{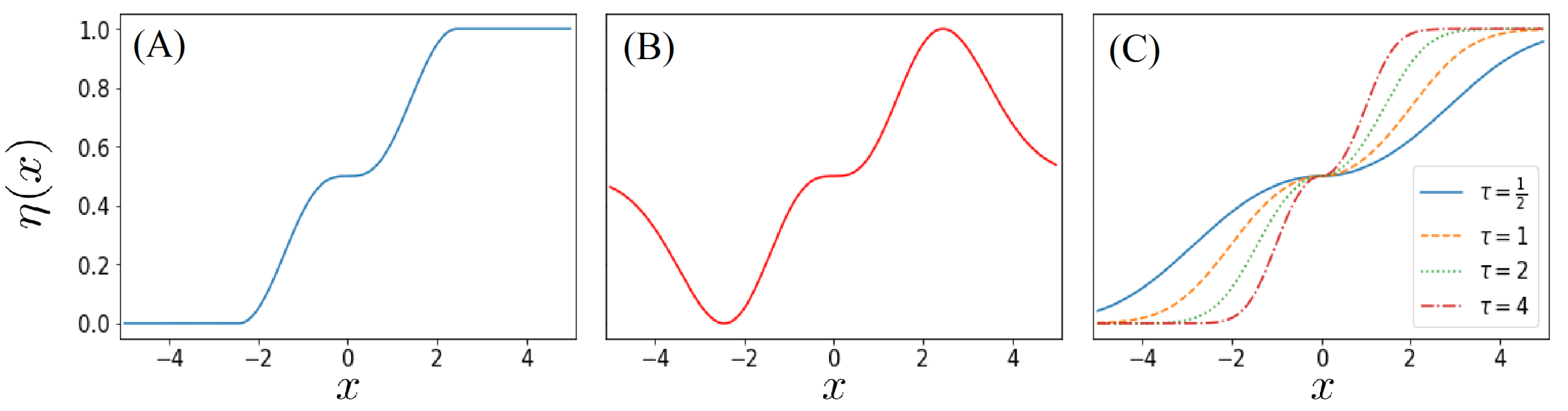}}\vspace{-0.8cm}
\caption{\label{figForQuantitativeRangeAssumption} (A) An example of a regression function in which both the maximum and the minimum of $\regressionFunction$ are attained uniformly on sets of positive measure. (B) An example in which  both the maximum and the minimum of $\regressionFunction$ are each attained at a single point. (C) A family of examples in which the regression function $\regressionFunction(x)=1/(1+e^{-\fastLimitsExponent\cdot x})$ does not attain its extrema. The marginal distribution $\marginalDistribution$ is the standard Laplace measure on $\R$ with density $p(x)=\frac{1}{2}\cdot e^{-|x|}$. In each case, Assumption \ref{quantitativeRangeAssumption} holds with the corresponding exponent $\fastLimitsExponent$.}
\end{figure}

\newpage
In what follows we consider the following class of distributions.
\begin{defn}[Measure classes]\label{measureClassesDefn} Take $\Gamma = (\maxAggregateFlippingProbability,\dimensionExponent,(\marginExponent,\marginConstant),(\holderExponent,\holderConstant), (\tailsExponent,\tailsThreshold,\tailsConstant), ( \fastLimitsExponent, \fastLimitsThreshold, \fastLimitsConstant) )$ consisting of exponents $\marginExponent \in [0,\infty)$, $\holderExponent \in (0,1]$, $\dimensionExponent $, $\tailsExponent$, $\fastLimitsExponent \in (0, \infty)$ and constants $\marginConstant, \holderConstant,\tailsConstant, \fastLimitsConstant \geq 1$ and $\maxAggregateFlippingProbability$, $\tailsThreshold$, $\fastLimitsThreshold \in (0,1)$. We let $\distributionClass\left(\Gamma\right)$ denote the set of all distributions $\Prob$ on triples $(X,Y, \noisy[Y]) \in \X \times \Y^2$, where $(\X,\dist)$ is a metric space and there is some function $\densityFunction:\X \rightarrow (0,1)$ such Assumptions \ref{classConditionalNoiseAssumption}, \ref{marginAssumption}, \ref{holderAssumption}, \ref{minimalMassAssumption}, \ref{tailAssumption}, \ref{quantitativeRangeAssumption} hold with the corresponding parameters.
\end{defn}

Now that we have introduced our assumptions we are ready to state our main results.

\section{Minimax rates for classification with unknown class conditional label noise}\label{mainLBSec}

Our first main result gives a minimax lower bound for classification with unknown class conditional label noise on non-compact domains.

\begin{theorem}\label{mainMinimaxThmLowerBound} 
Take $\Gamma = (\maxAggregateFlippingProbability,\dimensionExponent,(\marginExponent,\marginConstant),(\holderExponent,\holderConstant), (\tailsExponent,\tailsThreshold,\tailsConstant), ( \fastLimitsExponent, \fastLimitsThreshold, \fastLimitsConstant) )$ consisting of exponents $\marginExponent \in [0,\infty)$, $\holderExponent \in (0,1]$, $\dimensionExponent \in [\marginExponent\holderExponent,\infty)$, $\tailsExponent \in (0,1]$, $\fastLimitsExponent \in (0, \infty)$ and constants $\marginConstant \geq 4^{\marginExponent}$, $\holderConstant$, $\tailsConstant$, $\fastLimitsConstant \geq 1$ and $\maxAggregateFlippingProbability \in (0,1)$, $\tailsThreshold \in (0,1/24)$, $\fastLimitsThreshold \in (0,1/3)$. There exists a constant $c(\Gamma)>0$, depending solely upon $\Gamma$, such that for all $n \in \N$
\begin{align*}
\inf_{\classifier}\left\lbrace \sup_{\Prob \in \distributionClass(\Gamma)}\left\lbrace \ExpectationOverNoisySamples\left[ \risk\left(\classifier\right) \right] -  \risk\left(\oracle\right) \right\rbrace\right\rbrace  \geq c(\Gamma) \cdot n^{-\min\left\lbrace { \frac{\tailsExponent\holderExponent(\marginExponent+1)}{\tailsExponent(2\holderExponent+\dimensionExponent)+\marginExponent\holderExponent}},{\frac{\fastLimitsExponent \holderExponent(\marginExponent+1)}{\fastLimitsExponent(2\holderExponent+\dimensionExponent)+\holderExponent}} \right\rbrace }.
\end{align*}
The infimum is taken over all classifiers $\classifier$ which are measurable with respect to $\noisySample$.
\end{theorem}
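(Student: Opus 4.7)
The target rate is a minimum of two exponents, so my overall strategy is to exhibit two sub-families inside $\distributionClass(\Gamma)$, one certifying each rate, and combine them by taking the worse bound. Write $r_\tailsExponent(n) := n^{-\tailsExponent\holderExponent(\marginExponent+1)/[\tailsExponent(2\holderExponent+\dimensionExponent)+\marginExponent\holderExponent]}$ and $r_\fastLimitsExponent(n) := n^{-\fastLimitsExponent \holderExponent(\marginExponent+1)/[\fastLimitsExponent(2\holderExponent+\dimensionExponent)+\holderExponent]}$. The first I would obtain by essentially recovering the noise-free lower bound of \cite{gadat2016} for non-parametric classification on non-compact spaces, via Assouad's lemma. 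Fix a pair of constants $\probFlipZeroToOne, \probFlipOneToZero$ shared across the sub-family, work on $\X = \R^\dimensionExponent$ with a Laplace-type marginal $\marginalDistribution$ chosen to satisfy Assumptions \ref{minimalMassAssumption} and \ref{tailAssumption} with the prescribed exponents, and in a tail region place $m$ disjoint balls $B_r(x_k)$ on each of which a Hölder bump of amplitude $\delta = c r^\holderExponent$ crosses the $1/2$ level of $\regressionFunction$. Independently flipping the orientation of each bump produces a $\{0,1\}^m$-indexed family; Assouad's lemma reduces the lower bound to controlling $\kullbackLeiblerDivergence \lesssim n\delta^2 \marginalDistribution(B_r)$ together with the tail, Hölder, and margin constraints, and optimising over $m,r,\delta$ gives $r_\tailsExponent$.

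The second rate is the genuinely new obstruction: it comes from the unknown pair $(\probFlipZeroToOne,\probFlipOneToZero)$, which is identifiable only through the extrema of the corrupted regression function $\noisyRegressionFunction(x) = (1-\probFlipZeroToOne-\probFlipOneToZero)\regressionFunction(x) + \probFlipZeroToOne$. I would use Le Cam's two-point method on a pair $\Prob^{(0)}, \Prob^{(1)} \in \distributionClass(\Gamma)$ whose corrupted regression functions agree on a bulk region $H$ and differ only on a tail $T$. Concretely, $\Prob^{(1)}$ is obtained from $\Prob^{(0)}$ by shifting $\probFlipZeroToOne$ by a small amount $\Delta$ and affinely reparameterising $\regressionFunction^{(1)}|_H = a\regressionFunction^{(0)}|_H + b$ so that $\noisyRegressionFunction^{(0)}|_H = \noisyRegressionFunction^{(1)}|_H$; the $1/2$-level of $\regressionFunction^{(1)}$ then sits at a level of $\regressionFunction^{(0)}$ shifted by $\Theta(\Delta)$, so the clean Bayes rules $\oracle^{(0)}, \oracle^{(1)}$ disagree on a set of $\marginalDistribution$-mass $\lesssim \Delta^\marginExponent$ by Assumption \ref{marginAssumption}, each point contributing excess risk $\sim \Delta$. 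On the tail $T = \{\densityFunction < \epsilon_*\}$, Assumption \ref{quantitativeRangeAssumption} both allows reshaping $\regressionFunction^{(1)}$ down to $0$ (to preserve the quantitative range condition) and forces $\epsilon_* \lesssim \Delta^{1/\fastLimitsExponent}$. The KL divergence of the corrupted product measures is then bounded by $n \Delta^2 \marginalDistribution(T) \lesssim n \Delta^{2+\tailsExponent/\fastLimitsExponent}$ via Assumption \ref{tailAssumption}; balancing against the excess risk $\Delta^{\marginExponent+1}$, together with the Hölder interpolation that brings in $\holderExponent$ and $\dimensionExponent$, reproduces $r_\fastLimitsExponent$.

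The main obstacle is the second construction. The perturbation must simultaneously keep the corrupted laws $\noisyDistribution^{(0)}$ and $\noisyDistribution^{(1)}$ statistically indistinguishable from $n$ samples and displace the clean Bayes rule on a macroscopic region, while preserving all six assumptions of Definition \ref{measureClassesDefn} for both candidates. Assumption \ref{quantitativeRangeAssumption} plays a dual role here: it enables the perturbation, because the clean regression function can be reshaped freely wherever $\densityFunction$ is small without violating the quantitative range condition, and at the same time constrains it, through the $\fastLimitsExponent$-exponent, which limits how small $\regressionFunction^{(1)}$ may be at a given density level. It is exactly this tension that forces the exponent $\fastLimitsExponent$ into the rate and separates $r_\fastLimitsExponent$ from the noise-free rate $r_\tailsExponent$.
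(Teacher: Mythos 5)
Your high-level decomposition is exactly the paper's: split the minimum into two separate minimax lower bounds (one for the core classification difficulty, one for the difficulty of identifying the noise probabilities) and prove them via Assouad-type and Fano/Le~Cam-type constructions respectively. That part is sound. But there are two genuine gaps.

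First, and most importantly, your calibration of the Kullback--Leibler divergence in the Le~Cam construction is off, and it is not something the vague ``together with the Hölder interpolation'' can repair. You bound the KL by $n\Delta^2\,\marginalDistribution(T)$ where $T=\{\densityFunction<\epsilon_*\}$ is the whole tail, and then use Assumption~\ref{tailAssumption} to get $\marginalDistribution(T)\lesssim\Delta^{\tailsExponent/\fastLimitsExponent}$. Balancing this to $O(1)$ produces $\Delta\sim n^{-\fastLimitsExponent/(2\fastLimitsExponent+\tailsExponent)}$, which does not give $r_\fastLimitsExponent$: matching exponents would require $\tailsExponent=1+\fastLimitsExponent\dimensionExponent/\holderExponent$, which the theorem does not assume (indeed $\tailsExponent\leq 1$). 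The correct construction makes the two corrupted laws differ only on a tiny \emph{anchor} set, not on the whole tail, and its measure is governed not by Assumption~\ref{tailAssumption} but by the \emph{interaction} of Assumptions~\ref{holderAssumption}, \ref{minimalMassAssumption}, and \ref{quantitativeRangeAssumption}: the anchor must sit at density $\gtrsim\Delta^{1/\fastLimitsExponent}$ (so as not to violate the quantitative range condition once the rest of $\regressionFunction$ is depressed by $\Delta$), it must be at distance $\gtrsim\Delta^{1/\holderExponent}$ from its nearest neighbour (Hölder), and so by minimal mass it must have measure $u\gtrsim\Delta^{1/\fastLimitsExponent}\cdot\Delta^{\dimensionExponent/\holderExponent}=\Delta^{(\holderExponent+\fastLimitsExponent\dimensionExponent)/(\fastLimitsExponent\holderExponent)}$. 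Balancing $n u\Delta^2\asymp 1$ is what yields the exponent in $r_\fastLimitsExponent$; Assumption~\ref{tailAssumption} enters only as a compatibility check ($u\leq w$ for $\tailsExponent\leq 1$). Related to this, you conflate the anchor set (which certifies $\sup\regressionFunction\approx 1$ and carries all the statistical information about the noise level) with the set on which the clean Bayes rules disagree (which carries the excess risk and has measure $\asymp\Delta^{\marginExponent}$ by the margin assumption). These must be disjoint: in the paper's four-point construction they are distinct points $a$ and $c$. If the same region had to serve both roles, the constraints would not be compatible.

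Second, your noise-free lower bound is phrased on $\X=\R^{\dimensionExponent}$ with Laplace-type marginal and smooth bumps. That works only for integer $\dimensionExponent$, whereas the theorem allows any $\dimensionExponent\in[\marginExponent\holderExponent,\infty)$. You need a metric-space construction that realises arbitrary real dimension exponents; the paper builds one from binary strings $\A=\{0,1\}^{l}$ with a tree-type metric $\dist(x_0,x_1)=2^{-|x_0\wedge x_1|/\dimensionExponent}$, and then applies Audibert's version of Assouad's lemma on the resulting discrete hypercube. Your Euclidean version is in the spirit of \cite{gadat2016} but does not cover the full parameter range claimed.

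Finally, a minor correction: the margin assumption gives an \emph{upper} bound $\lesssim\Delta^{\marginExponent}$ on the disagreement mass; you must construct the disagreement set to have mass $\asymp\Delta^{\marginExponent}$, which is a design choice in the measure $\marginalDistribution$, not a consequence of the assumption.
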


In Section \ref{mainAlgoSec} we shall introduce a classifier which attains the rates in Theorem \ref{mainMinimaxThmLowerBound} up to logarithmic factors, with high-probability (Theorem \ref{mainUpperBoundThm}). Theorem \ref{mainMinimaxThmLowerBound} displays an interesting threshold behaviour not seen in the label noise free setting. When the exponent $\fastLimitsExponent$ is large ($\fastLimitsExponent \cdot \marginExponent \geq \tailsExponent$) and the regression function $\regressionFunction$ approaches its extrema sufficiently quickly, the exponent matches the label noise free rate. However, when the exponent $\fastLimitsExponent$ is smaller ($\fastLimitsExponent \cdot \marginExponent < \tailsExponent$) and the regression function $\regressionFunction$ approaches its extrema more gradually, the learning behaviour deteriorates accordingly.

The proof of Theorem \ref{mainMinimaxThmLowerBound} reflects this threshold behaviour, and may be split into two claims:
\begin{align}\label{knownNoiseLBClaim}
\inf_{\classifier}\left\lbrace \sup_{\Prob \in \distributionClass(\Gamma)}\left\lbrace
\ExpectationOverNoisySamples\left[ \risk\left(\classifier\right) \right] -  \risk\left(\oracle\right) \right\rbrace\right\rbrace& \geq c_0(\Gamma) \cdot n^{-\frac{\holderExponent \tailsExponent(\marginExponent+1)}{\tailsExponent(2\holderExponent+\dimensionExponent)+\marginExponent\holderExponent} },
\end{align}
\begin{align}\label{unknownNoiseLBClaim}
\inf_{\classifier}\left\lbrace \sup_{\Prob \in \distributionClass(\Gamma)}\left\lbrace\ExpectationOverNoisySamples\left[ \risk\left(\classifier\right) \right] -  \risk\left(\oracle\right) \right\rbrace\right\rbrace &\geq c_1(\Gamma) \cdot n^{-\frac{\fastLimitsExponent \holderExponent(\marginExponent+1)}{\fastLimitsExponent(2\holderExponent+\dimensionExponent)+\holderExponent}}.
\end{align}
The lower bound in (\ref{knownNoiseLBClaim}) corresponds to the difficulty of the pure classification problem, with or without label noise. The exponent is the same as that identified in \cite[Theorem 4.5]{gadat2016}. A full proof of claim (\ref{knownNoiseLBClaim}) is presented in Appendix \ref{LBForUncorruptedDataAppendix} (Proposition \ref{lowerBoundWithKnownNoise}). The proof method is broadly similar to that of \cite{gadat2016}, with two key differences. Firstly, our lower bounds hold for non-integer as well as integer dimension $\dimensionExponent$. Secondly, technical adjustments are required to ensure that Assumption \ref{quantitativeRangeAssumption} is satisfied.

\begin{figure}[!htb]
\center{\includegraphics[width=0.9\textwidth]{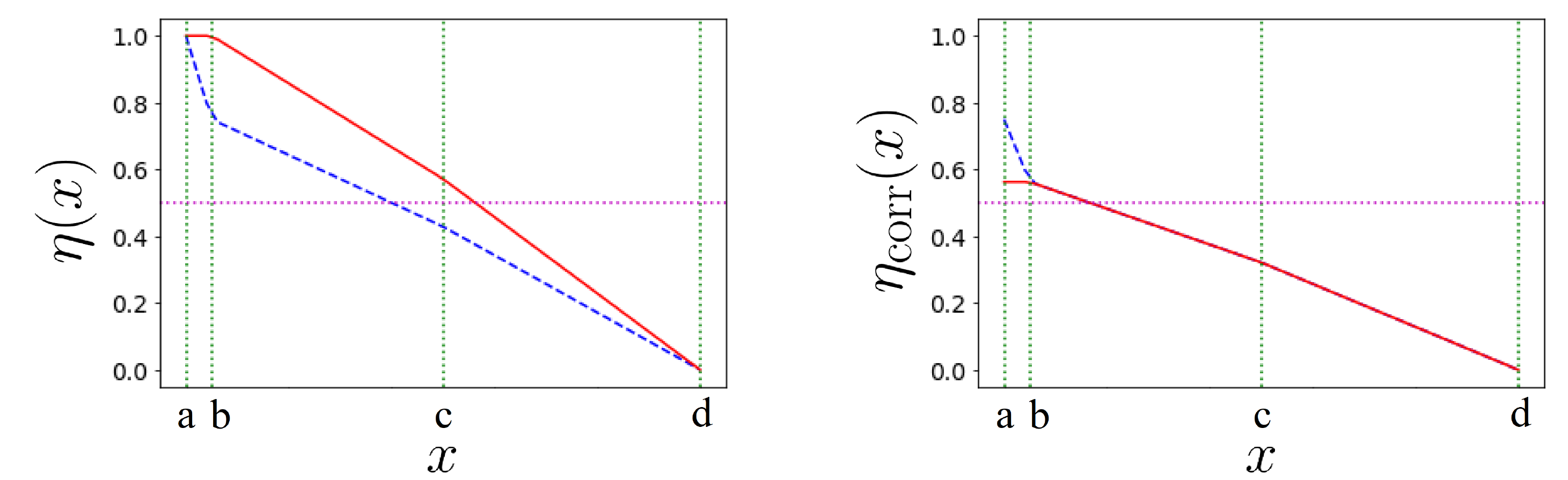}}\vspace{-0.1cm}
\caption{\label{figForLowerBound} An illustration of the construction for the proof of Theorem \ref{mainMinimaxThmLowerBound} when $\fastLimitsExponent \marginExponent \leq \tailsExponent$. A pair of distributions with substantially different regression functions $\regressionFunction$ for which the corresponding corrupted regression functions $\noisy[\regressionFunction]$ are close.
}
\end{figure}

The lower bound in (\ref{unknownNoiseLBClaim}) corresponds to the difficulty of estimating the noise probabilities and the resultant effect upon classification risk. This component of the lower bound is more interesting as it reflects behaviour unseen in the label noise free setting. The proof of the lower bound in (\ref{unknownNoiseLBClaim}) is given in Appendix \ref{LBForUnknownLabelNoiseAppendix} (Proposition \ref{lowerBoundWithUnknownNoise}). The idea is as follows. We construct a pair of distributions such that the corrupted regression functions closely resemble one another, yet the true regression functions are substantially different, and lie on different sides of the classification threshold $\frac{1}{2}$ for large fractions of the feature space. Thus, whilst it is difficult to distinguish the two distributions based upon the corrupted sample $\noisySample$, failing to do so results in substantial increase in risk relative to the Bayes classifier. The construction is illustrated in Figure \ref{figForLowerBound}.

\section{An adaptive algorithm with a minimax optimal upper bound}\label{mainAlgoSec}

In this section we construct a classifier for learning with unknown label noise on non-compact domains. In Section \ref{mainUpperBoundThmSec} we shall present high-probability performance guarantee for the algorithm (Theorem \ref{mainUpperBoundThm}) which matches the minimax lower bound (Theorem \ref{mainMinimaxThmLowerBound}) up to logarithmic factors.

\subsection{Constructing an algorithm for classification with class conditional label noise}\label{algoTemplateSec}

Our methodology is founded on observations due to \cite{menon2015learning}: Define $\noisyRegressionFunction:\X \rightarrow [0,1]$ to be the corrupted regression function, given by $\noisyRegressionFunction(x):=\noisyDistribution[\noisy[Y]=1|X=x]$ for $x \in \X$. By Assumption \ref{classConditionalNoiseAssumption}, $\noisyRegressionFunction$ is related to $\regressionFunction$ by
\begin{align}\label{writingCorruptedRegressionFunctionInTermsOfRegressionFunctionEq}
\noisyRegressionFunction(x)=\left(1-\probFlipZeroToOne-\probFlipOneToZero\right)\cdot \regressionFunction(x)+\probFlipZeroToOne.
\end{align}
Assumption \ref{quantitativeRangeAssumption} implies that $\inf_{x \in \suppMarginalDistribution}\{\regressionFunction(x)\}=0$ and $\sup_{x \in \suppMarginalDistribution}\{\regressionFunction(x)\}=1$. Combining with (\ref{writingCorruptedRegressionFunctionInTermsOfRegressionFunctionEq}) yields
$\inf_{x \in \suppMarginalDistribution}\{\noisyRegressionFunction(x)\}=\probFlipZeroToOne$ and $\sup_{x \in \suppMarginalDistribution}\{\noisyRegressionFunction(x)\}=1-\probFlipOneToZero$. Moreover, relation (\ref{writingCorruptedRegressionFunctionInTermsOfRegressionFunctionEq}) implies
\begin{align*}
\oracle(x) = \one\left\lbrace \regressionFunction(x) \geq {1}/{2}\right\rbrace = \one\left\lbrace \noisyRegressionFunction(x) \geq ({1}/{2})\cdot(1+\probFlipZeroToOne-\probFlipOneToZero)\right\rbrace.
\end{align*}
These observations motivate the `plug-in' style template given in Algorithm \ref{LabelNoiseAlgo}. To instantiate Algorithm \ref{LabelNoiseAlgo} in our setting we require a procedure for estimating the value of the corrupted regression function at a point $\noisy[{\est[\regressionFunction]}](x)$ and a procedure for providing estimates $\hat{M}\left({\noisy[\regressionFunction]}\right)$ and $\hat{M}\left(1-{\noisy[\regressionFunction]}\right)$ for the supremum of $\noisyRegressionFunction$ and $1-\noisyRegressionFunction$, respectively, based on the corrupted sample $\noisySample$. Consequently, we turn to the subject of estimating the values of the corrupted regression function at a point in Section \ref{functionEstimationSec} and to the subject of estimating the extrema of the corrupted regression function in Section \ref{funcMaxSec}. In Section \ref{mainUpperBoundThmSec} we bring these pieces together to provide a concrete instantiation of Algorithm \ref{LabelNoiseAlgo} with a high probability risk bound.

\begin{algorithm}[htbp]

{ 
 \begin{enumerate}[itemsep=2pt,topsep=2pt,parsep=2pt,partopsep=2pt]
     \item Compute an estimate of the corrupted regression function $\noisy[{\est[\regressionFunction]}](x)$ with sample $\noisy[\sample]$;
     \item Estimate $\estProbFlipZeroToOne = 1-\hat{M}\left(1-\noisy[\regressionFunction]\right)$ and $\estProbFlipOneToZero = 1-\hat{M}\left({\noisy[\regressionFunction]}\right)$;
     \item Let $\hat{\decisionRule}(x):= \one\left\lbrace \noisy[{\est[\regressionFunction]}](x) \geq 1/2 \cdot \left(1+\estProbFlipZeroToOne-\estProbFlipOneToZero\right) \right\rbrace$.
     \end{enumerate}
 }
  {\caption{A meta-algorithm for classification with class-conditional label noise.\label{LabelNoiseAlgo}}}
\end{algorithm}

\subsection{Function estimation with k-nearest neighbours and Lepski's rule}\label{functionEstimationSec}

In this section we consider supervised $k$-nearest neighbour regression. Whilst we are motivated by the estimation of $\noisy[\regressionFunction]$ we shall frame our results in a more general fashion for clarity. Suppose we have an unknown function $f:\X\rightarrow [0,1]$ and a distribution $\Prob_f$ on $\X \times [0,1]$ such that $f(x)= \E\left[Z|X=x\right]$ for all $x \in\X$. In this section we consider the task of to estimating $f$ based on a sample $\sample_f = \left\lbrace \left(X_i,Z_i\right)\right\rbrace_{i \in [n]}$ with $(X_i,Z_i) \sim \Prob_f$ generated i.i.d. Given $x \in \X$ we let $\left\lbrace \tau_{n,q}(x)\right\rbrace_{q \in [n]}$ be an enumeration of $[n]$ such that for each $q \in [n-1]$, $\rho\left(x,X_{\tau_{n,q}(x)}\right) \leq \rho\left(x,X_{\tau_{n,q+1}(x)}\right)$. The $k$-nearest neighbour regression estimator is given by
\begin{align*}
\hat{f}_{n,k}(x):=\frac{1}{k} \cdot \sum_{q \in [k]}Z_{\tau_{n,q}(x)}.
\end{align*}
To apply $\hat{f}_{n,k}$ we must choose a value of $k$. The optimal value of $k$ will depend upon the distributional parameters and the local density $\densityFunction(x)$ at a test point. Inspired by \cite{kpotufe2013adaptivity} we shall use Lepski's method to select $k$. For each $x \in \X$, $n\in \N$, $k \in [n]$ and $\delta \in (0,1)$ we define
\begin{align*}
\hat{\mathcal{I}}_{n,k,\delta}(x):= \left[ \hat{f}_{n,k}(x)- \sqrt{\frac{2\log((4n)/\delta)}{k}},\hat{f}_{n,k}(x)+\sqrt{\frac{2\log((4n)/\delta)}{k}}\right].
\end{align*}
We then let 
\begin{align*}
\hat{k}_{n,\delta}(x):=\max_{k \in \N \cap [8\log(2n/\delta),n/2]}\left\lbrace \bigcap_{q \in \N \cap [8\log(2n/\delta),k]}   \hat{\mathcal{I}}_{n,q,\delta}(x) \neq \emptyset \right\rbrace,
\end{align*}
and define $\hat{f}_{n,\delta}(x):= \hat{f}_{n,k}(x)$ with $k=\hat{k}_{n,\delta}(x)$. Intuitively, the value of $k$ is increased until the bias begins to dominate the variance, which reflects itself in non-overlapping confidence intervals.

\begin{theorem}\label{knnRegressionBoundLepskiK} Suppose that $f$ satisfies the H\"{o}lder assumption with parameters $(\holderExponent, \holderConstant)$ and $\marginalDistribution$ satisfies the minimal mass assumption with parameters $(\dimensionExponent, \densityFunction)$. Given any $n \in \N$, $\delta \in (0,1)$ and $x \in \X$, with probability at least $1-\delta$ over $\sample_f$ we have
\begin{align}\label{statementOfLepskiKNNIneq}
\left| \hat{f}_{n,\delta}(x)-f(x) \right|\leq (8 \sqrt{2}) \cdot   {\holderConstant}^{\frac{\dimensionExponent}{2\holderExponent+\dimensionExponent}} \cdot\left( \frac{\log(4n/\delta)}{\densityFunction(x)\cdot n} \right)^{\frac{\holderExponent}{2\holderExponent+\dimensionExponent}}.
\end{align}
\end{theorem}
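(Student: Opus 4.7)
My plan is a standard bias--variance decomposition plus a Lepski-adaptation argument for local averaging estimators, adapted to the minimal mass assumption. First I would condition on the design $\Xsample = (X_1,\ldots,X_n)$ and, via Hoeffding's inequality applied to $\hat{f}_{n,k}(x)$ (an average of $k$ bounded conditionally independent random variables with conditional mean $\tfrac{1}{k}\sum_{q\in[k]} f(X_{\tau_{n,q}(x)})$), together with a union bound over $k \in [n]$, show that the event
\begin{align*}
\mathcal{E}_{\mathrm{var}} := \bigcap_{k=1}^{n} \Bigl\{ \bigl| \hat{f}_{n,k}(x) - \tfrac{1}{k}\sum_{q\in[k]} f(X_{\tau_{n,q}(x)}) \bigr| \leq \sqrt{\log(4n/\delta)/(2k)} \Bigr\}
\end{align*}
holds with probability at least $1 - \delta/2$. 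This controls the variance half of every confidence interval $\hat{\mathcal{I}}_{n,k,\delta}(x)$ in a single stroke.

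Next I would control the $k$-th nearest-neighbour radius. Setting $r_k := (2k/(\densityFunction(x)\, n))^{1/\dimensionExponent}$, the minimal mass assumption gives $\marginalDistribution(B_{r_k}(x)) \geq 2k/n$, so that a Chernoff bound applied to the binomial count $|\{i : X_i \in B_{r_k}(x)\}|$, followed by a union bound over the admissible range $k \in \N \cap [8\log(2n/\delta),n/2]$, shows that with probability at least $1 - \delta/2$ we have $\rho(x, X_{\tau_{n,k}(x)}) \leq r_k$ simultaneously for all such $k$ (the lower cutoff $8\log(2n/\delta)$ is precisely what makes the Chernoff bound nontrivial). On this event the H\"older assumption supplies the bias bound
\begin{align*}
\bigl| \tfrac{1}{k}\sum_{q\in[k]} f(X_{\tau_{n,q}(x)}) - f(x) \bigr| \leq \holderConstant \cdot r_k^{\holderExponent} = \holderConstant \cdot \bigl( 2k/(\densityFunction(x)\, n)\bigr)^{\holderExponent/\dimensionExponent}.
\end{align*}

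The main step is then the Lepski argument, carried out on the intersection of these two events (which has probability at least $1-\delta$). Let $k^{\star}$ be the largest admissible index for which the bias $\holderConstant\, r_k^{\holderExponent}$ is at most the variance half-width $\sqrt{2\log(4n/\delta)/k}$; solving this balance equation gives
\begin{align*}
k^{\star} \asymp \bigl( \densityFunction(x)\, n \bigr)^{2\holderExponent/(2\holderExponent+\dimensionExponent)} \cdot \log(4n/\delta)^{\dimensionExponent/(2\holderExponent+\dimensionExponent)} \cdot \holderConstant^{-2\dimensionExponent/(2\holderExponent+\dimensionExponent)}.
\end{align*}
For every $q \leq k^{\star}$ the triangle inequality combining the variance and bias bounds shows that $f(x) \in \hat{\mathcal{I}}_{n,q,\delta}(x)$, so in particular $f(x) \in \bigcap_{q \leq k^{\star}} \hat{\mathcal{I}}_{n,q,\delta}(x)$ and hence $\hat{k}_{n,\delta}(x) \geq k^{\star}$. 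By the non-emptiness condition defining $\hat{k}_{n,\delta}(x)$, the intervals $\hat{\mathcal{I}}_{n,k^{\star},\delta}(x)$ and $\hat{\mathcal{I}}_{n,\hat{k}_{n,\delta}(x),\delta}(x)$ share a point, which yields
\begin{align*}
\bigl| \hat{f}_{n,\delta}(x) - \hat{f}_{n,k^{\star}}(x) \bigr| \leq 2\sqrt{2\log(4n/\delta)/k^{\star}},
\end{align*}
and adding $|\hat{f}_{n,k^{\star}}(x) - f(x)|$, which by the defining property of $k^{\star}$ is of the same order, produces a bound of the claimed rate once $k^{\star}$ is substituted. The main obstacle I anticipate is bookkeeping the numerical constants so that the leading factor is exactly $8\sqrt{2}$; the conceptual content otherwise reduces to three standard ingredients, namely Hoeffding for the variance, Chernoff on ball masses for the nearest-neighbour radii, and the classical Lepski comparison of overlapping confidence intervals.
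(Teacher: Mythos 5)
Your plan matches the paper's proof step for step: conditional Hoeffding for the variance (the paper's Lemma \ref{knnEstIsCloseToItsConditionalExpectationLemma}), a multiplicative Chernoff bound on ball masses for the $k$-th nearest-neighbour radius with the same $8\log(2n/\delta)$ cutoff (the paper's Lemma \ref{closeNeighboursLemma}), and the Lepski overlapping-interval comparison with the same balance point $\tilde{k}\asymp(\densityFunction(x)n)^{2\holderExponent/(2\holderExponent+\dimensionExponent)}(\log(4n/\delta)/\holderConstant^2)^{\dimensionExponent/(2\holderExponent+\dimensionExponent)}$. One small repair: $k^\star$ should be chosen so that the bias $\holderConstant r_k^{\holderExponent}$ is dominated by the Hoeffding deviation $\sqrt{\log(4n/\delta)/(2k)}$, not by the full interval half-width $\sqrt{2\log(4n/\delta)/k}$, so that bias plus variance stays within the half-width and hence $f(x)\in\hat{\mathcal{I}}_{n,q,\delta}(x)$ for all $q\le k^\star$; this changes only the numerical constants you already flagged.
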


A proof of Theorem \ref{knnRegressionBoundLepskiK} is presented in Appendix \ref{proofOfKNNRegressionBoundsAppendix}. The principal difference with \cite{kpotufe2013adaptivity} is that we do not require an upper bound on the $\epsilon$-covering numbers. This is crucial for our setting since the assumption of an upper bound on the $\epsilon$-covering numbers rules out interesting non-compact settings. Theorem \ref{knnRegressionBoundLepskiK} will be applied to the label noise problem in Section \ref{mainUpperBoundThmSec}.

\subsection{A lower confidence bound approach for estimating the supremum of a function}\label{funcMaxSec}

In this section we deal with the problem of estimating the supremum of a function $M(f):=\sup_{x \in \suppMarginalDistribution}\{f(x)\}$. This is motivated by the challenge of estimating the label noise probabilities (Section \ref{algoTemplateSec}). We adopt the general statistical setting from Section \ref{functionEstimationSec}. One might expect to obtain an effective estimator of the maximum by simply taking the empirical maximum of $\hat{f}_{n,\delta}$ over the data. However, this approach is likely to overestimate the maximum in our non-compact setting since estimates at points with low density will have large variance. To mitigate this effect we must subtract a confidence interval. The error due to the variance of $\hat{f}_{n,k}(x)$ can be bounded via Hoeffding's inequality. The error due to bias is more difficult to estimate since it depends upon unknown distributional parameters. Fortunately, for estimating $M(f)$ this is not a problem since the bias at any given point is always negative. This motivates the following simple adaptive estimator: 
\begin{align}\label{estOfFunctionMaxDef}
\hat{M}_{n,\delta}(f):= \max_{(i,k) \in [n]^2}\left\lbrace \hat{f}_{n,k}(X_i)-\sqrt{\frac{\log(4n/\delta)}{k}}\right\rbrace .
\end{align}

\begin{theorem}\label{funcMaxThm} Suppose that $f$ satisfies the H\"{o}lder assumption with parameters $(\holderExponent, \holderConstant)$ and $\marginalDistribution$ satisfies the minimal mass assumption with parameters $(\dimensionExponent, \densityFunction)$. Given any $n \in \N$ and $\delta \in (0,1)$ with probability at least $1-\delta$ over $\sample_f$ we have
\begin{align}\label{ineqsForFuncMaxThm}
\sup_{x \in \suppMarginalDistribution}\left\lbrace f(x) -7 \cdot   {\holderConstant}^{\frac{\dimensionExponent}{2\holderExponent+\dimensionExponent}} \cdot\left( \frac{\log(4n/\delta)}{\densityFunction(x)\cdot n} \right)^{\frac{\holderExponent}{2\holderExponent+\dimensionExponent}}\right\rbrace \leq \hat{M}_{n,\delta}(f) \leq M(f).
\end{align}
\end{theorem}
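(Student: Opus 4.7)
My plan is to prove the two inequalities in~(\ref{ineqsForFuncMaxThm}) separately, both resting on a single Hoeffding concentration event over the $n^2$ index pairs $(i,k)$. Let $\bar{f}_{n,k}(X_i) := (1/k) \sum_{q \in [k]} f(X_{\tau_{n,q}(X_i)})$ denote the conditional mean of $\hat{f}_{n,k}(X_i)$ given the features $X_1,\ldots,X_n$. Conditional on the features, the permutation $\tau_{n,q}(X_i)$ is fixed and $\hat{f}_{n,k}(X_i)$ is an average of $k$ independent $[0,1]$-valued random variables, so Hoeffding applies. A union bound over $(i,k) \in [n]^2$ produces, with probability at least $1 - \delta/2$, an event $H$ on which $|\hat{f}_{n,k}(X_i) - \bar{f}_{n,k}(X_i)| \leq \sqrt{\log(4n/\delta)/k}$ for every $(i,k)$ (the raw union bound gives $\sqrt{\log(8n^2/\delta)/(2k)}$, which is dominated by $\sqrt{\log(4n/\delta)/k}$). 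The upper bound in~(\ref{ineqsForFuncMaxThm}) is then immediate: on $H$, $\hat{f}_{n,k}(X_i) - \sqrt{\log(4n/\delta)/k} \leq \bar{f}_{n,k}(X_i) \leq M(f)$, since $\bar{f}_{n,k}(X_i)$ is an average of values of $f$; maximising over $(i,k)$ gives $\hat{M}_{n,\delta}(f) \leq M(f)$.

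For the lower bound I will argue pointwise and then lift to uniformity by a limiting argument. Let $g(x) := f(x) - 7\holderConstant^{\dimensionExponent/(2\holderExponent+\dimensionExponent)} (\log(4n/\delta)/(n\densityFunction(x)))^{\holderExponent/(2\holderExponent+\dimensionExponent)}$ and $G := \sup_{x \in \suppMarginalDistribution} g(x)$, which is a deterministic scalar. Once I establish that $\Prob(\hat{M}_{n,\delta}(f) \geq g(x^*)) \geq 1 - \delta$ for every fixed $x^* \in \suppMarginalDistribution$, I pick a sequence $x_m \in \suppMarginalDistribution$ with $g(x_m) \uparrow G$ and observe $\{\hat{M}_{n,\delta}(f) \geq G\} = \bigcap_m \{\hat{M}_{n,\delta}(f) \geq g(x_m)\}$ is a decreasing intersection of events each of probability $\geq 1 - \delta$; continuity of $\Prob$ from above then delivers the stated uniform bound.

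The pointwise argument for a fixed $x^* \in \suppMarginalDistribution$ balances bias against variance via the data-independent choices $r^* := (\log(4n/\delta)/(\holderConstant^2 n \densityFunction(x^*)))^{1/(2\holderExponent+\dimensionExponent)}$ and $k^* := \lceil n \densityFunction(x^*)(r^*)^{\dimensionExponent}/2 \rceil$. If $r^* \geq 1$, then $\log(4n/\delta)/(n\densityFunction(x^*)) \geq \holderConstant^2$, whence the subtracted error in $g$ is at least $7\holderConstant \geq 7$, making $g(x^*) \leq -6$; this is dominated by the trivial bound $\hat{M}_{n,\delta}(f) \geq -\sqrt{\log(4n/\delta)/n}$ obtained from the non-negativity of $\hat{f}_{n,n}(X_1)$. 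Otherwise $r^* < 1$, and Assumption~\ref{minimalMassAssumption} gives $\marginalDistribution(B_{r^*}(x^*)) \geq \densityFunction(x^*)(r^*)^{\dimensionExponent}$; a multiplicative Chernoff bound guarantees, with probability $\geq 1 - \delta/2$, that at least $k^*$ samples $X_1,\ldots,X_n$ lie in $B_{r^*}(x^*)$. On the intersection of this event with $H$, any such $X_{i^*} \in B_{r^*}(x^*)$ has all $k^*$ nearest neighbours within $B_{2r^*}(X_{i^*}) \subseteq B_{3r^*}(x^*)$, so Assumption~\ref{holderAssumption} yields $\bar{f}_{n,k^*}(X_{i^*}) \geq f(x^*) - 3^{\holderExponent}\holderConstant(r^*)^{\holderExponent}$. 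Combining with the Hoeffding bound on $H$ and substituting the definitions of $r^*$ and $k^*$, both the bias $3^{\holderExponent}\holderConstant(r^*)^{\holderExponent}$ and the variance $2\sqrt{\log(4n/\delta)/k^*}$ simplify to constant multiples of $\holderConstant^{\dimensionExponent/(2\holderExponent+\dimensionExponent)}(\log(4n/\delta)/(n\densityFunction(x^*)))^{\holderExponent/(2\holderExponent+\dimensionExponent)}$, with combined constant at most $3^{\holderExponent} + 2\sqrt{2} < 7$.

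The principal obstacle is converting pointwise control into a uniform statement over the non-compact support $\suppMarginalDistribution$, where a direct covering or empirical-process argument would usually be required; my resolution exploits that $\sup_x g(x)$ is a deterministic scalar, so a limit over approximating $x_m$ replaces global uniformity. A secondary technical nuisance is the Chernoff corner case when $n\densityFunction(x^*)$ sits just above the $r^*=1$ threshold and the available lower bound on $np$ is only $\log(4n/\delta)/\holderConstant^2$; this is handled either by enlarging the constant in the definition of $r^*$ (absorbed into the slack $7 - 3^{\holderExponent} - 2\sqrt{2}$) or by merging the corner case into the trivial regime.
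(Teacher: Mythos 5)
Your proposal is correct and follows essentially the same route as the paper's proof: a single Hoeffding union event over $(i,k)\in[n]^2$ giving the upper bound $\hat{M}_{n,\delta}(f)\leq M(f)$, a multiplicative Chernoff argument at a fixed $x^*$ with the same bias--variance balancing choice of $k$ and the same factor-of-$3$ triangle-inequality control of the $k$ nearest neighbours of a sample near $x^*$, followed by a continuity-of-measure lift from the pointwise bound to the supremum. The only cosmetic differences are that you take a ceiling rather than a floor for $k^*$ (giving a slightly sharper constant $3^{\holderExponent}+2\sqrt{2}$ against the paper's $3+4=7$) and handle the $r^*\geq 1$ corner case explicitly where the paper invokes "WLOG $n$ large enough, else trivial."
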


\begin{proof}
It suffices to show that for any fixed $x_0 \in \X$ with probability at least $1-\delta$ over $\sample_f$ we have
\begin{align}\label{toProveInMaxThmBound}
 f(x_0) -7 \cdot   {\holderConstant}^{\frac{\dimensionExponent}{2\holderExponent+\dimensionExponent}} \cdot\left( \frac{\log(4n/\delta)}{\densityFunction(x_0)\cdot n} \right)^{\frac{\holderExponent}{2\holderExponent+\dimensionExponent}} \leq \hat{M}_{n,\delta}(f) \leq M(f).
\end{align}
Indeed, the bound (\ref{ineqsForFuncMaxThm}) may then be deduced by continuity of measure.\\ Choose $\tilde{k}:= \lfloor \frac{1}{2} \cdot \left(\densityFunction(x_0)\cdot n\right)^{\frac{2\holderExponent}{2\holderExponent+\dimensionExponent}} \cdot \left({ \log(4n/\delta)}\cdot{\holderConstant^{-2}}\right)^{{\dimensionExponent}/{(2\holderExponent+\dimensionExponent)}}\rfloor$. By an application of the multiplicative Chernoff bound (Lemma \ref{closeNeighboursLemma}), the following holds with probability at least $1-\delta/2$,
\begin{align}\label{closeNeighboursAppInMaxThmPf}
\dist\left(x_0,X_{\tau_{n,\tilde{k}}(x)}\right)< \left( \frac{2\tilde{k}}{\densityFunction(x_0) \cdot n} \right)^{\frac{1}{\dimensionExponent}} \leq \xi:=\left(\frac{\log(4n/\delta)}{\holderConstant^2 \cdot \densityFunction(x_0) \cdot n}\right)^{\frac{1}{2\holderExponent+\dimensionExponent}},
\end{align}
provided that $8\log(2n/\delta)\leq \tilde{k}\leq \densityFunction(x_0)\cdot n/2$. By Hoeffding's inequality (see Lemma \ref{knnEstIsCloseToItsConditionalExpectationLemma}) combined with the union bound the following holds simultaneously over all pairs $(i,k) \in [n]^2$ with probability at least $1-\delta/2$,
\begin{align}\label{conditionalExpectationApplicationBoundInMaxThmPf}
\left|  \hat{f}_{n,k}(X_i)- \frac{1}{k}\sum_{q \in [k]}f\left(X_{\tau_{n,q}(X_i)}\right)\right| < \sqrt{\frac{\log(2/(\delta/(2n^2))}{2k}}\leq \sqrt{\frac{\log(4n/\delta)}{k}}.
\end{align}
Let us assume that (\ref{closeNeighboursAppInMaxThmPf}) and (\ref{conditionalExpectationApplicationBoundInMaxThmPf}) hold. By the union bound this is the case with probability at least $1-\delta$. Now take $i_0 =\tau_{n,1}(x_0) \in [n]$.  The upper bound in (\ref{toProveInMaxThmBound}) follows immediately from (\ref{conditionalExpectationApplicationBoundInMaxThmPf}). 

To prove the lower bound in (\ref{toProveInMaxThmBound}) we assume, without loss of generality, that $n$ is sufficiently large that 
$8\log(2n/\delta)\leq \tilde{k}\leq \densityFunction(x_0)\cdot n/2$ and $\tilde{k}\geq   \frac{1}{4} \cdot \left(\densityFunction(x_0)\cdot n\right)^{\frac{2\holderExponent}{2\holderExponent+\dimensionExponent}} \cdot \left({ \log(4n/\delta)}\cdot{\holderConstant^{-2}}\right)^{{\dimensionExponent}/{(2\holderExponent+\dimensionExponent)}}$. Indeed the lower bound is trivial for smaller values of $n$. By (\ref{closeNeighboursAppInMaxThmPf}) combined with the triangle inequality, for each $q \in [\tilde{k}]$ we have $\dist\left(X_{i_0},X_{\tau_{n,q}(x)}\right) \leq \dist\left(x_0,X_{i_0}\right)+\dist\left(x_0,X_{\tau_{n,q}(x)}\right) \leq 2\cdot\xi$,
where $\xi$ is defined in (\ref{closeNeighboursAppInMaxThmPf}). Hence, for each $q \in [\tilde{k}]$ we have $\dist\left(X_{i_0},X_{\tau_{n,q}(X_{i_0})}\right)\leq 2\cdot \xi$. Applying  (\ref{closeNeighboursAppInMaxThmPf}) once again we see that for all $q \in [\tilde{k}]$, we have $\dist\left(x_0,X_{\tau_{n,q}(X_{i_0})}\right) \leq \dist\left(x_0,X_{i_0}\right)+\dist\left(X_{i_0},X_{\tau_{n,q}(X_{i_0})}\right) \leq 3\cdot \xi$. By the H\"{o}lder assumption we deduce that 
\begin{align*}
\left| \frac{1}{k}\sum_{q \in [\tilde{k}]}f\left(X_{\tau_{n,q}(X_{i_0})}\right) -f(x_0) \right| &  \leq \max_{q \in [\tilde{k}]}\left\lbrace \holderConstant\cdot \dist\left(x_0,X_{\tau_{n,q}(X_{i_0})}\right)^{\holderExponent}  \right\rbrace\\
& \leq \holderConstant \cdot \left( 3 \cdot \xi\right)^{\holderExponent} \leq 3 \cdot   {\holderConstant}^{\frac{\dimensionExponent}{2\holderExponent+\dimensionExponent}} \cdot\left( \frac{\log(4n/\delta)}{\densityFunction(x_0)\cdot n} \right)^{\frac{\holderExponent}{2\holderExponent+\dimensionExponent}}.
\end{align*}
Combining with (\ref{conditionalExpectationApplicationBoundInMaxThmPf}) we deduce that
\begin{align*}
\hat{M}_{n,\delta}(f)  &\geq \hat{f}_{n,k}(X_{i_0})-f(x_0)-\sqrt{\frac{\log(4n/\delta)}{\tilde{k}}} \\
&\geq \frac{1}{k}\sum_{q \in [\tilde{k}]}f\left(X_{\tau_{n,q}(X_{i_0})}\right)  -4\cdot \sqrt{\frac{\log(4n/\delta)}{4\tilde{k}}}\\
& \geq f(x_0) - 3 \cdot   {\holderConstant}^{\frac{\dimensionExponent}{2\holderExponent+\dimensionExponent}} \cdot\left( \frac{\log(4n/\delta)}{\densityFunction(x_0)\cdot n} \right)^{\frac{\holderExponent}{2\holderExponent+\dimensionExponent}} -4\cdot \sqrt{\frac{\log(4n/\delta)}{4\tilde{k}}}\\
&\geq f(x_0) - 7 \cdot   {\holderConstant}^{\frac{\dimensionExponent}{2\holderExponent+\dimensionExponent}} \cdot\left( \frac{\log(4n/\delta)}{\densityFunction(x_0)\cdot n} \right)^{\frac{\holderExponent}{2\holderExponent+\dimensionExponent}}.
\end{align*}
This gives the lower bound in (\ref{toProveInMaxThmBound}) and completes the proof of Theorem \ref{funcMaxThm}.
\end{proof}

Theorem \ref{funcMaxThm} implies the following corollary.

\begin{corollary}\label{fastLimitsMaxThmCorollary} Suppose that $f$ satisfies the H\"{o}lder assumption with parameters $(\holderExponent, \holderConstant)$ and $\marginalDistribution$ satisfies the minimal mass assumption with parameters $(\dimensionExponent, \densityFunction)$. Suppose further that for some $\fastLimitsExponent \in (0,\infty]$, $\fastLimitsConstant \geq 1$ and $\fastLimitsThreshold \in (0,1)$, for each $\epsilon \in (0,\fastLimitsThreshold)$ we have $\sup_{x\in \suppMarginalDistribution}\left\lbrace f(x): \densityFunction(x)>\epsilon \right\rbrace \geq M(f)-\fastLimitsConstant \cdot \epsilon^{\fastLimitsExponent}$. Then, for each $n \in \N$ and $\delta \in (0,1)$ with probability at least $1-\delta$ over $\sample_f$,
\begin{align*}
\left|  \hat{M}_{n,\delta}(f) - M(f)\right| \leq 8 \cdot\left(\holderConstant^{\dimensionExponent/\holderExponent}\cdot (\fastLimitsConstant/\fastLimitsThreshold) \right)^{\frac{\holderExponent}{2\holderExponent+\dimensionExponent}}\cdot\left( \frac{\log(4n/\delta)}{ n} \right)^{\frac{\fastLimitsExponent \holderExponent}{\fastLimitsExponent(2\holderExponent+\dimensionExponent)+\holderExponent}}.
\end{align*}
\end{corollary}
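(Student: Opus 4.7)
The plan is to derive the corollary directly from Theorem \ref{funcMaxThm}, which already supplies the upper bound $\hat{M}_{n,\delta}(f) \leq M(f)$; only the matching lower bound needs work. I would begin by conditioning on the high-probability event of measure at least $1-\delta$ from Theorem \ref{funcMaxThm}, on which, simultaneously for every $x \in \suppMarginalDistribution$,
\begin{equation*}
\hat{M}_{n,\delta}(f) \;\geq\; f(x) - 7\, \holderConstant^{\frac{\dimensionExponent}{2\holderExponent+\dimensionExponent}} \left( \frac{\log(4n/\delta)}{\densityFunction(x)\, n} \right)^{\frac{\holderExponent}{2\holderExponent+\dimensionExponent}}.
\end{equation*}

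Next, for any parameter $\epsilon \in (0, \fastLimitsThreshold)$, the quantitative approach-rate hypothesis says the supremum of $f$ restricted to the region $\{\densityFunction > \epsilon\}$ is within $\fastLimitsConstant \cdot \epsilon^{\fastLimitsExponent}$ of $M(f)$, so I can select $x_{\epsilon} \in \suppMarginalDistribution$ with $\densityFunction(x_{\epsilon}) > \epsilon$ and $f(x_{\epsilon}) \geq M(f) - 2\, \fastLimitsConstant\, \epsilon^{\fastLimitsExponent}$. Plugging this specific point into the inequality above produces
\begin{equation*}
M(f) - \hat{M}_{n,\delta}(f) \;\leq\; 2\, \fastLimitsConstant\, \epsilon^{\fastLimitsExponent} + 7\, \holderConstant^{\frac{\dimensionExponent}{2\holderExponent+\dimensionExponent}} \left( \frac{\log(4n/\delta)}{\epsilon\, n} \right)^{\frac{\holderExponent}{2\holderExponent+\dimensionExponent}}.
\end{equation*}
The key step is then to balance these two $\epsilon$-dependent contributions. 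Equating $\epsilon^{\fastLimitsExponent}$ with $(\log(4n/\delta)/(\epsilon n))^{\holderExponent/(2\holderExponent+\dimensionExponent)}$ (up to constants) pins down $\epsilon \asymp (\log(4n/\delta)/n)^{\holderExponent/(\fastLimitsExponent(2\holderExponent+\dimensionExponent) + \holderExponent)}$, and both contributions then scale as $(\log(4n/\delta)/n)^{\fastLimitsExponent \holderExponent/(\fastLimitsExponent(2\holderExponent+\dimensionExponent) + \holderExponent)}$, which is exactly the rate claimed in the statement.

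The main obstacle is routine bookkeeping to guarantee that the constraint $\epsilon < \fastLimitsThreshold$ is respected and that the constant has the displayed form $8 \bigl( \holderConstant^{\dimensionExponent/\holderExponent} (\fastLimitsConstant/\fastLimitsThreshold) \bigr)^{\holderExponent/(2\holderExponent+\dimensionExponent)}$. For $n$ large enough, the optimal balancing $\epsilon$ automatically lies in $(0, \fastLimitsThreshold)$ and the threshold enters only through the factor $1/\fastLimitsThreshold$ used to absorb the regime where the natural $\epsilon$ would exceed $\fastLimitsThreshold$. For the remaining small values of $n$, the asserted bound already exceeds $1 \geq |\hat{M}_{n,\delta}(f) - M(f)|$ once the $(\fastLimitsConstant/\fastLimitsThreshold)^{\holderExponent/(2\holderExponent+\dimensionExponent)}$ factor is included, making the inequality trivial. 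Merging these two regimes into one clean statement, and folding the Hölder and approach-rate constants into the single prefactor $8(\holderConstant^{\dimensionExponent/\holderExponent}(\fastLimitsConstant/\fastLimitsThreshold))^{\holderExponent/(2\holderExponent+\dimensionExponent)}$, is the only calculation requiring genuine care.
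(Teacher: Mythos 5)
Your plan matches the paper's own (one-line) proof of this corollary exactly: apply Theorem~\ref{funcMaxThm}, feed in the quantitative approach-rate hypothesis at a well-chosen density threshold~$\epsilon$, and balance the two resulting terms against each other, handling the $\epsilon<\fastLimitsThreshold$ constraint via the $1/\fastLimitsThreshold$ factor in the final constant. The only small imprecision is the factor of $2$ you introduce when selecting $x_\epsilon$; this is avoidable (and would disturb the displayed constant $8(\holderConstant^{\dimensionExponent/\holderExponent}\fastLimitsConstant/\fastLimitsThreshold)^{\holderExponent/(2\holderExponent+\dimensionExponent)}$) if you instead bound the supremum directly, noting that for every $x$ with $\densityFunction(x)>\epsilon$ the penalty term satisfies $(\log(4n/\delta)/(\densityFunction(x)n))^{\holderExponent/(2\holderExponent+\dimensionExponent)}<(\log(4n/\delta)/(\epsilon n))^{\holderExponent/(2\holderExponent+\dimensionExponent)}$, which yields $\hat{M}_{n,\delta}(f)\ge \sup_{\densityFunction(x)>\epsilon} f(x) - 7\holderConstant^{\dimensionExponent/(2\holderExponent+\dimensionExponent)}(\log(4n/\delta)/(\epsilon n))^{\holderExponent/(2\holderExponent+\dimensionExponent)}\ge M(f)-\fastLimitsConstant\epsilon^{\fastLimitsExponent}-7\holderConstant^{\dimensionExponent/(2\holderExponent+\dimensionExponent)}(\log(4n/\delta)/(\epsilon n))^{\holderExponent/(2\holderExponent+\dimensionExponent)}$ with no extra slack.
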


\begin{proof}
Combine Theorem \ref{funcMaxThm} with $\sup_{x\in \suppMarginalDistribution}\left\lbrace f(x): \densityFunction(x)>\epsilon \right\rbrace \geq M(f)-\fastLimitsConstant \cdot \epsilon^{\fastLimitsExponent}$ and \[\epsilon = \min \left\lbrace \fastLimitsThreshold, \left( \holderConstant^{\dimensionExponent}/\fastLimitsConstant^{2\holderExponent+\dimensionExponent}\right)^{\frac{ 1}{\fastLimitsExponent(2\holderExponent+\dimensionExponent)+\holderExponent}} \cdot \left( \frac{\log(4n/\delta)}{ n} \right)^{\frac{ \holderExponent}{\fastLimitsExponent(2\holderExponent+\dimensionExponent)+\holderExponent}}\right\rbrace.\]
\end{proof}

Corollary \ref{fastLimitsMaxThmCorollary} highlights the dependency of the maximum estimation method upon the rate at which the function approaches its maximum in the tails of the distribution.

\subsection{A high-probability upper bound for classification with class conditional label noise}\label{mainUpperBoundThmSec}

We now combine the procedures introduced in Sections \ref{functionEstimationSec} and \ref{funcMaxSec} to instantiate the template given in Algorithm \ref{LabelNoiseAlgo}. Given a corrupted sample $\noisySample$ and a confidence parameter $\delta \in (0,1)$ proceed as follows: First, we estimate $\noisyRegressionFunction(x)$ using the $k$-NN method introduced in Section \ref{functionEstimationSec} $\noisy[{\est[\regressionFunction]}](x) =\widehat{\left(\noisy[\regressionFunction]\right)}_{n,\delta^2/3}(x)$. Second, we apply the maximum estimation procedure introduced in Section \ref{funcMaxSec} to obtain estimates $\estProbFlipZeroToOne = 1-\hat{M}_{{n},{\delta/3}}\left(1-\noisy[\regressionFunction]\right)$ and $\estProbFlipOneToZero = 1-\hat{M}_{{n},{\delta/3}}\left({\noisy[\regressionFunction]}\right)$. Third, we take $\hat{\decisionRule}_{n,\delta}(x):= \one\left\lbrace \noisy[{\est[\regressionFunction]}](x) \geq 1/2 \cdot \left(1+\estProbFlipZeroToOne-\estProbFlipOneToZero\right) \right\rbrace$. The classifier $\hat{\decisionRule}_{n,\delta}$ satisfies the high probability risk bound given in Theorem \ref{mainUpperBoundThm}.

\newcounter{Kcounter}
\setcounter{Kcounter}{-1}

\newcommand{\KConstant}{\addtocounter{Kcounter}{1}K_{\arabic{Kcounter}}}
\newcommand{\goodDeltaSet}{\mathcal{G}_{\delta}}
\newcommand{\epsDeltaN}{\epsilon\left(n,\delta\right)}

\begin{theorem}\label{mainUpperBoundThm} Take $\Gamma = (\maxAggregateFlippingProbability,\dimensionExponent,(\marginExponent,\marginConstant),(\holderExponent,\holderConstant), (\tailsExponent,\tailsThreshold,\tailsConstant), ( \fastLimitsExponent, \fastLimitsThreshold, \fastLimitsConstant) )$ consisting of exponents $\marginExponent \in [0,\infty)$, $\holderExponent \in (0,1]$, $\dimensionExponent \in (0,\infty)$, $\tailsExponent \in (\holderExponent/(2\holderExponent+\dimensionExponent),\infty)$, $\fastLimitsExponent \in (0, \infty)$ and constants $\maxAggregateFlippingProbability \in (0,1)$, $\marginConstant$, $ \holderConstant, \tailsConstant, \fastLimitsConstant \geq 1$ and $\tailsThreshold$, $\fastLimitsThreshold \in (0,1)$. Then there exists a constant $C(\Gamma)$ depending solely upon $\Gamma$ such that for any $n \in \N$ and $\delta \in (0,1)$ the following risk bound holds with probability at least $1-\delta$ over the corrupted data sample $\noisySample$,
\begin{align*}\risk\left(\est[\decisionRule]_{n,\delta} \right) - \risk\left( \oracle\right) \leq C(\Gamma)  \cdot \left(\frac{\log(n/\delta)}{n}\right)^{\min\left\lbrace { \frac{\tailsExponent\holderExponent(\marginExponent+1)}{\tailsExponent(2\holderExponent+\dimensionExponent)+\marginExponent\holderExponent}},{\frac{\fastLimitsExponent \holderExponent(\marginExponent+1)}{\fastLimitsExponent(2\holderExponent+\dimensionExponent)+\holderExponent}} \right\rbrace } + \delta.
\end{align*}

\end{theorem}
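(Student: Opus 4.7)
The plan is to relate the excess risk to the pointwise regression error of $\noisy[{\est[\regressionFunction]}]$ and to the two noise-parameter estimation errors, and then to combine the margin, tail and quantitative-range assumptions via a dyadic peeling argument over density levels. First I would set up the high-probability event. The identity $\noisyRegressionFunction=(1-\probFlipZeroToOne-\probFlipOneToZero)\regressionFunction+\probFlipZeroToOne$ shows that $\noisyRegressionFunction$ inherits the H\"older property of $\regressionFunction$ with the same exponent and a constant no larger than $\holderConstant$, and that Assumption~\ref{quantitativeRangeAssumption} transfers linearly to the lower bounds on $\sup\{\noisyRegressionFunction(x):\densityFunction(x)>\epsilon\}$ and $\sup\{1-\noisyRegressionFunction(x):\densityFunction(x)>\epsilon\}$ required by Corollary~\ref{fastLimitsMaxThmCorollary}. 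Applying Theorem~\ref{knnRegressionBoundLepskiK} pointwise at confidence $\delta^2/3$ and then invoking Fubini's theorem together with Markov's inequality shows that, with probability at least $1-\delta/3$, the ``good set'' $G:=\{x:|\noisy[{\est[\regressionFunction]}](x)-\noisyRegressionFunction(x)|\le\Delta_\eta(x)\}$, with $\Delta_\eta(x)\lesssim(\log(n/\delta)/(\densityFunction(x)\,n))^{\holderExponent/(2\holderExponent+\dimensionExponent)}$, satisfies $\marginalDistribution(G^c)\le\delta/3$. A union bound with two applications of Corollary~\ref{fastLimitsMaxThmCorollary} at confidence $\delta/3$ yields an event of probability at least $1-\delta$ on which additionally $|\estProbFlipZeroToOne-\probFlipZeroToOne|\vee|\estProbFlipOneToZero-\probFlipOneToZero|\le\Delta_\pi\lesssim(\log(n/\delta)/n)^{\fastLimitsExponent\holderExponent/(\fastLimitsExponent(2\holderExponent+\dimensionExponent)+\holderExponent)}$.

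Second, I would carry out the structural reduction. Writing $\theta:=(1+\probFlipZeroToOne-\probFlipOneToZero)/2$ and $\hat\theta:=(1+\estProbFlipZeroToOne-\estProbFlipOneToZero)/2$, the Bayes rule is $\oracle(x)=\one\{\noisyRegressionFunction(x)\ge\theta\}$ while $\est[\decisionRule]_{n,\delta}(x)=\one\{\noisy[{\est[\regressionFunction]}](x)\ge\hat\theta\}$, so a misclassification forces $|\noisyRegressionFunction(x)-\theta|\le|\noisy[{\est[\regressionFunction]}](x)-\noisyRegressionFunction(x)|+|\hat\theta-\theta|$. Combining the identity $|\noisyRegressionFunction(x)-\theta|=(1-\probFlipZeroToOne-\probFlipOneToZero)|\regressionFunction(x)-1/2|\ge(1-\maxAggregateFlippingProbability)|\regressionFunction(x)-1/2|$ with the standard decomposition $\risk(\est[\decisionRule]_{n,\delta})-\risk(\oracle)=\int|2\regressionFunction-1|\one\{\est[\decisionRule]_{n,\delta}\ne\oracle\}\,d\marginalDistribution$ and with $|2\regressionFunction-1|=2|\regressionFunction-1/2|$ bounds the excess risk on $G$ by $2\int_G|\regressionFunction-1/2|\one\{|\regressionFunction-1/2|\le(\Delta_\eta(x)+\Delta_\pi)/(1-\maxAggregateFlippingProbability)\}d\marginalDistribution$, while the contribution from $G^c$ is at most $\marginalDistribution(G^c)\le\delta/3$. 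Splitting the indicator according to whether $\Delta_\eta(x)$ or $\Delta_\pi$ dominates reduces the problem to bounding $I_1:=\int\Delta_\eta(x)\one\{|\regressionFunction(x)-1/2|\le c\Delta_\eta(x)\}d\marginalDistribution$ and $I_2:=\Delta_\pi\,\marginalDistribution(\{|\regressionFunction-1/2|\le c\Delta_\pi\})$, for a constant $c=c(\maxAggregateFlippingProbability)$.

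Third, the margin assumption handles $I_2$ immediately: $I_2\le c^{\marginExponent}\marginConstant\Delta_\pi^{\marginExponent+1}\lesssim(\log(n/\delta)/n)^{\fastLimitsExponent\holderExponent(\marginExponent+1)/(\fastLimitsExponent(2\holderExponent+\dimensionExponent)+\holderExponent)}$. For $I_1$ I would use a dyadic peeling over density bands $A_j:=\{x:2^{-(j+1)}\le\densityFunction(x)<2^{-j}\}$. On $A_j$ we have $\Delta_\eta(x)\le e_j:=K(2^j\log(n/\delta)/n)^{\holderExponent/(2\holderExponent+\dimensionExponent)}$, and combining the margin bound $\marginalDistribution(\{|\regressionFunction-1/2|\le ce_j\})\le\marginConstant(ce_j)^{\marginExponent}$ with the tail bound $\marginalDistribution(A_j)\le\tailsConstant\,2^{-j\tailsExponent}$ shows that the $j$-th band contributes at most $e_j\min\{\marginConstant(ce_j)^{\marginExponent},\tailsConstant\,2^{-j\tailsExponent}\}$. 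Choosing the critical level $j^\star$ at which these two estimates balance, $e_{j^\star}^{\marginExponent}\asymp 2^{-j^\star\tailsExponent}$, yields $e_{j^\star}^{\marginExponent+1}\asymp(\log(n/\delta)/n)^{\tailsExponent\holderExponent(\marginExponent+1)/(\tailsExponent(2\holderExponent+\dimensionExponent)+\marginExponent\holderExponent)}$; the hypothesis $\tailsExponent>\holderExponent/(2\holderExponent+\dimensionExponent)$ makes the tail-dominated contributions $j>j^\star$ geometrically summable, while the margin-dominated contributions $j<j^\star$ are geometrically summable in the opposite direction, so both sums are absorbed into the $j=j^\star$ term. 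Collecting the bounds on $I_1$, $I_2$ and $\marginalDistribution(G^c)$ and absorbing multiplicative constants into $C(\Gamma)$ and the residual additive $\delta/3$ into $\delta$ completes the proof.

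The main obstacle is the peeling step: a single global threshold on $\densityFunction(x)$ only yields the weaker denominator $\tailsExponent(2\holderExponent+\dimensionExponent)+(\marginExponent+1)\holderExponent$, and obtaining the sharper denominator $\tailsExponent(2\holderExponent+\dimensionExponent)+\marginExponent\holderExponent$ claimed in the theorem requires exploiting the margin assumption \emph{locally} on each density band, combining it with the tail assumption, and summing carefully around the critical level $j^\star$. The remainder of the argument is largely bookkeeping, provided one correctly transfers the regression-side assumptions from $\regressionFunction$ to the observable $\noisyRegressionFunction$ via the linear relation~\eqref{writingCorruptedRegressionFunctionInTermsOfRegressionFunctionEq}.
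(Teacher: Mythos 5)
Your proposal is correct and follows essentially the same route as the paper: establish the high-probability event via Theorem~\ref{knnRegressionBoundLepskiK} (with Fubini and Markov), Theorem~\ref{funcMaxThm}/Corollary~\ref{fastLimitsMaxThmCorollary} for the noise probabilities, reduce the excess risk to a margin condition on where the plug-in rule can disagree with Bayes, and then peel dyadically over density levels, balancing margin and tail bounds at a critical density. The only cosmetic differences are that the paper compares the rescaled estimate $\est[\regressionFunction]$ to $\regressionFunction$ via Lemma~\ref{elementaryRatioLemma} rather than comparing $\noisy[{\est[\regressionFunction]}]$ to the shifted threshold, and that the paper applies the margin assumption once globally on the high-density set $\{\densityFunction\ge\theta\}$ while reserving the tail bound for the peeling bands below $\theta$ (so the margin is not, in the paper's version, applied band-by-band); both bookkeeping schemes give the same exponent.
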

A full proof of Theorem \ref{mainUpperBoundThm} is presented in Appendix \ref{proofOfHPUpperBoundAppendix}. By Theorem \ref{mainMinimaxThmLowerBound} the classifier $\hat{\decisionRule}_{n,\delta}$ is minimax optimal up to logarithmic factor. We emphasise that the classifier $\hat{\decisionRule}_{n,\delta}$ is fully \emph{adaptive} and does not require any prior knowledge of either the local density $\densityFunction(x)$, or the distributional parameters.

\section{Related work}\label{relatedWorkSec}

\paragraph{Classification with label noise} The problem of learning a classifier from data with corrupted labels has been widely studied (\cite{Frenay1}). Broadly speaking, there are two approaches to addressing this problem from a theoretical perspective. One approach is to assume that the label noise is either symmetric (but possibly instance dependent) or becomes symmetric as the regression function approaches $1/2$. In this setting the optimal decision boundary does not differ between test and train distributions and classical approaches such as  $k$-nearest neighbours are consistent with finite sample rates \cite[]{cannings2018,MenonMLJ2018}. In turn, our focus is on class-conditional label noise for which the optimal decision boundary will typically differ between test and train distributions and classical algorithms will no longer be consistent. \cite{natarajan2013learning} demonstrated that classification with class-conditional label noise is reducible to classification with a shifted threshold, provided that the noise probabilities are known. This method has been generalised to provide empirical risk minimisation based approaches for various objectives when one only has access to corrupted data \cite[]{natarajan2018,van2018theory}. \cite{scott2015rate} demonstrated that the label noise probabilities may be estimated from the corrupted sample at a rate of $O(1/\sqrt{n})$ provided that there exists a family of sets $\mathcal{S}$ of finite VC dimension with $S_0$, $S_1 \in \mathcal{S}$ such that $\min\{\marginalDistribution(S_0),\marginalDistribution(S_1)\}>0$, $\forall x \in S_0$, $\regressionFunction(x)=0$ and $\forall x_1 \in S_1$, $\regressionFunction(x_1)=1$. This gives rise to a finite sample rate of $O(1/\sqrt{n})$ for classification with unknown label noise over hypothesis classes of bounded VC dimension \citep{scott2015rate,blanchard2016classification}. \cite{ramaswamy2016mixture} has provided an alternative approach to estimating label noise probabilities at a rate of $O(1/\sqrt{n})$. However, the bound requires a separability condition in a Hilbert space, which does not apply in our setting.  \cite{gao2018} gave an adaptation of the $k$-nearest neighbour ($k$-NN) method and prove convergence to the Bayes risk. \cite{reeveKaban19a} obtained minimax optimal fast rates for the method of \cite{gao2018} under the measure smoothness assumptions of \cite{chaudhuri2014rates,doring2018} combined with the mutual irreducibility condition. In both \citep{scott2015rate,blanchard2016classification} and \citep{reeveKaban19a} the assumptions ensure that the regression function is close to its extrema on sets of large measure. This implies that the statistical difficulty of estimating the label noise probabilities is dominated by the difficulty of the classification problem. Consequently, in both cases, the finite sample rates for classification with unknown label noise match the optimal rates for the corresponding label noise free setting, up to logarithmic terms \citep{blanchard2016classification,reeveKaban19a}. In this work we have studied a non-compact setting which includes examples where the minimax optimal rates for learning with label noise are strictly greater than those for learning without label noise.

\paragraph{Non-parametric classification in unbounded domains} The problem of non-parametric classification on non-compact domains where the marginal density is not bounded from below has received some recent attention. One approach is the measure-theoretic smoothness assumption of \cite[]{chaudhuri2014rates,doring2018} whereby deviations in the regression function are assumed to scale with the measure of metric balls. This means that the regression function must become increasingly smooth (i.e. smaller Lipschitz constant) as the density approaches zero. In this work we have adopted the less restrictive approach of \cite{gadat2016} where the Lipschitz constant is not controlled by the density. Instead assumptions are made which bound the measure of the tail of the distribution (Assumption \ref{tailAssumption}). This more flexible setting includes natural examples \cite[Table 1]{gadat2016} and results in optimal convergence rates which are provably slower than those achieved with densities bounded from below. The primary difference between our setting and that of \cite{gadat2016} is that we allow for class-conditional label noise with unknown label noise probabilities. This requires alternative techniques and can result in different optimal rates (Theorems \ref{mainMinimaxThmLowerBound} and \ref{mainUpperBoundThm}). In addition, our method is adaptive to the unknown distributional parameters and local density, unlike the local $k$-NN method of \cite{gadat2016} which assumes prior knowledge of the local density at a test point. This adaptivity is especially significant in the label noise setting where one cannot tune hyper-parameters by minimising the classification error on a hold out set. In order to tune $k$ we use the Lepski method  \cite[]{lepski1997optimal}. Our use of the Lepski method is drawn from the work of \cite{kpotufe2013adaptivity} who applied this method to kernel regression. The principal difference is that whereas \cite{kpotufe2013adaptivity} establish a uniform bound which holds simultaneously for all test points, we only require a pointwise bound. The major advantage of this is that we are able to avoid the restrictive assumption of an upper bound on the $\epsilon$-covering numbers (which would rule out non-compact domains of interest). An alternative approach to non-compact domains has been pursued by \cite{cannings2017local}. Whilst we follow \cite{gadat2016} in bounding the measure of the regions of the feature space where the density falls below a given value (see Assumption \ref{tailAssumption}), \cite{cannings2017local} instead employ a moment assumption. Note that whereas \cite{cannings2017local} make use of an additional set of unlabelled data to locally tune the optimal value of $k$, our method is optimally adaptive without any additional data.

\paragraph{Supremum estimation} Central to our method is the observation of \cite{menon2015learning} that under the mutual irreducibility assumption the noise probabilities may be determined by estimating the extrema of the corrupted regression function. This leads to the problem of determining the supremum of a function on an unbounded metric space based on labelled data. This is closely related to the problem of mode estimation studied by \cite{dasgupta2014optimal} in an unsupervised setting and by \cite{Jiang18} in a supervised setting. The primary difference is that whereas we are only interested in estimating the value of the supremum, those papers focus on estimating the point in the feature space which attains the supremum. This is a more challenging problem which requires strong assumptions including a twice differentiable function. In our setting the feature space is not assumed to have a differentiable structure, so such assumptions cannot be applied. Note also that the sup norm bound of \cite{Jiang18} does not hold in our setting since it requires a uniform lower bound on the density. Our problem is also related to the simple regret minimisation problem in $\X$-armed bandits \cite[]{bubeck2011x,locatelli2018adaptivity} in which the learner actively selects points in the feature space in order to locate and determine the supremum. However, the techniques are quite different, owing to the active rather than passive nature of the problem. In particular, there is no marginal distribution over the feature vectors, since these are selected by the learner. In our setting, conversely, the behaviour of the marginal distribution plays an absolutely crucial role.

\section{Conclusion}

We have determined the minimax optimal learning rate (up to logarithmic factors) for classification in the presence of unknown class-conditional label noise on non-compact metric spaces. The rate displayed an interesting threshold behaviour depending upon the rate at which the regression function approaches its extrema in the tails of the distribution. In addition, we presented an adaptive classification algorithm that attains the minimax rates without prior knowledge of the distributional parameters or the local density.

\acks{This work is funded by EPSRC under Fellowship grant EP/P004245/1. The authors would like to thank Nikolaos Nikolaou and Timothy I. Cannings for useful discussions. We would also like to thank the anonymous reviewers for their careful feedback which led to several improvements in the presentation.}

\newpage
\appendix

\section{Proof of the lower bound}\label{ProofOfLowerBoundAppendix}

In this section we shall present the proof of the main lower bound - Theorem \ref{mainMinimaxThmLowerBound}. The proof of Theorem \ref{mainMinimaxThmLowerBound} consists of two components. The first component corresponds to the difficulty of estimating the noise probabilities and the resultant effect upon classification risk. This component is presented in Proposition \ref{lowerBoundWithUnknownNoise} in Section \ref{LBForUnknownLabelNoiseAppendix}. The second component corresponds to the difficulty of the core classification problem which would have been present even if the learner had access to clean labels. This component is presented in Proposition \ref{lowerBoundWithKnownNoise} in Section \ref{LBForUncorruptedDataAppendix}. Theorem \ref{mainMinimaxThmLowerBound} follows immediately from Propositions \ref{lowerBoundWithUnknownNoise} and \ref{lowerBoundWithKnownNoise}.

Before presenting Propositions \ref{lowerBoundWithUnknownNoise} and \ref{lowerBoundWithKnownNoise} we shall remind the reader of some notation that will be useful in the proof of the lower bound. Recall that we have a distribution $\Prob$ over triples $(X,Y,\noisy[Y])$. We let $\cleanDistribution$ denote the marginal distribution over $(X,Y)$ and $\noisyDistribution$ denote the marginal distribution over $(X,\noisy[Y])$. In addition, we let $\ProbOverCleanSamples$ denote the product distribution over clean samples $\sample=\{(X_i,Y_i)\}_{i \in [n]}$ with $(X_i,Y_i)$ sampled from $\cleanDistribution$ independently and let $\ProbOverNoisySamples$ denote the product distribution over corrupted samples $\noisySample=\{(X_i,\noisy[Y]_i)\}_{i \in [n]}$ with $(X_i,\noisy[Y]_i)$ sampled from $\noisyDistribution$. Similarly, we let $\ExpectationOverCleanSamples$ denote the expectation over clean samples $\sample \sim \ProbOverCleanSamples$ and let $\ExpectationOverNoisySamples$ denote the expectation over corrupted samples $\noisySample \sim \ProbOverNoisySamples$.

\subsection{A lower bound for unknown label noise}\label{LBForUnknownLabelNoiseAppendix}

The goal of this section is to prove Proposition \ref{lowerBoundWithUnknownNoise} which corresponds to the difficulty of estimating the noise probabilities and the resultant effect upon classification risk.

\begin{prop}\label{lowerBoundWithUnknownNoise} 
Take $\Gamma = (\maxAggregateFlippingProbability,\dimensionExponent,(\marginExponent,\marginConstant),(\holderExponent,\holderConstant), (\tailsExponent,\tailsThreshold,\tailsConstant), ( \fastLimitsExponent, \fastLimitsThreshold, \fastLimitsConstant) )$ consisting of exponents $\marginExponent \in [0,\infty)$, $\holderExponent \in (0,1]$, $\dimensionExponent \in [\marginExponent\holderExponent,\infty)$, $\tailsExponent \in (0,1]$, $\fastLimitsExponent \in (0, \infty)$ and constants $\maxAggregateFlippingProbability \in (0,1)$, $\marginConstant \geq 4^{\marginExponent}$, $\marginConstant$, $\holderConstant$, $\tailsConstant$, $\fastLimitsConstant \geq 1$ and $\tailsThreshold \in (0,1/24)$, $\fastLimitsThreshold \in (0,1/3)$. There exists a constant $c_0(\Gamma)$, depending solely upon $\Gamma$, such that for any $n \in \N$ and any classifier $\classifier$ which is measurable with respect to the corrupted sample $\noisySample$, there exists a distribution $\Prob \in \distributionClass(\Gamma)$ such that 
\begin{align*}
\ExpectationOverNoisySamples\left[ \risk\left(\classifier\right) \right] -  \risk\left(\oracle\right) \geq c_1(\Gamma) \cdot n^{-\frac{\fastLimitsExponent \holderExponent(\marginExponent+1)}{\fastLimitsExponent(2\holderExponent+\dimensionExponent)+\holderExponent}}.
\end{align*}
\end{prop}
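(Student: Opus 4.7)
The plan is to apply Le Cam's two-point method. I will exhibit two distributions $\Prob^{(0)}, \Prob^{(1)} \in \distributionClass(\Gamma)$ whose corrupted sample distributions are statistically indistinguishable at the prescribed scale, yet whose Bayes classifiers disagree on a set of $\marginalDistribution$-mass of order $\Delta^{\marginExponent}$ on which $|\regressionFunction - 1/2|$ is of order $\Delta$; this will force excess risk $\gtrsim \Delta^{\marginExponent+1}$ on at least one of the two distributions. Choosing $\Delta$ at the critical scale dictated by the KL budget then produces the target exponent.

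The construction localises the difference between the two distributions to a small ball around a low-density point. Fix a common marginal $\marginalDistribution$ on a non-compact subset of $\R^{\dimensionExponent}$ together with a base corrupted regression function $\noisy[\regressionFunction]^\star$ satisfying Assumptions B-F (with the margin condition tight: $\marginalDistribution(\{|\regressionFunction^\star - 1/2| < \xi\}) \asymp \xi^{\marginExponent}$), and such that $\inf \noisy[\regressionFunction]^\star = \probFlipZeroToOne$ is attained at a single point $x^\star \in \suppMarginalDistribution$ of local density $\densityFunction(x^\star) = \epsilon$. Let $\phi$ be a smooth, compactly supported bump on $\R^{\dimensionExponent}$ with $\phi(0) = 1$, and set
\[
\noisy[\regressionFunction]^{(0)}(x) = \noisy[\regressionFunction]^\star(x), \qquad \noisy[\regressionFunction]^{(1)}(x) = \noisy[\regressionFunction]^\star(x) - \Delta\cdot\phi\!\left(\tfrac{x-x^\star}{h}\right),
\]
with noise probabilities $(\probFlipZeroToOne^{(0)}, \probFlipOneToZero^{(0)}) = (\probFlipZeroToOne, \probFlipOneToZero)$ and $(\probFlipZeroToOne^{(1)}, \probFlipOneToZero^{(1)}) = (\probFlipZeroToOne - \Delta, \probFlipOneToZero)$. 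The corresponding true regression functions $\regressionFunction^{(\sigma)} = (\noisy[\regressionFunction]^{(\sigma)} - \probFlipZeroToOne^{(\sigma)})/(1 - \probFlipZeroToOne^{(\sigma)} - \probFlipOneToZero^{(\sigma)})$ take values in $[0,1]$, with $\regressionFunction^{(1)}(x^\star) = 0$, so that the range assumption for $\regressionFunction^{(1)}$ is also saturated at $x^\star$.

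Three constraints determine the optimal scaling. H\"older continuity of $\regressionFunction^{(1)}$ across the bump requires $\Delta \lesssim h^{\holderExponent}$; the quantitative range assumption for $\regressionFunction^{(1)}$, which is most stringent at density level $\epsilon$ (outside the bump $\regressionFunction^{(1)}$ sits at least $\Delta/(1 - \probFlipZeroToOne^{(1)} - \probFlipOneToZero^{(1)})$ above $0$), requires $\Delta \lesssim \epsilon^{\fastLimitsExponent}$; and since $\noisy[\regressionFunction]^{(0)}$ and $\noisy[\regressionFunction]^{(1)}$ agree off the ball of radius $h$ around $x^\star$, the per-sample KL between $\noisyDistribution^{(0)}$ and $\noisyDistribution^{(1)}$ is of order $\Delta^2\cdot\epsilon h^{\dimensionExponent}$, so Le Cam demands $n\Delta^2 \epsilon h^{\dimensionExponent} \lesssim 1$. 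Saturating the first two constraints with $h = \epsilon^{\fastLimitsExponent/\holderExponent}$ and $\Delta = \epsilon^{\fastLimitsExponent}$ and substituting into the KL budget yields $\epsilon \sim n^{-\holderExponent/[\fastLimitsExponent(2\holderExponent+\dimensionExponent)+\holderExponent]}$ and hence $\Delta \sim n^{-\fastLimitsExponent\holderExponent/[\fastLimitsExponent(2\holderExponent+\dimensionExponent)+\holderExponent]}$. Outside the ball, the Bayes thresholds for $\regressionFunction^{(0)}$ and $\regressionFunction^{(1)}$, expressed in terms of the shared $\noisy[\regressionFunction]^\star$, differ by $\Delta/2$; by tightness of the margin condition, $\oracle^{(0)}$ and $\oracle^{(1)}$ therefore disagree on a set of $\marginalDistribution$-mass of order $\Delta^{\marginExponent}$ on which $|\regressionFunction^{(0)} - 1/2| \asymp \Delta$, contributing $\Delta^{\marginExponent+1}$ to the excess risk of either Bayes classifier applied to the other distribution.

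The main obstacle is the joint geometric construction of $\marginalDistribution$ and $\noisy[\regressionFunction]^\star$ realising the prescribed tail, minimal-mass, range, H\"older and two-sided margin exponents with constants depending only on $\Gamma$, and checking that the localised bump and the shifted $\probFlipZeroToOne$ preserve all of them after the affine rescaling $\noisy[\regressionFunction]^{(\sigma)} \mapsto \regressionFunction^{(\sigma)}$. Given such a base pair, passing from the KL bound to an excess-risk Le Cam inequality is standard, via Pinsker's inequality combined with the identity $\risk^{(\sigma)}(\classifier) - \risk^{(\sigma)}(\oracle^{(\sigma)}) = \int |\regressionFunction^{(\sigma)} - 1/2|\cdot\one\{\classifier \neq \oracle^{(\sigma)}\}\, d\marginalDistribution$.
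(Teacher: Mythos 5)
Your scheme is the right one, and the scaling analysis matches the paper's exactly: perturb the corrupted regression function by $\Delta$ near a point of local density $\epsilon$ (so that the corrupted distributions differ only on a set of mass $\approx \epsilon h^{\dimensionExponent}$, giving KL $\approx n\Delta^2 \epsilon h^{\dimensionExponent}$), while simultaneously shifting the noise probability so that the two \emph{clean} regression functions differ by $\approx\Delta$ everywhere, forcing the Bayes rules to disagree on a margin region of mass $\asymp \Delta^{\marginExponent}$. Saturating $\Delta \asymp h^{\holderExponent}$, $\Delta \asymp \epsilon^{\fastLimitsExponent}$, $n\Delta^2\epsilon h^{\dimensionExponent} \asymp 1$ gives precisely the paper's parameter choices $\volA = \Delta^{(\holderExponent+\fastLimitsExponent\dimensionExponent)/(\fastLimitsExponent\holderExponent)}$, $\densityA = \Delta^{1/\fastLimitsExponent}$, $r = \Delta^{1/\holderExponent}$, $\volC = \Delta^{\marginExponent}$.

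The substantive difference is in how the construction is realised. You propose a continuous bump on a base $(\marginalDistribution, \noisy[\regressionFunction]^\star)$ in $\R^{\dimensionExponent}$ and flag as ``the main obstacle'' the joint construction of a base pair realising the tail, minimal-mass, range, H\"older and \emph{tight two-sided} margin exponents, and then verifying that the bump and the affine rescaling $\noisy[\regressionFunction]^{(\sigma)}\mapsto\regressionFunction^{(\sigma)}$ preserve all six assumptions. This obstacle is genuine: in a continuous construction the quantitative range assumption must be checked against \emph{all} density levels, not just the one at the centre of the bump, which couples the profile of $\phi$ to the local growth of $\densityFunction$; and Assumption \ref{marginAssumption} only gives an \emph{upper} bound on the margin mass, so the matching lower bound you invoke has to be built in by hand. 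The paper sidesteps all of this by working on a four-point metric space $\{a,b,c,d\}$: $a$ plays the role of your $x^\star$ (low-mass, low-density, where $\noisyRegressionFunction$ differs by $\Delta$), $b$ sits within distance $r=\Delta^{1/\holderExponent}$ of $a$ to activate the H\"older constraint, $c$ is a standalone atom of mass $\Delta^{\marginExponent}$ carrying the margin (with $\regressionFunction^0(c)<1/2<\regressionFunction^1(c)$ and $|\regressionFunction^{\iota}(c)-1/2|\asymp\Delta$), and $d$ soaks up the remaining mass with $\regressionFunction=0$. Verifying Assumptions \ref{classConditionalNoiseAssumption}--\ref{quantitativeRangeAssumption} then reduces to a finite case check (Lemma \ref{propertiesCheckLemmaForUnknownNoiseLB}), and the KL computation is a one-line reverse-Pinsker bound at the single atom $a$ (Lemma \ref{KLBoundOnPairOfProductDistributionsLemma}). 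The paper also uses Birg\'e's version of Fano's lemma rather than Le Cam, but for two hypotheses this is an inessential choice. In short: you have the right mechanism and the right exponent calculus, but the Euclidean realisation you sketch demands substantially more bookkeeping than the discrete one the paper actually uses, and you do not supply it.
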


\newcommand{\volA}{u}
\newcommand{\densityA}{w}
\newcommand{\volC}{v}

To prove Proposition \ref{lowerBoundWithUnknownNoise} we will show that there exists a pair of distributions $\Prob^0$ and $\Prob^1$ such that whilst the corrupted regression functions ($\noisy[\regressionFunction]^0$ and $\noisy[\regressionFunction]^1$) closely resemble one another, the true regression functions ($\regressionFunction^0$ and $\regressionFunction^1$) are substantially different. Thus, whilst it is difficult to distinguish $\Prob^0$ and $\Prob^1$ based upon the corrupted sample $\noisySample$, failing to do so results in substantial misclassification error. Figure \ref{figForLowerBound} in Section \ref{mainLBSec} illustrates the construction. To formalise this idea we require the following variant of Fano's lemma due to \cite{birge2001new}.

\begin{lem}[Birg\'{e}]\label{birgesVersionOfFanosLemma} Given a finite family $\mathcal{S}$ consisting of probability measures on a measureable space $\left(\mathcal{Z},\Sigma\right)$ and a random variable $Z$ with an unknown distribution in the family, then we have
\begin{align*}
\inf_{\hat{T}}\left\lbrace \sup_{{\Prob}_Z\in \mathcal{S}}\left\lbrace \Prob_Z\left[\hat{T}(Z) \neq \Prob_Z\right]\right\rbrace\right\rbrace \geq \min\left\lbrace 0.36,1- \inf_{\Prob_Z \in \mathcal{S}}\left\lbrace \sum_{\tilde{\Prob}_Z \in \mathcal{S}} \frac{\kullbackLeiblerDivergence\left(\Prob_Z,\tilde{\Prob}_Z\right)}{\left|\mathcal{S}\right|\log\left|\mathcal{S}\right|}  \right\rbrace\right\rbrace,
\end{align*}
where the infimum is taken over all measureable (possibly randomised) estimators $\hat{T}:\mathcal{Z}\rightarrow \mathcal{S}$.
\end{lem}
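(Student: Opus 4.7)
The plan is to apply Birg\'e's lemma (Lemma \ref{birgesVersionOfFanosLemma}) to a carefully chosen finite subfamily $\mathcal{S}\subset\distributionClass(\Gamma)$, combined with the standard reduction from classification minimax bounds to hypothesis testing. Given any $\classifier$ measurable in $\noisySample$, I would define the identifier $\hat{T}:=\argmin_{\Prob\in\mathcal{S}}\risk_\Prob(\classifier)$ (where $\risk_\Prob$ denotes the risk computed under the clean marginal of $\Prob$); if the Bayes decision rules of the members of $\mathcal{S}$ have pairwise risk gaps of at least $2\Delta$, then any classifier with excess risk strictly less than $\Delta$ under $\Prob$ is forced to return $\hat{T}=\Prob$. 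Whenever the pairwise KL divergences between the induced laws of $\noisySample$ are all $O(1/n)$, Birg\'e's lemma produces a constant lower bound on the identification error, and this transfers into a worst-case expected excess risk of order $\Delta$, which is our candidate rate.

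\textbf{Construction.} Following the intuition illustrated in Figure \ref{figForLowerBound}, I would take $\mathcal{S}=\{\Prob^\sigma:\sigma\in\{-1,+1\}^m\}$ to be a hypercube-type family, all sharing a common marginal $\marginalDistribution$ on a non-compact domain with density $\densityFunction$ engineered to saturate Assumptions \ref{minimalMassAssumption} and \ref{tailAssumption}. The distributions $\Prob^\sigma$ differ via $m$ disjoint H\"older bumps of spatial radius $\eta$ and vertical scale $\xi=\holderConstant\eta^\holderExponent$ placed on a common density shell $\{\densityFunction\sim\epsilon\}$: the sign $\sigma_j$ of the $j$-th bump determines on which side of $1/2$ the clean regression function $\regressionFunction^\sigma$ lies inside that bump, while small simultaneous adjustments to $(\probFlipZeroToOne^\sigma,\probFlipOneToZero^\sigma)$ are made so that the induced corrupted regression function $\noisy[\regressionFunction]^\sigma$ is nearly $\sigma$-independent on the bulk. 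The critical coupling $\epsilon\sim\xi^{1/\fastLimitsExponent}$ is the highest density level at which, by Assumption \ref{quantitativeRangeAssumption}, the regression function is permitted to take values of order $\xi$ in the tail. By Assumption \ref{minimalMassAssumption} each bump has $\marginalDistribution$-measure $\sim\epsilon\eta^\dimensionExponent$, and $m$ is chosen to saturate the margin assumption: $m\cdot\epsilon\eta^\dimensionExponent\sim\marginConstant\xi^\marginExponent$.

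\textbf{Tuning.} Each pair of neighbouring $\Prob^\sigma$ contributes a per-sample KL divergence of order $\xi^2\cdot\epsilon\eta^\dimensionExponent$ via the Bernoulli bound $\kullbackLeiblerDivergence(r_0\|r_1)\leq(r_0-r_1)^2/[r_1(1-r_1)]$. Requiring $n$ times this to be $O(1)$ and substituting $\xi=\holderConstant\eta^\holderExponent$, $\epsilon\sim\xi^{1/\fastLimitsExponent}$ gives $\eta\sim n^{-1/[2\holderExponent+\dimensionExponent+\holderExponent/\fastLimitsExponent]}$; the Assouad-style aggregation over the $m$ bumps then produces an excess-risk gap
\[
\Delta\;\sim\;m\cdot\xi\cdot\epsilon\eta^\dimensionExponent\;\sim\;\xi^{\marginExponent+1}\;\sim\;n^{-\fastLimitsExponent\holderExponent(\marginExponent+1)/[\fastLimitsExponent(2\holderExponent+\dimensionExponent)+\holderExponent]},
\]
matching the claim.

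\textbf{Main obstacle.} The central technical difficulty is the simultaneous verification of all six assumptions for every $\Prob^\sigma$ with constants depending only on $\Gamma$, together with the design of the noise-compensating perturbations so that $\noisy[\regressionFunction]^\sigma$ is nearly $\sigma$-invariant---the latter is what ultimately controls the KL divergence and is specific to the unknown-noise setting. Relative to the Gadat-type construction used for the companion bound (\ref{knownNoiseLBClaim}), where the bump shell is pinned by the tail exponent $\tailsExponent$, here it is pinned by the quantitative range exponent $\fastLimitsExponent$, and this replacement of $\marginExponent\holderExponent/\tailsExponent$ by $\holderExponent/\fastLimitsExponent$ in the effective dimension is precisely what delivers the novel unknown-noise rate.
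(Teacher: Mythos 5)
You have misread the target. The statement to be proved is Lemma~\ref{birgesVersionOfFanosLemma} itself, which is Birg\'e's refinement of Fano's lemma. In the paper this is a cited auxiliary result, attributed to \cite{birge2001new}, and no proof of it is given; it is simply invoked as a black-box tool. What you have written is not an argument for the lemma but rather an \emph{application} of the lemma, namely a sketch of how one would use it (together with a hypercube-type family of distributions and an Assouad-style aggregation) to establish the minimax lower bound of Proposition~\ref{lowerBoundWithUnknownNoise} / claim~(\ref{unknownNoiseLBClaim}). That is a different statement in the paper.

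Two further remarks. First, if the intended target really were the lower bound, your construction also diverges from the paper's: the paper proves Proposition~\ref{lowerBoundWithUnknownNoise} with a single pair of distributions $\Prob^0,\Prob^1$ supported on a four-point metric space $\{a,b,c,d\}$ and applies Lemma~\ref{birgesVersionOfFanosLemma} with $|\mathcal{S}|=2$, whereas you describe a $2^m$-element hypercube family; the hypercube/Assouad machinery is what the paper uses for the \emph{other} half of the lower bound, Proposition~\ref{lowerBoundWithKnownNoise}. Second, a genuine proof of the stated lemma would have to start from an information-theoretic inequality (for instance a data-processing or convexity argument bounding the testing error in terms of the averaged Kullback--Leibler divergences, as in Birg\'e's original derivation, or a sharpened Fano bound), and none of that appears in your proposal. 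As it stands, there is a categorical mismatch: you have sketched a use of the lemma where a proof (or an acknowledgement that it is an external citation) was required.
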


\newcommand{\twoMeasuresUnknownNoiseDistIndex}{\iota}

We apply Lemma \ref{birgesVersionOfFanosLemma} as follows: Given an integer $n \in \N$ and a quintuple $\left(\Delta,r, \volA, \volC,\densityA \right) \in (0,1/6)^5$ (to be selected in terms of $n$ later) we shall construct a measureable space with a pair of distributions. First we construct a metric space by letting $\X = \{a,b,c,d\}$ and choosing $\dist$ such that 
\begin{align*}
\dist(x_0,x_1)& =  r &\text{ if }&x_0,x_1 \in \{a,b\}\text{ and }x_0\neq x_1\\
\dist(x_0,x_1)& \geq 1 &\text{ if }&x_0 \notin \{a,b\}\text{ or }x_1 \notin\{a,b\} \text{ and }x_0\neq x_1\\
\dist(x_0,x_1)&=0 &\text{ if }&x_0 = x_1.
\end{align*}
Note that there are metric spaces $(\X,\dist)$ of this form embedded isometrically in any Euclidean space $(\R^D,\|\cdot\|_2)$. We shall define a pair of probability distributions $\Prob^0,\Prob^1$ over random triples $(X,Y,\tilde{Y}) \in \X\times \Y^2$ as follows. First we define a Borel probability measure $\marginalDistribution$ on $\X$ by $\marginalDistribution(\{a\}) = \volA$, $\marginalDistribution(\{b\}) = 1/3$, $\marginalDistribution(\{c\}) = \volC$ and $\marginalDistribution(\{d\}) = 2/3-\volA-\volC$. Second, we define a pair of regression functions $\regressionFunction^0,\regressionFunction^1:\X\rightarrow [0,1]$ on $\X$ as follows by 
\begin{align*}
\regressionFunction^0(a)&=1,&
\hspace{1cm}\regressionFunction^0(b)&=1-\Delta,& \hspace{1cm}\regressionFunction^0(c)&=\frac{1-\Delta}{2-\Delta}&
\hspace{1cm}\regressionFunction^0(d)&=0\\
\regressionFunction^1(a)&=1,&
\regressionFunction^1(b)&=1,&
\regressionFunction^1(c)&=\frac{1}{2-\Delta}&
\regressionFunction^1(d)&=0.
\end{align*}
Third, we define probabilities $\probFlipFrom[j]^{\twoMeasuresUnknownNoiseDistIndex}\in (0,1)$ for $\{\twoMeasuresUnknownNoiseDistIndex,j\} \in \{0,1\}$ by taking $\probFlipZeroToOne^0=\probFlipOneToZero^0=0$, $\probFlipOneToZero^0= \maxAggregateFlippingProbability/4$ and $\probFlipOneToZero^1=\Delta+(\maxAggregateFlippingProbability/4)\cdot (1-\Delta)$. We then put these pieces together by taking, for each ${\twoMeasuresUnknownNoiseDistIndex}\in \{0,1\}$, $\Prob^{\twoMeasuresUnknownNoiseDistIndex}$ to be the unique distribution on $(X,Y,\tilde{Y}) \in \X\times \Y^2$ with (a) marginal distribution $\marginalDistribution$, (b) regression function $\regressionFunction^{\twoMeasuresUnknownNoiseDistIndex}(x)=\Prob^{\twoMeasuresUnknownNoiseDistIndex}[Y=1|X=x]$ and (c) label noise probabilities $\probFlipFrom[j]^{\twoMeasuresUnknownNoiseDistIndex}$. In addition, we define $\densityFunction:\X \rightarrow (0,1)$ by $\densityFunction(a)=\densityA$, $\densityFunction(c) = \volC$ and $\densityFunction(b) =\densityFunction(d)=1/3$.

\begin{restatable}{lem}{propertiesCheckLemmaForUnknownNoiseLB}\label{propertiesCheckLemmaForUnknownNoiseLB} For $\twoMeasuresUnknownNoiseDistIndex \in \{0,1\}$ the measures $\Prob^{\twoMeasuresUnknownNoiseDistIndex}$ satisfy the following properties:
\begin{enumerate}[label=\alph*)]
    \item $\Prob^{\twoMeasuresUnknownNoiseDistIndex}$ satisfies Assumption \ref{classConditionalNoiseAssumption} with parameter $\maxAggregateFlippingProbability$ provided $\Delta \leq \maxAggregateFlippingProbability/2$;
    \item $\Prob^{\twoMeasuresUnknownNoiseDistIndex}$ satisfies Assumption \ref{marginAssumption} with parameters $(\marginExponent, \marginConstant)$ whenever $\marginConstant\geq 4^{\marginExponent}$ \& $\volC \leq \Delta^{\marginExponent}$;
    \item $\regressionFunction^{\twoMeasuresUnknownNoiseDistIndex}$ satisfies Assumption \ref{holderAssumption} with parameters $(\holderExponent, \holderConstant)$ whenever $\Delta \leq \holderConstant \cdot r^{\holderExponent}$;
    \item $\marginalDistribution$ satisfies Assumption \ref{minimalMassAssumption} with parameters $(\dimensionExponent, \densityFunction)$ whenever $\volA \geq \densityA\cdot r^{\dimensionExponent}$;
    \item $\marginalDistribution$ satisfies Assumption \ref{tailAssumption} with parameters $(\tailsExponent, \tailsConstant, \tailsThreshold, \densityFunction)$ whenever $\tailsExponent \leq 1$, $\tailsThreshold \leq \frac{1}{3}$ \& $\volA \leq \densityA$;
    \item $\Prob^{\twoMeasuresUnknownNoiseDistIndex}$ satisfies Assumption \ref{quantitativeRangeAssumption} with parameters $(\fastLimitsExponent, \fastLimitsConstant,\fastLimitsThreshold,\densityFunction)$ whenever $\fastLimitsThreshold \leq 1/3$ \& $\Delta \leq \fastLimitsConstant \cdot \densityA^{\fastLimitsExponent}$. 
\end{enumerate}
\end{restatable}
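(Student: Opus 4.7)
The plan is to verify each of the six properties (a)--(f) by direct case analysis on the four points $\X=\{a,b,c,d\}$; no heavy machinery is involved, so the proof will essentially be careful bookkeeping. Throughout, one should keep the design intent in mind: the corrupted regression functions $\noisy[\regressionFunction]^0$ and $\noisy[\regressionFunction]^1$ are engineered to nearly coincide (the feature that drives Proposition~\ref{lowerBoundWithUnknownNoise}), which is why the construction forces $\Delta$ and $v$ to be small while leaving flexibility in $u$, $w$, $r$.

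For (a), the sum $\probFlipZeroToOne^\twoMeasuresUnknownNoiseDistIndex+\probFlipOneToZero^\twoMeasuresUnknownNoiseDistIndex$ is at most $\Delta+\maxAggregateFlippingProbability/4$, so $\Delta\leq\maxAggregateFlippingProbability/2$ suffices. For (b), the only point whose regression value lies strictly between $0$ and $1$ and near $1/2$ is $c$, with $|\regressionFunction^\twoMeasuresUnknownNoiseDistIndex(c)-1/2|=\Delta/(2(2-\Delta))\geq \Delta/4$; any $\xi$ above that threshold must accommodate the mass $v$, and the conditions $v\leq\Delta^\marginExponent$ together with $\marginConstant\geq 4^\marginExponent$ then yield the required bound. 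For (c), the only pair of distinct points within metric distance one is $\{a,b\}$, at distance $r$, so the H\"older condition reduces to $\Delta\leq\holderConstant r^\holderExponent$ for $\twoMeasuresUnknownNoiseDistIndex=0$ and is trivial for $\twoMeasuresUnknownNoiseDistIndex=1$, where $\regressionFunction^1$ is constant on $\{a,b\}$.

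For (d), writing $\rho$ for the radius to avoid clashing with the construction parameter $r$, the ball $B_\rho(x)$ reduces to $\{x\}$ whenever $x\in\{b,c,d\}$ and $\rho<1$, and the minimal-mass bound then follows from $\marginalDistribution(\{x\})\geq\densityFunction(x)$ and $\rho^\dimensionExponent\leq 1$. The binding case is $x=a$ with $\rho<r$, where $\marginalDistribution(B_\rho(a))=u$ and we need $u\geq w\rho^\dimensionExponent$; taking $\rho\to r$ gives the stated condition $u\geq w r^\dimensionExponent$, and for $\rho\in[r,1)$ the ball picks up $b$, contributing mass $1/3$ which easily dominates. For (e), $\{\densityFunction<\epsilon\}\subseteq\{a,c\}$ for $\epsilon<1/3$; separating the regimes $\epsilon\leq w$ (the set is at most $\{c\}$, of mass $v<\epsilon\leq\epsilon^\tailsExponent$) and $\epsilon>w$ (the set is $\{a,c\}$, of mass $u+v$, with $u\leq w<\epsilon$) and using $\tailsExponent\leq 1$ with an appropriate $\tailsConstant$ gives the required bound. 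For (f), when $\epsilon>w$ the set $\{x\in\suppMarginalDistribution:\densityFunction(x)>\epsilon\}$ reduces to $\{b,d\}$, so $\inf\regressionFunction^\twoMeasuresUnknownNoiseDistIndex=0$ at $d$ and $\inf(1-\regressionFunction^\twoMeasuresUnknownNoiseDistIndex)\in\{0,\Delta\}$ at $b$; since $w<\epsilon$ the hypothesis $\Delta\leq\fastLimitsConstant w^\fastLimitsExponent$ yields $\Delta\leq\fastLimitsConstant\epsilon^\fastLimitsExponent$, while for $\epsilon\leq w$ the point $a$ lies in the set (with $\regressionFunction^\twoMeasuresUnknownNoiseDistIndex(a)=1$) and $d$ still lies in the set, so both infima vanish.

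There is no substantive obstacle; the only point requiring real care is keeping the three mass/density parameters $u$, $v$, $w$ and the radius $r$ straight, since the minimal-mass, tail, and quantitative-range assumptions each involve a different comparison among them, and the bookkeeping must remain consistent with the roles these quantities play when the parameters are later tuned in $n$ to optimise the rate in Proposition~\ref{lowerBoundWithUnknownNoise}.
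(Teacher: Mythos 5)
Your proposal is correct and follows essentially the same direct case analysis over the four points $\{a,b,c,d\}$ that the paper uses. A small aside: for $x=b$ and $\rho\in(r,1)$ the ball $B_\rho(b)$ actually contains $a$ as well, and in case (f) the boundary value $\epsilon=w$ belongs to the regime handled via $b,d$ rather than $a$, but in both instances the argument only needs one-sided containments and the conclusions are unaffected.
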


\begin{proof} We check each property in turn.

\noindent Property \ref{classConditionalNoiseAssumption} follows immediately from the construction of $\Prob^{\twoMeasuresUnknownNoiseDistIndex}$  and the definitions of $\probFlipFrom[j]^{\twoMeasuresUnknownNoiseDistIndex}$.

\noindent Property \ref{marginAssumption} follows from the fact that since $\Delta<1/6$, we have $\left|\regressionFunction^{\twoMeasuresUnknownNoiseDistIndex}(x)-1/2\right| \geq 1/3$ for $x \neq c$ and 
$\left|\regressionFunction^{\twoMeasuresUnknownNoiseDistIndex}(c)-1/2\right|\geq \Delta/4$. Property \ref{holderAssumption} follows from the fact that the only two distinct points $x_0, x_1$ with $\dist(x_0,x_1)<1$ are $a$ \& $b$ with $\dist(a,b)=r$ and $\left|\regressionFunction^{\twoMeasuresUnknownNoiseDistIndex}(a)-\regressionFunction^{\twoMeasuresUnknownNoiseDistIndex}(b)\right|\leq \Delta$.

\noindent Property \ref{minimalMassAssumption} follows from the fact that  $\marginalDistribution$ is defined by $\marginalDistribution(\{a\}) = \volA$, $\marginalDistribution(\{b\}) = 1/3$, $\marginalDistribution(\{c\}) = \volC$, $\marginalDistribution(\{d\}) = 2/3-\volA-\volC$ and $\densityFunction:\X \rightarrow (0,1)$ by $\densityFunction(a)=\densityA$, $\densityFunction(c) = \volC$ and $\densityFunction(b) =\densityFunction(d)=1/3$. In particular, for $x \neq a$ we have for $\tilde{r} \in (0,1)$ \[\marginalDistribution(B_{\tilde{r}}(x))\geq \marginalDistribution(\{x\}) \geq \densityFunction(x) \geq  \densityFunction(x) \cdot {\tilde{r}}^d.\]
On the other hand, for $x = a$ there are two cases. If ${\tilde{r}} \in (r,1)$ then \[\marginalDistribution(B_{\tilde{r}}(a))\geq \marginalDistribution(\{b\}) \geq 1/3 \geq \densityA \cdot {\tilde{r}}^{\dimensionExponent}=\densityFunction(a)\cdot {\tilde{r}}^{\dimensionExponent} \] since $\densityA <1/6$. If ${\tilde{r}}\leq r$ then $\marginalDistribution(B_{\tilde{r}}(a)) \geq \marginalDistribution(\{a\}) = \volA \geq \densityA \cdot r^d \geq \densityFunction(a)\cdot {\tilde{r}}^{\dimensionExponent}$.

\noindent Property \ref{tailAssumption} requires three cases. If $\epsilon \in [\max\{\densityA, \volC\},1/3)$ then we have
\[\marginalDistribution\left(\{x \in \X: \densityFunction(x)<\epsilon\}\right) =\marginalDistribution\left(\{a,c\}\right) = \volA+\volC \leq 2\max\{\densityA,\volC\} \leq \tailsConstant \cdot \epsilon^{\tailsExponent}.\]
If $\epsilon \in [\min\{\densityA, \volC\},\max\{\densityA, \volC\})$ then we take $x_0 \in \{a,c\}$ with $\densityFunction(x_0) = \min_{x \in \X}\{\densityFunction(x)\}$. Since $\volA \leq \densityA$ we have 
\[\marginalDistribution\left(\{x \in \X: \densityFunction(x)<\epsilon\}\right) =\marginalDistribution\left(\{x_0\}\right) \leq \densityFunction(x_0) = \min\{\densityA, \volC\}\leq \epsilon \leq \tailsConstant \cdot \epsilon^{\tailsExponent}.\]
Finally, for  $\epsilon \in (0,\min\{\densityA, \volC\})$ we have $\marginalDistribution\left(\{x \in \X: \densityFunction(x)<\epsilon\}\right)=0$.

\noindent Property \ref{quantitativeRangeAssumption} requires us to consider two cases. If $\epsilon \in [w, \fastLimitsThreshold)$ then since $\densityFunction(b) =\densityFunction(d)=1/3>\epsilon$ and for both $\twoMeasuresUnknownNoiseDistIndex \in \{0,1\}$ we have $\regressionFunction^{\twoMeasuresUnknownNoiseDistIndex}(b) \geq 1-\Delta$ and $\regressionFunction^{\twoMeasuresUnknownNoiseDistIndex}(d)=0$ we have 
\begin{align*}
\max\left\lbrace \inf_{x \in \suppMarginalDistribution}\left\lbrace \regressionFunction^{\twoMeasuresUnknownNoiseDistIndex}(x): \densityFunction(x) >\epsilon \right\rbrace, \inf_{x \in \suppMarginalDistribution}\left\lbrace 1-\regressionFunction^{\twoMeasuresUnknownNoiseDistIndex}(x): \densityFunction(x) >\epsilon \right\rbrace\right\rbrace \leq \Delta \leq \fastLimitsConstant \cdot \epsilon^{\fastLimitsExponent}.    
\end{align*}
On the other hand, if $\epsilon \in (0,w)$ then since $\densityFunction(a)=w$ and $\regressionFunction^{\twoMeasuresUnknownNoiseDistIndex}(a)=1$, and $\regressionFunction^{\twoMeasuresUnknownNoiseDistIndex}(d)=0$ we have
\begin{align*}
\max\left\lbrace \inf_{x \in \suppMarginalDistribution}\left\lbrace \regressionFunction^{\twoMeasuresUnknownNoiseDistIndex}(x): \densityFunction(x) >\epsilon \right\rbrace, \inf_{x \in \suppMarginalDistribution}\left\lbrace 1-\regressionFunction^{\twoMeasuresUnknownNoiseDistIndex}(x): \densityFunction(x) >\epsilon \right\rbrace\right\rbrace= 0 \leq \fastLimitsConstant \cdot \epsilon^{\fastLimitsExponent}.    
\end{align*}

\end{proof}

Recall that $\left(\noisy[\Prob]^{\twoMeasuresUnknownNoiseDistIndex}\right)^{\otimes n}$ is the product distribution with $\noisySample \sim \left(\noisy[\Prob]^{\twoMeasuresUnknownNoiseDistIndex}\right)^{\otimes n}$.

\begin{restatable}{lem}{KLBoundOnPairOfProductDistributionsLemma}\label{KLBoundOnPairOfProductDistributionsLemma} $\max\left\lbrace \kullbackLeiblerDivergence\left(\left(\noisy[\Prob]^{0}\right)^{\otimes n},\left(\noisy[\Prob]^{1}\right)^{\otimes n}\right),\kullbackLeiblerDivergence\left(\left(\noisy[\Prob]^{1}\right)^{\otimes n},\left(\noisy[\Prob]^{0}\right)^{\otimes n}\right)\right\rbrace \leq\frac{ 4 n \volA  \Delta^2}{\maxAggregateFlippingProbability} $.
\end{restatable}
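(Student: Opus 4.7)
The plan is to reduce the $n$-sample KL divergence to a single Bernoulli KL divergence at the isolated point $a$, exploiting the fact that the construction of $\Prob^0,\Prob^1$ was rigged precisely so that the \emph{corrupted} regression functions agree everywhere except at $a$ (even though the clean regression functions differ at $b$ and $c$).

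First, by tensorisation $\kullbackLeiblerDivergence((\noisy[\Prob]^\iota)^{\otimes n}, (\noisy[\Prob]^{1-\iota})^{\otimes n}) = n \cdot \kullbackLeiblerDivergence(\noisy[\Prob]^\iota, \noisy[\Prob]^{1-\iota})$ for $\iota \in \{0,1\}$, so it suffices to bound each single-sample KL by $4u\Delta^2/\maxAggregateFlippingProbability$. Since both measures share the common $X$-marginal $\marginalDistribution$, the chain rule collapses the KL divergence to
\[
\kullbackLeiblerDivergence(\noisy[\Prob]^\iota, \noisy[\Prob]^{1-\iota}) = \int_{\X} \kullbackLeiblerDivergence\bigl(\mathrm{Bern}(\noisy[\regressionFunction]^\iota(x)), \mathrm{Bern}(\noisy[\regressionFunction]^{1-\iota}(x))\bigr) \, d\marginalDistribution(x).
\]

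Next, I plug in $\noisy[\regressionFunction](x) = (1 - \probFlipZeroToOne - \probFlipOneToZero)\regressionFunction(x) + \probFlipZeroToOne$ with the prescribed flipping probabilities. Writing $\alpha := 1 - \maxAggregateFlippingProbability/4$, a direct substitution yields $\noisy[\regressionFunction]^0(x) = \alpha\, \regressionFunction^0(x)$ and $\noisy[\regressionFunction]^1(x) = \alpha(1-\Delta)\, \regressionFunction^1(x)$. Checking each of the four points against the definitions of $\regressionFunction^0,\regressionFunction^1$ shows the two corrupted regression functions coincide on $\{b,c,d\}$ by algebraic cancellation (at $b$ both equal $\alpha(1-\Delta)$; at $c$ both equal $\alpha(1-\Delta)/(2-\Delta)$; at $d$ both equal zero), while at $a$ we have $\noisy[\regressionFunction]^0(a) = \alpha$ and $\noisy[\regressionFunction]^1(a) = \alpha(1-\Delta)$. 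The integral therefore collapses to $\marginalDistribution(\{a\}) = u$ times a single Bernoulli KL divergence.

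Finally, I bound $\kullbackLeiblerDivergence(\mathrm{Bern}(p), \mathrm{Bern}(q))$ by $4\Delta^2/\maxAggregateFlippingProbability$ for $(p,q) \in \{(\alpha,\alpha(1-\Delta)),(\alpha(1-\Delta),\alpha)\}$. The critical observation is that both probabilities lie below $\alpha$, so each complementary probability is at least $1-\alpha = \maxAggregateFlippingProbability/4$. Applying the chi-squared upper bound $\kullbackLeiblerDivergence(\mathrm{Bern}(p), \mathrm{Bern}(q)) \leq (p-q)^2/(q(1-q))$ and its mirrored version for the reverse direction gives numerator $\alpha^2\Delta^2 \leq \Delta^2$ and denominator of order $\maxAggregateFlippingProbability$, yielding the desired single-sample bound; multiplying by $n$ completes the proof. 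The main obstacle is purely bookkeeping: tracking constants carefully (using $\Delta < 1/6$ to absorb residual $1-\Delta$ factors arising in the $q(1-q)$ denominator) so that the advertised constant $4$ comes out cleanly in both directions of the maximum, which can be handled either by the symmetric chi-squared bound above or by a two-term Taylor expansion of the Bernoulli KL divergence.
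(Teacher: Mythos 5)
Your structure mirrors the paper's: tensorise, use the shared $X$-marginal to reduce to a single Bernoulli KL at the isolated point $a$, compute $\noisy[\regressionFunction]^0 = \alpha\, \regressionFunction^0$ and $\noisy[\regressionFunction]^1 = \alpha(1-\Delta)\, \regressionFunction^1$ with $\alpha = 1-\maxAggregateFlippingProbability/4$, and verify the algebraic cancellation on $\{b,c,d\}$. The only material difference is the final Bernoulli KL step. The paper invokes the reverse Pinsker inequality of Csisz\'ar and Talata, which gives
\[
\kullbackLeiblerDivergence\bigl(\mathrm{Bern}(p),\mathrm{Bern}(q)\bigr) \leq \frac{(p-q)^2}{\min\{p,1-p,q,1-q\}},
\]
and since $\min\{p,1-p,q,1-q\} = 1-\alpha = \maxAggregateFlippingProbability/4$ while $(p-q)^2 = \alpha^2\Delta^2 \leq \Delta^2$, the constant $4$ drops out cleanly in both directions. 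Your chi-squared route $\kullbackLeiblerDivergence \leq (p-q)^2/(q(1-q))$ is valid and gives exactly $4\Delta^2/\maxAggregateFlippingProbability$ in one direction (where the denominator is $p(1-p) = \alpha\,\maxAggregateFlippingProbability/4$), but overshoots in the other by roughly a factor $1/(q(1-\Delta))$ since $q(1-q) < \min\{q,1-q\}$ when $q > 1/2$ --- with $\Delta<1/6$ and $\maxAggregateFlippingProbability<1$ this lands at about $4.8\Delta^2/\maxAggregateFlippingProbability$ rather than $4$. Your suggested Taylor-expansion alternative does recover the constant $4$: expanding the Bernoulli KL in its second argument around $p$, the first-order term vanishes and the second derivative $p/\xi^2 + (1-p)/(1-\xi)^2$ at the intermediate point is at most $8/\maxAggregateFlippingProbability$, giving $\kullbackLeiblerDivergence \leq \tfrac12 \cdot \alpha^2\Delta^2 \cdot 8/\maxAggregateFlippingProbability \leq 4\Delta^2/\maxAggregateFlippingProbability$. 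Either way the downstream use of the lemma (in the Fano-type argument) only needs an absolute constant times $n\volA\Delta^2/\maxAggregateFlippingProbability$, so the distinction is cosmetic; but as written, the chi-squared variant does not quite reproduce the advertised $4$, and if you want the lemma verbatim you should carry through the Taylor (or reverse Pinsker) version.
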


\begin{proof} We shall show that $\kullbackLeiblerDivergence\left(\left(\noisy[\Prob]^{0}\right)^{\otimes n},\left(\noisy[\Prob]^{1}\right)^{\otimes n}\right)\leq  (4/\maxAggregateFlippingProbability) \cdot n \volA \cdot \Delta^2$. The proof that\\ $\kullbackLeiblerDivergence\left(\left(\noisy[\Prob]^{1}\right)^{\otimes n},\left(\noisy[\Prob]^{0}\right)^{\otimes n}\right)\leq   (4/\maxAggregateFlippingProbability) \cdot n \volA \cdot \Delta^2$ is similar. Recall that for each ${\twoMeasuresUnknownNoiseDistIndex}\in \{0,1\}$ we let $\noisy[\Prob]^{\twoMeasuresUnknownNoiseDistIndex}$ denote the marginal distribution of $\Prob^{\twoMeasuresUnknownNoiseDistIndex}$ over pairs $(X,\noisy[Y])$ consisting of a feature vector $X \sim \X$ and a corrupted label $\noisy[Y]\in \Y$. We can compute the corrupted regression functions $\noisyRegressionFunction^{\twoMeasuresUnknownNoiseDistIndex}(x)= \Prob^{\twoMeasuresUnknownNoiseDistIndex}[\noisy[Y]|X=x]$ for ${\twoMeasuresUnknownNoiseDistIndex}\in \{0,1\}$ by applying (\ref{writingCorruptedRegressionFunctionInTermsOfRegressionFunctionEq}). Since $\probFlipZeroToOne^0=0$ and $\probFlipOneToZero^0=\maxAggregateFlippingProbability/4$ we have $\noisyRegressionFunction^0(x) = (1-\maxAggregateFlippingProbability/4)\cdot \regressionFunction^0(x)$ for all $x \in \X$. On the other hand, since $\probFlipZeroToOne^1=0$ and $\probFlipOneToZero^1=\Delta+(\maxAggregateFlippingProbability/4)\cdot (1-\Delta)$ we have $\noisyRegressionFunction^1(x) =(1-\maxAggregateFlippingProbability/4)\cdot(1-\Delta)\cdot \regressionFunction^1(x)$ for all $x \in \X$.

We begin by bounding the  Kullback Leibler divergence between $\noisy[\Prob]^0$ and $\noisy[\Prob]^1$ using the fact that $\marginalDistribution(\{a\})=\volA$  and $\noisyRegressionFunction^0(x)=\noisyRegressionFunction^1(x)$ for $x \in \X \backslash \{a\}$,
\begin{align*}
\kullbackLeiblerDivergence\left(\noisy[\Prob]^{0},\noisy[\Prob]^{1}\right)& = \sum_{x \in \X}\sum_{y \in \Y} \noisy[\Prob]^0\left[X = x\hspace{2mm}\&\hspace{2mm}\noisy[Y]=y \right] \log \left(\frac{\noisy[\Prob]^0\left[X = x\hspace{2mm}\&\hspace{2mm}\noisy[Y]=y \right]}{\noisy[\Prob]^1\left[X = x\hspace{2mm}\&\hspace{2mm}\noisy[Y]=y \right]}\right)\\
& = \sum_{x \in \X} \marginalDistribution(\{x\}) \left( (1-\noisyRegressionFunction^0(x)) \log\left( \frac{1-\noisyRegressionFunction^0(x)}{1-\noisyRegressionFunction^1(x)}\right) + \noisyRegressionFunction^0(x)\log\left( \frac{\noisyRegressionFunction^0(x)}{\noisyRegressionFunction^1(x)}\right)
\right)\\
& = \volA \cdot \left( (1-\noisyRegressionFunction^0(a)) \log\left( \frac{1-\noisyRegressionFunction^0(a)}{1-\noisyRegressionFunction^1(a)}\right) + \noisyRegressionFunction^0(a)\log\left( \frac{\noisyRegressionFunction^0(a)}{\noisyRegressionFunction^1(a)}\right)
\right)\\
& \leq \volA \cdot \frac{\left(\noisyRegressionFunction^0(a)-\noisyRegressionFunction^1(a)\right)^2}{\min \left\lbrace \noisyRegressionFunction^0(a), (1-\noisyRegressionFunction^0(a)),\noisyRegressionFunction^1(a), 1-\noisyRegressionFunction^1(a)\right\rbrace}\leq \frac{4}{\maxAggregateFlippingProbability}\cdot \volA \cdot \Delta^2.
\end{align*}
The second to last inequality follows from the reverse Pinsker's inequality \cite[Lemma 6.3]{csiszar2006context}. The final inequality follows from the fact that $\noisyRegressionFunction^0(a) = (1-{\maxAggregateFlippingProbability}/{4})$ and $\noisyRegressionFunction^1(a)=(1-{\maxAggregateFlippingProbability}/{4})\cdot (1-\Delta)$. Given that  $\left(\noisy[\Prob]^{\twoMeasuresUnknownNoiseDistIndex}\right)^{\otimes n}$ consists of $n$ independent copies of $\noisy[\Prob]^{\twoMeasuresUnknownNoiseDistIndex}$ for $\twoMeasuresUnknownNoiseDistIndex \in \{0,1\}$ we deduce that \[\kullbackLeiblerDivergence\left(\left(\noisy[\Prob]^{0}\right)^{\otimes n},\left(\noisy[\Prob]^{1}\right)^{\otimes n}\right)= n \cdot \kullbackLeiblerDivergence\left(\noisy[\Prob]^{0},\noisy[\Prob]^{1}\right) \leq \frac{ 4 n \volA  \Delta^2}{\maxAggregateFlippingProbability}.\]

\end{proof}

\begin{lem}\label{applyingKLUpperBoundLemmaInPfOfLBWithUnknownNoiseLemma} Suppose that $8 n\volA\cdot \Delta^2 \leq  \maxAggregateFlippingProbability$. Given any $\noisySample$-measureable classifier $\classifier$,
\begin{align*}
\sum_{\twoMeasuresUnknownNoiseDistIndex \in \{0,1\}} \left( \ExpectationOverNoisySamples\left[ \risk\left(\classifier\right) \right] -  \risk\left(\oracle\right)  \right) \geq \frac{\volC\cdot \Delta}{8}.
\end{align*}
\end{lem}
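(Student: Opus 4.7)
The plan is to reduce the excess-risk question at the single disagreement point $c$ to a two-hypothesis testing problem between $\noisy[\Prob]^0$ and $\noisy[\Prob]^1$, and then apply Birg\'e's lemma (Lemma \ref{birgesVersionOfFanosLemma}) using the KL bound already established in Lemma \ref{KLBoundOnPairOfProductDistributionsLemma}.

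First I will compute the margin of each regression function at $c$. A direct calculation gives $|2\regressionFunction^{\twoMeasuresUnknownNoiseDistIndex}(c)-1|=\Delta/(2-\Delta)\geq \Delta/2$, and the Bayes classifiers disagree there: $\oracle^{0}(c)=0$ while $\oracle^{1}(c)=1$. At every other point of $\X$ the two regression functions lie on the same side of $1/2$, so the only contribution to the excess risk from a disagreement with the Bayes rule that I will track is the one at $c$. This yields the pointwise identity
\[
\ExpectationOverNoisySamples\!\left[ \risk(\classifier)\right]-\risk(\oracle^{\twoMeasuresUnknownNoiseDistIndex})\;\geq\; |2\regressionFunction^{\twoMeasuresUnknownNoiseDistIndex}(c)-1|\cdot \volC\cdot \ProbOverNoisySamples\!\left[\classifier(c)\neq \oracle^{\twoMeasuresUnknownNoiseDistIndex}(c)\right] \;\geq\; \tfrac{\Delta\volC}{2}\cdot \ProbOverNoisySamples\!\left[\classifier(c)\neq \twoMeasuresUnknownNoiseDistIndex\right].
\]

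Next I convert the classifier into a binary test. Since $\classifier$ is measurable with respect to $\noisySample$, so is the random bit $\hat{T}(\noisySample):=\classifier(c)\in\{0,1\}$, which I view as an estimator valued in the two-point family $\mathcal{S}=\{(\noisy[\Prob]^{0})^{\otimes n},(\noisy[\Prob]^{1})^{\otimes n}\}$. The event $\{\hat{T}(\noisySample)\neq \twoMeasuresUnknownNoiseDistIndex\}$ is precisely $\{\classifier(c)\neq \oracle^{\twoMeasuresUnknownNoiseDistIndex}(c)\}$, so summing the pointwise inequality over $\twoMeasuresUnknownNoiseDistIndex\in\{0,1\}$ and minorising by the supremum reduces the claim to showing that
\[
\sup_{\twoMeasuresUnknownNoiseDistIndex\in\{0,1\}}\Prob^{\twoMeasuresUnknownNoiseDistIndex}\!\left[\hat{T}(\noisySample)\neq \twoMeasuresUnknownNoiseDistIndex\right]\;\geq\; \tfrac{1}{4}.
\]

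To obtain this testing lower bound I invoke Birg\'e's lemma with $|\mathcal{S}|=2$, so that $\log|\mathcal{S}|=\log 2$. The assumption $8n\volA\Delta^{2}\leq\maxAggregateFlippingProbability$ combined with Lemma \ref{KLBoundOnPairOfProductDistributionsLemma} gives $\kullbackLeiblerDivergence\!\left((\noisy[\Prob]^{\twoMeasuresUnknownNoiseDistIndex})^{\otimes n},(\noisy[\Prob]^{1-\twoMeasuresUnknownNoiseDistIndex})^{\otimes n}\right)\leq 1/2$ in both directions, whence
\[
\inf_{\Prob_Z\in\mathcal{S}}\sum_{\tilde{\Prob}_Z\in\mathcal{S}}\frac{\kullbackLeiblerDivergence(\Prob_Z,\tilde{\Prob}_Z)}{|\mathcal{S}|\log|\mathcal{S}|}\;\leq\; \frac{1/2}{2\log 2}\;=\;\frac{1}{4\log 2}.
\]
Since $1-1/(4\log 2)>0.36$, Birg\'e's lemma yields the desired bound $0.36$, which is larger than $1/4$.

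Combining the two ingredients, the left-hand side of the lemma is at least $(\Delta\volC/2)\cdot 0.36 \geq \Delta\volC/8$. The only non-routine step is the KL bookkeeping already handled in Lemma \ref{KLBoundOnPairOfProductDistributionsLemma}; once that is in hand, the reduction to Birg\'e via the test $\hat{T}=\classifier(c)$ and the constant-order margin at $c$ are essentially bookkeeping, and I do not anticipate any real obstacle.
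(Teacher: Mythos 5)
Your proof is correct and follows essentially the same route as the paper's: define the test $\hat T(\noisySample)=\classifier(c)$, invoke Birg\'e's two-point Fano bound together with the KL estimate of Lemma \ref{KLBoundOnPairOfProductDistributionsLemma} to get a constant-order testing error, and convert it to an excess-risk bound via the margin $|2\regressionFunction^{\twoMeasuresUnknownNoiseDistIndex}(c)-1|=\Delta/(2-\Delta)\geq\Delta/2$ and the mass $\marginalDistribution(\{c\})=\volC$. The only cosmetic difference is that you retain the sharper Birg\'e constant $0.36$ whereas the paper relaxes to $1/4$; both yield the stated $\volC\Delta/8$.
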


\begin{proof} By Lemma \ref{KLBoundOnPairOfProductDistributionsLemma} combined with $8 n\volA\cdot \Delta^2 \leq  \maxAggregateFlippingProbability$ we have 
\[(2 \log 2)^{-1}\cdot\max\left\lbrace \kullbackLeiblerDivergence\left(\left(\noisy[\Prob]^{0}\right)^{\otimes n},\left(\noisy[\Prob]^{1}\right)^{\otimes n}\right),\kullbackLeiblerDivergence\left(\left(\noisy[\Prob]^{1}\right)^{\otimes n},\left(\noisy[\Prob]^{0}\right)^{\otimes n}\right)\right\rbrace \leq  \frac{1}{2}.\]
We construct an estimator $\hat{T}: \left(\X\times \Y\right)^n \rightarrow \left\lbrace \left(\noisy[\Prob]^{0}\right)^{\otimes n}, \left(\noisy[\Prob]^{1}\right)^{\otimes n}\right\rbrace$ in terms of an arbitrary classifier $\classifier$ as follows. Take $\hat{T}(\noisySample) = \left(\noisy[\Prob]^{\twoMeasuresUnknownNoiseDistIndex}\right)^{\otimes n}$ where $\twoMeasuresUnknownNoiseDistIndex=\classifier(c)$ and $\classifier$ is trained on $\noisySample$. Note that $\regressionFunction^0(c)<1/2$ and $\regressionFunction^1(c)>1/2$. Hence, for each $\Prob^{\twoMeasuresUnknownNoiseDistIndex}$ we have $\oracle(c)=\twoMeasuresUnknownNoiseDistIndex$ for the corresponding Bayes rule. By Birge's variant of Fano's lemma (Lemma \ref{birgesVersionOfFanosLemma}), we have
\begin{align}\label{lowerBoundExpectingCPointToBeNonBayesEq}
\sum_{\twoMeasuresUnknownNoiseDistIndex \in \{0,1\}}  \ExpectationOverNoisySamples\left[\one\left\lbrace \classifier(c) \neq \oracle(c) \right\rbrace  \right]  =\sum_{\twoMeasuresUnknownNoiseDistIndex \in \{0,1\}} \left(\noisy[\Prob]^{\twoMeasuresUnknownNoiseDistIndex}\right)^{\otimes n}\left[ \hat{T}\left( \noisySample\right) \neq \left(\noisy[\Prob]^{\twoMeasuresUnknownNoiseDistIndex}\right)^{\otimes n}\right] \geq \frac{1}{4},
\end{align}
where the expectation $\ExpectationOverNoisySamples$ is taken over all samples $\noisySample =\{ (X_i,\noisy[Y]_i)\}_{i \in [n]}$ with $\{ (X_i,Y_i,\noisy[Y]_i)\}_{i \in [n]}$ generated i.i.d. from $\Prob^{\twoMeasuresUnknownNoiseDistIndex}$. To complete the proof of the lemma we note that for both $\twoMeasuresUnknownNoiseDistIndex \in \{0,1\}$ we have $\left| 2\regressionFunction^{\twoMeasuresUnknownNoiseDistIndex}(c)-1\right| = \Delta/(2-\Delta) \geq \Delta/2$. Hence, for $\twoMeasuresUnknownNoiseDistIndex \in \{0,1\}$ and any $\decisionRule \in \setOfDecisionRules$ we have
\begin{align*}
\risk\left(\decisionRule\right) - \risk(\oracle) & = \int \left|2\regressionFunction^{\twoMeasuresUnknownNoiseDistIndex}(x)-1\right| \cdot \one \left\lbrace \decisionRule(x) \neq \oracle(x)\right\rbrace d\marginalDistribution(x)\\
& \geq \marginalDistribution(\{c\}) \cdot \left| 2\regressionFunction^{\twoMeasuresUnknownNoiseDistIndex}(c)-1\right| \cdot \one\left\lbrace \classifier(c) \neq \oracle(c) \right\rbrace\\ &\geq \volC \cdot\frac{\Delta}{2}\cdot \one\left\lbrace \classifier(c) \neq \oracle(c) \right\rbrace.
\end{align*}
Combining with (\ref{lowerBoundExpectingCPointToBeNonBayesEq}) completes the proof of the lemma.
\end{proof}

\paragraph{Proof of Proposition \ref{lowerBoundWithUnknownNoise}}  To prove the proposition we choose parameters $\left(\Delta,r, \volA, \volC,\densityA \right) \in (0,1/6)^5$ so as to maximise the lower bound $v\cdot \Delta/8$ whilst satisfying the conditions of Lemma \ref{propertiesCheckLemmaForUnknownNoiseLB} along with the condition $8 n\volA\cdot \Delta^2 \leq  \maxAggregateFlippingProbability$ from Lemma \ref{applyingKLUpperBoundLemmaInPfOfLBWithUnknownNoiseLemma}. We define $\Delta = 6^{-(1+\frac{1}{\marginExponent}+\fastLimitsExponent )} \cdot \maxAggregateFlippingProbability\cdot ({2n})^{-\frac{\fastLimitsExponent\holderExponent}{\fastLimitsExponent(2\holderExponent+\dimensionExponent)+\holderExponent}}$, $r = \Delta^{\frac{1}{\holderExponent}}$, $\volA = \Delta^{\frac{\holderExponent+\fastLimitsExponent \dimensionExponent}{\fastLimitsExponent \holderExponent}}$, $\volC = \Delta^{\marginExponent}$ and $\densityA=\Delta^{\frac{1}{\fastLimitsExponent}}$. It follows that $\left(\Delta,r, \volA, \volC,\densityA \right) \in (0,1/6)^5$. Moreover, one can then verify that the conditions of Lemma \ref{propertiesCheckLemmaForUnknownNoiseLB} hold, so for both $\twoMeasuresUnknownNoiseDistIndex \in \{0,1\}$ we have $\Prob^{\twoMeasuresUnknownNoiseDistIndex} \in \distributionClass\left(\Gamma\right)$. In addition, we have $2nu \cdot \Delta^2 \leq 1$, so by Lemma \ref{applyingKLUpperBoundLemmaInPfOfLBWithUnknownNoiseLemma},
\begin{align*}
\sup_{\Prob \in \distributionClass(\Gamma)} \left( \ExpectationOverNoisySamples\left[ \risk\left(\classifier\right) \right] -  \risk\left(\oracle\right)  \right) & \geq \frac{1}{2}\sum_{\twoMeasuresUnknownNoiseDistIndex \in \{0,1\}} \left( \ExpectationOverNoisySamples\left[ \risk\left(\classifier\right) \right] -  \risk\left(\oracle\right)  \right) \\
&\geq \frac{\volC\cdot \Delta}{16} = \frac{\Delta^{\marginExponent+1}}{16}= c_1 \cdot n^{-\frac{\fastLimitsExponent\holderExponent(\marginExponent+1)}{\fastLimitsExponent(2\holderExponent+\dimensionExponent)+\holderExponent}},
\end{align*}
where $c_1$ is determined by $\Gamma$. This completes the proof of the Proposition \ref{lowerBoundWithUnknownNoise}.
\QEDA

To complete the proof of Theorem \ref{mainMinimaxThmLowerBound} we will combine Proposition \ref{lowerBoundWithUnknownNoise} with Proposition \ref{lowerBoundWithKnownNoise} in the next section.

\subsection{A lower bound for uncorrupted data}\label{LBForUncorruptedDataAppendix}

 In this section we prove Proposition \ref{lowerBoundWithKnownNoise} which component corresponds to the difficulty of the core classification problem which would have been present even if the learner had access to clean labels. We can then complete the proof of Theorem \ref{mainMinimaxThmLowerBound}.

\begin{prop}\label{lowerBoundWithKnownNoise} Take $\Gamma = (\maxAggregateFlippingProbability,\dimensionExponent,(\marginExponent,\marginConstant),(\holderExponent,\holderConstant), (\tailsExponent,\tailsThreshold,\tailsConstant), ( \fastLimitsExponent, \fastLimitsThreshold, \fastLimitsConstant) )$ consisting of exponents $\marginExponent \in [0,\infty)$, $\holderExponent \in (0,1]$, $\dimensionExponent \in [\marginExponent \holderExponent,\infty)$, $\tailsExponent \in (0,\infty)$, $\fastLimitsExponent \in (0, \infty)$ with constants $\marginConstant$, $\holderConstant, \tailsConstant, \fastLimitsConstant \geq 1$, and $\tailsThreshold \in (0,1/24)$, $\fastLimitsThreshold \in (0,1/3)$. There exists a constant $c_0(\Gamma)$, depending solely upon $\Gamma$, such that for any $n \in \N$ and any classifier $\classifier$ which is measurable with respect to the corrupted sample $\noisySample$, there exists a distribution $\Prob \in \distributionClass(\Gamma)$ with $\noisyDistribution=\cleanDistribution$ such that 
\begin{align*}
\ExpectationOverNoisySamples\left[ \risk\left(\classifier\right) \right] -  \risk\left(\oracle\right) \geq c_0(\Gamma) \cdot n^{-\frac{\holderExponent \tailsExponent(\marginExponent+1)}{\tailsExponent(2\holderExponent+\dimensionExponent)+\marginExponent\holderExponent} }.
\end{align*}
\end{prop}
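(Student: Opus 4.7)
The plan is to carry out a standard Assouad-type hypercube construction for non-parametric classification, adapted to respect all six of our assumptions in the present non-compact setting. Since the proposition specifies $\noisyDistribution=\cleanDistribution$, the label-noise structure of $\Prob$ does not affect the analysis and we may argue as if the learner observed $n$ clean samples from $\cleanDistribution$: formally, taking $\probFlipZeroToOne,\probFlipOneToZero$ to be arbitrarily small positive numbers keeps us inside $\distributionClass(\Gamma)$ and yields $\noisyDistribution\to\cleanDistribution$. The construction is close in spirit to Theorem 4.5 of \cite{gadat2016}, with two essential modifications: we must allow non-integer $\dimensionExponent$, and the hypotheses must be compatible with Assumption \ref{quantitativeRangeAssumption}, so that $\regressionFunction$ approaches its extrema at the rate dictated by $\fastLimitsExponent$.

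First I fix base parameters $\Delta\in(0,1/6)$, a bump radius $r = (\Delta/\holderConstant)^{1/\holderExponent}$ (saturating H\"older), a density level $\epsilon>0$, and an integer $m\geq 1$, all to be chosen as functions of $n$ at the end. The metric space $(\X,\dist)$ is partitioned into three pieces: a pair of atomic points $x_\pm$ of positive $\marginalDistribution$-mass with $\regressionFunction(x_+)=1$, $\regressionFunction(x_-)=0$, playing the role of attained extrema and ensuring Assumption \ref{quantitativeRangeAssumption} holds for arbitrarily large $\fastLimitsExponent$ (compare Figure \ref{figForQuantitativeRangeAssumption}(B)); a bulk component of unit measure on which $\regressionFunction$ is a fixed function bounded away from $\tfrac12$; and a low-density tail region $R_{\epsilon}$ of $\marginalDistribution$-mass $\asymp \epsilon^{\tailsExponent}$ with local density $\asymp \epsilon$, containing $m$ disjoint balls $B_j = B_r(z_j)$. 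For non-integer $\dimensionExponent$ the region $R_\epsilon$ is realised as a scaled self-similar set of doubling dimension $\dimensionExponent$ in the style of \cite{kpotufe2011k}. I then form a $2^m$-parameter family $\{\Prob^{\sigma}\}_{\sigma\in\{-1,+1\}^m}$ sharing $\marginalDistribution$ and boundary data, with $\regressionFunction^{\sigma}(x)=\tfrac12+\sigma_j\Delta\,\psi(\dist(x,z_j)/r)$ for $x\in B_j$, where $\psi:[0,1]\to[0,1]$ is a smooth cut-off with $\psi(0)=1$ and $\psi(1)=0$. Verifying the assumptions is bookkeeping: H\"older is forced by the choice of $r$; margin reduces to $m\epsilon r^{\dimensionExponent}\lesssim \Delta^{\marginExponent}$; tail reduces to $\bigcup_j B_j \subset R_{\epsilon}$; and the rest hold by construction.

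Second, I apply Assouad's lemma. A one-coordinate flip of $\sigma$ changes $\cleanDistribution^{\sigma}$ only on $B_j$, where the conditional label KL is $O(\Delta^{2})$, so $\kullbackLeiblerDivergence(\cleanDistribution^{\sigma},\cleanDistribution^{\sigma\oplus e_j})=O(\epsilon r^{\dimensionExponent}\Delta^{2})$. Provided $n\epsilon r^{\dimensionExponent}\Delta^{2}\lesssim 1$, Assouad's lemma delivers a lower bound of order $m\epsilon r^{\dimensionExponent}\Delta$ on the worst-case expected excess risk across the hypercube. Optimising by saturating H\"older, margin, tail and KL simultaneously gives $\Delta \asymp n^{-\holderExponent\tailsExponent/(\tailsExponent(2\holderExponent+\dimensionExponent)+\marginExponent\holderExponent)}$ and a lower bound of order $\Delta^{\marginExponent+1}$, matching the claimed exponent. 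The main obstacle is the simultaneous compatibility of Assumption \ref{quantitativeRangeAssumption} with H\"older continuity and of the minimal-mass condition at non-integer $\dimensionExponent$; the former is dispatched by placing $x_\pm$ at metric distance $\geq 1$ from everything else, rendering Assumption \ref{holderAssumption} vacuous between those atoms and the rest of $\X$, and the latter by the scaled self-similar construction of $R_\epsilon$. With both ingredients in hand the remaining algebra is essentially that of \cite[Theorem 4.5]{gadat2016}.
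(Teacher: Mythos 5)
Your proposal is correct and matches the paper's own proof of Proposition~\ref{lowerBoundWithKnownNoise}: both run an Assouad-type hypercube seated in a low-density region, attach high-density atomic points at which $\regressionFunction$ attains $0$ and $1$ so that Assumption~\ref{quantitativeRangeAssumption} is vacuously satisfied for every $\fastLimitsExponent$, and balance the H\"{o}lder, margin, tail and KL constraints to extract the exponent via Audibert's form of Assouad's lemma. The only cosmetic departures are that you realise non-integer $\dimensionExponent$ via an unspecified self-similar set whereas the paper uses a concrete ultrametric binary hypercube $\{0,1\}^l$ with $\dist(\vec a,\vec b)=2^{-|\vec a\wedge\vec b|/\dimensionExponent}$ (which makes verifying the minimal-mass assumption straightforward), and that the paper simply sets $\noisy[Y]=Y$ deterministically to enforce $\noisyDistribution=\cleanDistribution$ exactly rather than invoking a limit over vanishing noise levels.
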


To prove Proposition \ref{lowerBoundWithKnownNoise}  we will construct a family of measure distributions contained within $\distributionClass(\Gamma)$. We will then use an important lemma of Audibert \cite[Lemma 5.1]{audibert2004classification} to deduce the lower bound.

\newcommand{\A}{\mathcal{A}}
\newcommand{\G}{\mathcal{G}}

\paragraph{\textbf{Families of measures}} Take parameters $l \in \N$ with $l \geq 2$, $w \leq 1/3$, $\Delta \leq 1$, $m \leq 2^{l-1}$, whose value will be made precise below. We let $\A=\left\lbrace \vec{a}=(a_q)_{q \in [l]} \in \{0,1\}^l\right\rbrace$ and choose $\A^{\sharp}\subset \left\lbrace \vec{a}=(a_q)_{q \in [l]} \in \A: a_l = 1\right\rbrace$ with $\left|\A^{\sharp}\right|=m$. This is possible since $m \leq 2^{l-1}$. Given $\vec{a}^0=(a^0_q)_{q \in [l]}$, $\vec{a}^1=(a_q^1)_{q \in [l]} \in \A$ we let $|\vec{a}^0\wedge\vec{a}^1|:= \max\left\lbrace {k \in [l]}: a^0_q = a^1_q \text{ for }q \leq k\right\rbrace$ denote the length of the largest common substring. Let $\X = \A \cup \{0\} \cup \{1\}$ and define a metric $\dist$ on $\X$ by 
\begin{align*}
\dist(x_0,x_1) = \begin{cases} 2^{-|x_0\wedge x_1|/\dimensionExponent} &\text{ if }x_0, x_1 \in \A\text{ and }x_0 \neq x_1\\
1&\text{ if }x_0 \notin \A\text{ or }x_1 \notin \A\text{ and }x_0 \neq x_1\\
0&\text{ if }x_0 = x_1.
\end{cases}
\end{align*}
One can easily verify that $\dist$ is non-negative, symmetric, satisfies the identity of indiscernibles property and the triangle inequality. We may define a Borel probability measure $\marginalDistribution$ on $\X$ by letting
\begin{align*}
\marginalDistribution(\{x\}) = \begin{cases} \frac{1}{3} &\text{ if } x \in \{0,1\}\\
w \cdot 2^{-l} &\text{ if } x \in \A^{\sharp}\\
\frac{1-3 m w \cdot 2^{-l}}{3(2^l-m)}&\text{ if }x \in \A\backslash \A^{\sharp}.
\end{cases}
\end{align*}
One can easily verify that $\marginalDistribution$ extends to a well-defined probability measure on $\X$ and for $x \in \A\backslash \A^{\sharp}$ we have $\marginalDistribution(\{x\}) \geq (1/6) \cdot 2^{-l}$. Finally, we define a density function $\densityFunction:\X \rightarrow (0,1)$ by
\begin{align*}
\densityFunction(x) = \begin{cases} \frac{1}{3} &\text{ if } x \in \{0,1\}\\
\frac{w}{8} &\text{ if } x \in \A^{\sharp}\\
\frac{1}{24} &\text{ if }x \in \A\backslash \A^{\sharp}.
\end{cases}
\end{align*}
We now let $\G=\left\lbrace g:\A^{\sharp}\rightarrow \{-1,+1\}\right\rbrace$. For each $g \in \G$ we define an associated regression function $\regressionFunction^g:\X \rightarrow [0,1]$ by
\begin{align*}
\regressionFunction^g(x) = \begin{cases} 0 &\text{ if } x =0 \\
 1 &\text{ if } x =1 \\
\frac{1+\Delta \cdot g(x)}{2} &\text{ if } x \in \A^{\sharp}\\
\frac{1}{2}&\text{ if }x \in \A\backslash \A^{\sharp}.
\end{cases}
\end{align*}
Finally, we define distributions $\Prob^g$ on triples $(X,Y,\noisy[Y]) \in \X\times \Y^2$ for each $g \in \G$ as follows:
\begin{enumerate}
    \item Let $\marginalDistribution$ be the marginal distribution over $X$ i.e. $\Prob^g[X \in A] = \marginalDistribution(A)$ for $A\subset \X$;
    \item Let $\regressionFunction^g$ be the regression function i.e. $\Prob^g[Y|X=x] = \regressionFunction^g(x)$ for $x \in \X$;
    \item Take $\Prob^g[\noisy[Y]=Y]=1$.
\end{enumerate}
Note that $\Prob^g[\noisy[Y]=Y]=1$ implies that $\Prob^g_{\text{clean}}=\Prob^g_{\text{corr}}$, where $\Prob^g_{\text{clean}}$ denotes the marginal over $(X,Y)$ and $\Prob^g_{\text{corr}}$ denotes the marginal over $(X,\noisy[Y])$. The following Lemma gives conditions under which $\Prob^g\in \distributionClass(\Gamma)$ for all $g \in \G$.

\begin{restatable}{lem}{propertiesCheckLemmaForNoNoiseLB}\label{propertiesCheckLemmaForNoNoiseLB} For all $g \in \G$ the measure $\Prob^g$ satisfy the following properties:
\begin{enumerate}[label=(\Alph*)]
    \item $\Prob^{g}$ satisfies Assumption \ref{classConditionalNoiseAssumption};
    \item $\Prob^g$ satisfies Assumption \ref{marginAssumption} parameters $(\marginExponent, \marginConstant)$ whenever $ m \cdot w \cdot 2^{-l} \leq \marginConstant \cdot (\Delta/2)^{\marginExponent}$;
    \item $\regressionFunction^{g}$ satisfies Assumption \ref{holderAssumption} with parameters $(\holderExponent, \holderConstant)$ whenever $\Delta \leq \holderConstant\cdot 2^{-(l-1)\cdot(\holderExponent/\dimensionExponent)}$;
    \item $\marginalDistribution$ satisfies Assumption \ref{minimalMassAssumption} with parameters $(\dimensionExponent, \densityFunction)$;
    \item $\marginalDistribution$ satisfies Assumption \ref{tailAssumption} with parameters $(\tailsExponent, \tailsConstant, \tailsThreshold, \densityFunction)$ when $\tailsThreshold \leq \frac{1}{24}$ \& $\frac{m \cdot w}{2^l} \leq \tailsConstant \cdot \left(\frac{w}{8}\right)^{\tailsExponent}$;
    \item $\Prob^{\twoMeasuresUnknownNoiseDistIndex}$ satisfies Assumption \ref{quantitativeRangeAssumption} with parameters $(\fastLimitsExponent, \fastLimitsConstant,\fastLimitsThreshold,\densityFunction)$ whenever $\fastLimitsThreshold\leq 1/3$. 
\end{enumerate}
\end{restatable}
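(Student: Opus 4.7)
\textbf{Proof plan for Lemma \ref{propertiesCheckLemmaForNoNoiseLB}.} The plan is to verify each of the six properties (A)--(F) directly from the explicit construction of $\marginalDistribution$, $\regressionFunction^g$, $\densityFunction$, and the ultrametric $\dist$ on $\X=\A\cup\{0,1\}$, in the spirit of Lemma \ref{propertiesCheckLemmaForUnknownNoiseLB}. Most of the verifications reduce to short case analyses against the finitely many values the various functions can take; the real work concentrates in property (D).

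Properties (A), (C), and (F) are essentially immediate. For (A), the identity $\Prob^g[\noisy[Y]=Y]=1$ directly gives $\probFlipZeroToOne=\probFlipOneToZero=0$. For (C), the only pairs $x_0\neq x_1$ with $\dist(x_0,x_1)<1$ both lie in $\A$, so $\dist(x_0,x_1)\geq 2^{-(l-1)/\dimensionExponent}$; on $\A$, $\regressionFunction^g$ only takes values in $\{1/2,(1\pm\Delta)/2\}$, so the oscillation is at most $\Delta$, and the hypothesis on $\Delta$ closes the bound. For (F), since $\densityFunction(0)=\densityFunction(1)=1/3>\epsilon$ for every $\epsilon<\fastLimitsThreshold\leq 1/3$, and $\regressionFunction^g(0)=0$, $\regressionFunction^g(1)=1$, both infima in Assumption \ref{quantitativeRangeAssumption} vanish identically.

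For (B) and (E) I will run a trichotomy on $\xi$ (respectively $\epsilon$) against the finitely many levels attained by $\regressionFunction^g$ and $\densityFunction$. In (B) the set $\{x:0<|\regressionFunction^g(x)-1/2|<\xi\}$ is empty for $\xi\leq \Delta/2$, equals $\A^{\sharp}$ for $\Delta/2<\xi\leq 1/2$ (so has mass $m w 2^{-l}$), and is contained in $\A^{\sharp}\cup\{0,1\}$ for $\xi>1/2$; the middle range is exactly the stated hypothesis $m w 2^{-l}\leq\marginConstant(\Delta/2)^\marginExponent$, while the third range is absorbed by the constant $\marginConstant\geq 1$ since $\xi^\marginExponent>2^{-\marginExponent}$ and $\marginalDistribution(\X)=1$. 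Similarly in (E), $\{\densityFunction<\epsilon\}$ is either empty or equals $\A^{\sharp}$, and the hypothesis $m w 2^{-l}\leq \tailsConstant(w/8)^\tailsExponent$ closes the bound.

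The main obstacle is (D). Since $\dist$ restricts to an ultrametric on $\A=\{0,1\}^l$, for $r\in(2^{-(k+1)/\dimensionExponent},2^{-k/\dimensionExponent}]$ with $k\in\{0,\dots,l-1\}$ the ball $B_r(x)$ around $x\in\A$ consists of exactly the $2^{l-k-1}$ points of $\A$ agreeing with $x$ on the first $k+1$ coordinates. The plan is to lower-bound $\marginalDistribution(B_r(x))$ by $2^{l-k-1}$ times the minimum atomic mass on the relevant subset of $\A$, then check that the result exceeds $\densityFunction(x)\cdot r^\dimensionExponent$ in each of the cases $x\in\A^{\sharp}$, $x\in\A\setminus\A^{\sharp}$, and $x\in\{0,1\}$. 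The smallest ball around $x\in\A^{\sharp}$ is the singleton $\{x\}$ with mass $w\cdot 2^{-l}$, and comparing against $\densityFunction(x)\cdot r^\dimensionExponent=(w/8)\cdot r^\dimensionExponent$ at $r\leq 2^{-(l-1)/\dimensionExponent}$ shows why the factor $1/8$ is chosen; an analogous accounting on $\A\setminus\A^{\sharp}$, where atomic mass is at least $\tfrac{1}{6}\cdot 2^{-l}$, forces the constant $1/24$ in $\densityFunction$. For $x\in\{0,1\}$ the bound is immediate from $\marginalDistribution(\{x\})=1/3=\densityFunction(x)\geq\densityFunction(x)\cdot r^\dimensionExponent$. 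The obstacle here is purely bookkeeping: the constants $w/8$ and $1/24$ are calibrated precisely so that the inequality $\marginalDistribution(B_r(x))\geq\densityFunction(x)\cdot r^\dimensionExponent$ closes at every dyadic scale in the tree, and I expect the bulk of the proof to consist in tracing these calibrations case by case.
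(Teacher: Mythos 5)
Your plan follows essentially the same route as the paper's proof: verify each of (A)--(F) directly from the explicit construction by a short case analysis over the finitely many values of $\regressionFunction^g$ and $\densityFunction$, with the only substantive work in (D), where you lower-bound $\marginalDistribution(B_r(x))$ scale by scale in the dyadic tree. Your four-way decomposition for (D) (the two ``coarse'' regimes $x\in\{0,1\}$ and $x\in\A$ with $r>2^{(1-l)/\dimensionExponent}$, and the two ``singleton'' regimes $x\in\A^{\sharp}$ and $x\in\A\setminus\A^{\sharp}$ with $r\leq 2^{(1-l)/\dimensionExponent}$) matches the paper exactly, including the observation that for the coarse regime one should count only the $a_l=0$ vectors, which are guaranteed to lie in $\A\setminus\A^{\sharp}$ and hence have atomic mass at least $\tfrac{1}{6}\cdot 2^{-l}$.

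One place where your plan is actually \emph{more} careful than the paper's own proof is property (B). The paper asserts that for all $\epsilon\in[\Delta/2,1)$ the set $\{x:0<|\regressionFunction^g(x)-1/2|<\epsilon\}$ equals $\A^{\sharp}$, but for $\epsilon>1/2$ it also contains $\{0,1\}$ (where $|\regressionFunction^g-1/2|=1/2$), giving mass $m w 2^{-l}+2/3$ rather than $m w 2^{-l}$. You flag this third regime explicitly, which is the right instinct. However, your claimed closure --- that $\marginConstant\geq 1$, $\xi^\marginExponent>2^{-\marginExponent}$ and $\marginalDistribution(\X)=1$ absorb it --- does not quite work: you need $\marginConstant\xi^\marginExponent\geq 1$ for $\xi$ just above $1/2$, i.e.\ $\marginConstant\geq 2^\marginExponent$, which does not follow from $\marginConstant\geq 1$ alone. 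It does follow from the constraint $\marginConstant\geq 4^\marginExponent$ that appears in the hypothesis of Theorem \ref{mainMinimaxThmLowerBound} (and which is the regime where this lemma is actually invoked), so when you write this up you should cite that constraint explicitly rather than $\marginConstant\geq 1$. With that small repair your argument is sound and, for property (B), slightly tighter than the proof in the paper.
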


\begin{proof}
\noindent Property \ref{classConditionalNoiseAssumption} is immediate from the fact that $\Prob^g[\noisy[Y]=Y]=1$.
\noindent Property \ref{marginAssumption} follows from the fact the construction of $\regressionFunction^g$. Indeed, for $\epsilon \in [\Delta/2,1)$ we have
\begin{align*}
\marginalDistribution\left(\left\lbrace x \in \X: 0<\left| \regressionFunction^g(x)-\frac{1}{2}\right|<\epsilon\right\rbrace\right) = \marginalDistribution\left(\A^{\sharp}\right) = m \cdot w \cdot 2^{-l} \leq \marginConstant \cdot (\Delta/2)^{\marginExponent} \leq \marginConstant \cdot \epsilon^{\marginExponent}.
\end{align*}
However, if $\epsilon \in (0,\Delta/2)$ then $\left\lbrace x \in \X: 0<\left| \regressionFunction^g(x)-\frac{1}{2}\right|<\epsilon\right\rbrace =\emptyset$.

\noindent Property \ref{holderAssumption} follows from the fact that if $x_0 \neq x_1 \in \X$ satisfy $\dist(x_0,x_1)<1$ then we must have $x_0, x_1 \in \A$ so
\begin{align*}
\left|\regressionFunction^g(x_0)-\regressionFunction^g(x_1)\right| \leq \Delta \leq \holderConstant\cdot 2^{-(l-1)\cdot(\holderExponent/\dimensionExponent)}\leq \holderConstant \cdot \dist(x_0,x_1)^{\holderExponent}.
\end{align*}

\noindent Property \ref{minimalMassAssumption} requires four cases. The first case is straightforward: If $x \in \{0,1\}$ then for any $r\in (0,1)$ we have $\marginalDistribution(B_r(x)) = \frac{1}{3}=\densityFunction(x)\geq \densityFunction(x) \cdot r^{\dimensionExponent}$. Next we consider $x=(a_q)_{q \in [l]} \in \A$ with $r \in \left(2^{(1-l)/\dimensionExponent},1\right)$. Choose an integer $p \in [l-1]$ with $2^{-p/\dimensionExponent}< r \leq 2^{(1-p)/\dimensionExponent}$.  Then by the construction of the metric $\dist$ we have
\begin{align*}
B_r(x) & \supset \left\lbrace \tilde{\vec{a}} \in \A: \tilde{a}_q = a_q\text{ for all } q\leq p\right\rbrace\\
& \supset \left\lbrace \tilde{\vec{a}} \in \A: \tilde{a}_q = a_q\text{ for all } q\leq p \text{ and }a_l = 0\right\rbrace\\
& = \left\lbrace \tilde{\vec{a}} \in \A\backslash \A^{\sharp}: \tilde{a}_q = a_q\text{ for all } q\leq p \text{ and }a_l = 0\right\rbrace.
\end{align*}
Moreover, the above set is of cardinality $2^{l-p-1}$. Hence, the cardinality of $B_r(x) \cap \left(\A\backslash \A^{\sharp}\right)$ is at least $2^{l-p-1}$. Since we have $\marginalDistribution(\{\tilde{\vec{a}}\}) \geq (1/6) \cdot 2^{-l}$ for $\tilde{\vec{a}} \in \A \backslash \A^{\sharp}$ it follows that
\begin{align*}
\marginalDistribution\left(B_r(x)\right) \geq \left(2^{l-p-1}\right) \cdot \left( (1/6) \cdot 2^{-l}\right) = \frac{1}{24}\cdot 2^{(1-p)}\geq \frac{1}{24} \cdot r^{\dimensionExponent} \geq \densityFunction(x) \cdot r^{\dimensionExponent}.
\end{align*}
The third case is where $x \in \A^{\sharp}$ and $r \in \left(0,2^{(1-l)/\dimensionExponent}\right]$, in which case we have
\begin{align*}
\marginalDistribution\left(B_r(x)\right) \geq \marginalDistribution\left(\{x\}\right) = w \cdot 2^{-l} \geq \frac{w}{2}\cdot r^{\dimensionExponent} \geq \densityFunction(x) \cdot r^{\dimensionExponent}.
\end{align*}
Finally, we consider $x \in \A\backslash \A^{\sharp}$ and $r \in \left(0,2^{(1-l)/\dimensionExponent}\right]$, in which case 
\begin{align*}
\marginalDistribution\left(B_r(x)\right) \geq \marginalDistribution\left(\{x\}\right) = \frac{1}{6} \cdot 2^{-l} \geq \frac{1}{12}\cdot r^{\dimensionExponent} \geq \densityFunction(x) \cdot r^{\dimensionExponent}.
\end{align*}

\noindent Property \ref{tailAssumption} requires two cases. If $\epsilon \in (w/8,\tailsThreshold)$ then
\begin{align*}
\marginalDistribution\left(\left\lbrace x \in \X: \densityFunction(x)<\epsilon\right\rbrace\right) =\marginalDistribution\left(\A^{\sharp}\right) = m \cdot w \cdot 2^{-l} \leq \tailsConstant \cdot \left(\frac{w}{8}\right)^{\tailsExponent}\leq  \tailsConstant \cdot \epsilon^{\tailsExponent}.   
\end{align*}
However, if $\epsilon \leq w/8$ then $\left\lbrace x \in \X: \densityFunction(x)<\epsilon\right\rbrace=\emptyset$.

\noindent Property \ref{quantitativeRangeAssumption} is straightforward since $\regressionFunction^{\fastLimitsExponent}(0)=0$ and $\regressionFunction^{\fastLimitsExponent}(1)=1$, so for $\epsilon \in \left(0, \fastLimitsThreshold\right)$ we have
\[\max\left\lbrace \inf_{x \in \suppMarginalDistribution}\left\lbrace \regressionFunction(x): \densityFunction(x) >\epsilon \right\rbrace, \inf_{x \in \suppMarginalDistribution}\left\lbrace 1-\regressionFunction(x): \densityFunction(x) >\epsilon \right\rbrace\right\rbrace=0 \leq \fastLimitsConstant\cdot \epsilon^{\fastLimitsExponent}.\]
\end{proof}

We now recall some useful terminology due to \cite{audibert2004classification}.

\begin{defn}[Probability hypercube]\label{defOfAudibertProbHypercube} Take $m \in \N$, $v \in (0,1]$ and $\Delta \in (0,1]$. Suppose that $\X$ is a metric space with a partition  $\left\lbrace \X_0, \cdots, \X_m \right\rbrace$ into $m+1$ disjoint sets. Let $\marginalDistribution$ be a Borel measure on $\X$ such that for each $j \in \{1,\cdots, m\}$, $\marginalDistribution(\X_j)=v$. Let $\xi: \X\rightarrow [0,1]$ be a function such that for each $j \in \{1,\cdots, m\}$ and $x \in \X_j$, $\xi(x)= \Delta$. Let $\sigma_0$ and for each $\vec{\sigma} = \left(\sigma_j\right)_{j \in [m]} \in \{-1,+1\}^m$ we define an associated regression function $\regressionFunction_{\vec{\sigma}}:\X \rightarrow [0,1]$ by
\begin{align*}
\regressionFunction_{\vec{\sigma}}(x) = \frac{1+\sigma_j \cdot \xi(x)}{2} \text{  for  }x \in \X_j.
\end{align*}
For each $\vec{\sigma} = \left(\sigma_j\right)_{j \in [m]} \in \{-1,+1\}^m$ we let $\overline{\Prob}^{\vec{\sigma}}$ be the unique probability measure on $\X \times \Y$ such that $\overline{\Prob}^{\vec{\sigma}}[X \in A] = \marginalDistribution(A)$ for all Borel sets $A\subset \X$ and $\overline{\Prob}^{\vec{\sigma}}[Y=1|X=x]= \regressionFunction_{\vec{\sigma}}(x)$ for $x \in \X$. A family of distributions $\left\lbrace \overline{\Prob}^{\vec{\sigma}} : \vec{\sigma} = \left(\sigma_j\right)_{j \in [m]} \in \{-1,+1\}^m \right\rbrace$ of this form is referred to as a $(m,v,\Delta)$-hypercube.
\end{defn}

We shall utilise the following useful variant of Assouad's lemma from \cite[Lemma 5.1]{audibert2004classification}.

\begin{lemma}[Audibert's lemma]\label{audibertsVersionOfAssouadsLemmaForClassification} Let $\overline{\distributionClass}$ be a set of distributions containing a $(m,v,\Delta)$. Then for any classifier $\classifier$ measureable with respect to the sample $\sample= \left\lbrace (X_i,Y_i)\right\rbrace$ there exists a distribution $\overline{\Prob} \in \overline{\distributionClass}$ with 
\begin{align*}
\overline{\E}^{\otimes n}\left[ \risk\left(\classifier\right) \right] -  \risk\left(\oracle\right) \geq \frac{1-\Delta\cdot \sqrt{n v}}{2} \cdot (m v \Delta),
\end{align*}
where $\overline{\E}^{\otimes n}$ denotes the expectation over all samples $\sample = \left\lbrace (X_i,Y_i)\right\rbrace \in \left(\X\times \Y\right)^n$ with $(X_i, Y_i) \sim \overline{\Prob}$ sampled independently.
\end{lemma}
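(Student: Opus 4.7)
The plan is to apply Audibert's variant of Assouad's lemma (Lemma \ref{audibertsVersionOfAssouadsLemmaForClassification}) to the family of distributions $\{\Prob^g : g\in\G\}$ constructed immediately before the statement. Since each $\Prob^g$ satisfies $\Prob^g[\noisy[Y]=Y]=1$, we have $\noisyDistribution^g=\cleanDistribution^g$, so expectation over $\noisySample$ coincides with expectation over the clean sample $\sample$ and the lemma applies directly to classifiers measurable with respect to $\noisySample$. The key observation is that, with respect to the partition $\{\X\setminus\A^{\sharp}\}\cup\{\{x\}:x\in\A^{\sharp}\}$ of $\X$, the family $\{\Prob^g\}_{g\in\G}$ is exactly an $(m,\,w\cdot 2^{-l},\,\Delta)$-hypercube in the sense of Definition \ref{defOfAudibertProbHypercube}: every singleton $\{x\}\subset\A^{\sharp}$ carries $\marginalDistribution$-mass $w\cdot 2^{-l}$, the regression function on the complement is the same across all $g\in\G$, and on $\A^{\sharp}$ it reads $\regressionFunction^g(x)=(1+g(x)\Delta)/2$. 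Lemma \ref{propertiesCheckLemmaForNoNoiseLB} guarantees $\Prob^g\in\distributionClass(\Gamma)$ for every $g\in\G$ whenever the quadruple $(l,m,w,\Delta)$ satisfies the six itemised constraints.

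Combining these ingredients, Lemma \ref{audibertsVersionOfAssouadsLemmaForClassification} produces some $\Prob^{g^\ast}\in\distributionClass(\Gamma)$ with
\[
\ExpectationOverNoisySamples\left[\risk(\classifier)\right]-\risk(\oracle)\;\geq\;\frac{1-\Delta\sqrt{n\cdot w\cdot 2^{-l}}}{2}\cdot m\cdot w\cdot 2^{-l}\cdot \Delta.
\]
The proof then reduces to a deterministic optimisation: choose $(l,m,w,\Delta)$, as appropriate powers of $n$, so as to keep the Assouad factor $1-\Delta\sqrt{n\cdot w\cdot 2^{-l}}$ bounded away from zero while maximising $m\cdot w\cdot 2^{-l}\cdot \Delta$, subject to the six admissibility conditions of Lemma \ref{propertiesCheckLemmaForNoNoiseLB}.

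I would saturate the H\"{o}lder relation $\Delta\asymp 2^{-l\holderExponent/\dimensionExponent}$, the margin relation $m\cdot w\cdot 2^{-l}\asymp \Delta^{\marginExponent}$, the tail relation $m\cdot w\cdot 2^{-l}\asymp (w/8)^{\tailsExponent}$ (which forces $w\asymp \Delta^{\marginExponent/\tailsExponent}$), and the Assouad condition $\Delta^{2}\cdot n\cdot w\cdot 2^{-l}\asymp 1$. Eliminating $w$, $2^l$ and $m$ via these relations collapses the system to a single scalar balance $n\cdot \Delta^{\marginExponent/\tailsExponent+\dimensionExponent/\holderExponent+2}\asymp 1$, which delivers
\[
\Delta\;\asymp\; n^{-\holderExponent\tailsExponent/\bigl(\tailsExponent(2\holderExponent+\dimensionExponent)+\marginExponent\holderExponent\bigr)}
\]
and consequently a lower bound of order $\Delta^{\marginExponent+1}$, precisely the advertised exponent. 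Assumption \ref{quantitativeRangeAssumption} is satisfied trivially for any $\fastLimitsExponent>0$ because $\regressionFunction^g(0)=0$ and $\regressionFunction^g(1)=1$ at the positive-density points $0$ and $1$, so the constructed family always lies in $\distributionClass(\Gamma)$ regardless of $\fastLimitsExponent$.

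The main obstacle is verifying that this choice of $(l,m,w,\Delta)$ is admissible across the full range of exponents, in particular checking the combinatorial bound $m\leq 2^{l-1}$ and the range restrictions $w\leq 1/3$, $\tailsThreshold\leq 1/24$. The hypothesis $\dimensionExponent\geq \marginExponent\holderExponent$ plays a crucial role in the favourable regime $\tailsExponent\geq 1$, where it forces the resulting exponent of $\Delta$ in the combinatorial inequality to be non-negative, so that for $\Delta$ sufficiently small all constraints hold up to harmless constant factors. In the heavier-tail regime $\tailsExponent<1$ one must instead relax the tail saturation and take $w$ as a larger constant multiple of $\Delta^{\marginExponent}$, re-balancing the coupled system; this still yields the same scalar balance equation above and hence the same rate. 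For the finite range of small $n$ where the asymptotic balance cannot be realised exactly, a trivial constant lower bound is absorbed into the final constant $c_0(\Gamma)$, which is assembled by tracking the multiplicative constants through the four saturation identities.
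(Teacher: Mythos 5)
Your proposal does not prove the statement; it \emph{applies} it. The statement you are asked to prove is Audibert's variant of Assouad's lemma, a self-contained result asserting that for any set of distributions containing an $(m,v,\Delta)$-hypercube, every classifier suffers excess risk at least $\frac{1}{2}(1-\Delta\sqrt{nv})\cdot mv\Delta$ in the worst case. Your first sentence already reveals the circularity: you announce ``\emph{The plan is to apply Audibert's variant of Assouad's lemma}'' to the family $\{\Prob^g\}_{g\in\G}$ -- that is precisely the lemma you were supposed to establish. What you have written is a proof outline for Proposition \ref{lowerBoundWithKnownNoise}, not for Lemma \ref{audibertsVersionOfAssouadsLemmaForClassification}. (Indeed the details you fill in -- identifying the hypercube structure inside $\{\Prob^g\}_{g\in\G}$, verifying $\Prob^g\in\distributionClass(\Gamma)$ via Lemma \ref{propertiesCheckLemmaForNoNoiseLB}, and tuning $(l,m,w,\Delta)$ to balance the constraints -- are all steps in the \emph{downstream} argument.)

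A genuine proof of the lemma would need to carry out the Assouad-type reduction: lower-bound the worst-case excess risk over all $2^m$ sign vectors $\vec\sigma$ by an average over those vectors, decompose that average into a sum of $m$ contributions corresponding to flipping a single coordinate $\sigma_j$, and for each coordinate reduce to a two-point hypothesis test whose error probability is controlled (typically via the total variation or Hellinger affinity between the $n$-fold products of the two flipped distributions). Since the only coordinate-$j$ difference between two neighbouring hypotheses is a regression shift of size $\Delta$ over a set of mass $v$, the affinity bound yields a factor of the form $1-\Delta\sqrt{nv}$, and the per-coordinate excess risk when misclassifying is $v\Delta$; summing over $m$ coordinates delivers the stated bound. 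None of this machinery appears in your proposal. The paper itself does not reprove the lemma -- it cites it directly from \cite[Lemma~5.1]{audibert2004classification} -- so if you cannot reproduce the Assouad argument, the honest move is to likewise treat the statement as imported, rather than to present an argument that presupposes it.
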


We are now in a position to complete the proof of  Proposition \ref{lowerBoundWithKnownNoise}.

\paragraph{Proof of Proposition \ref{lowerBoundWithKnownNoise}} First note that for any class of distributions $\distributionClass$ the minimax rate,
\[\inf_{\classifier}\left\lbrace \sup_{\Prob \in \distributionClass}\left\lbrace 
\ExpectationOverNoisySamples\left[ \risk\left(\classifier\right) \right] -  \risk\left(\oracle\right) \right\rbrace \right\rbrace,\]
 is monotonically non-increasing with $n$. Hence, it suffices to show that there exists $N_0 \in \N$ and $C_0 \in (0,\infty)$, both depending solely upon $\Gamma$, such that for any $n \in \N$ and any classifier $\classifier$, measurable with respect to $\noisySample$, there exists $\Prob \in \distributionClass(\Gamma)$ with $\cleanDistribution = \noisyDistribution$ and
\begin{align}\label{keyClaimInlowerBoundWithKnownNoise} 
\ExpectationOverNoisySamples\left[ \risk\left(\classifier\right) \right] -  \risk\left(\oracle\right) \geq C_0 \cdot n^{\frac{\holderExponent \tailsExponent(\marginExponent+1)}{\tailsExponent(2\holderExponent+\dimensionExponent)+\marginExponent\holderExponent} }.
\end{align}
Proposition \ref{lowerBoundWithKnownNoise} will then follow with an appropriately modified constant. To prove the claim (\ref{keyClaimInlowerBoundWithKnownNoise}) consider the class of measures $\left\lbrace \Prob^g\right\rbrace_{g \in \G}$ with some parameters $l \in \N$ with $l \geq 2$, $w \leq 1/3$, $\Delta \leq 1$, $m \leq 2^{l-1}$ to be specified shortly. We observe that the set $\left\lbrace \Prob^g_{\text{clean}}\right\rbrace_{g \in \G}$ of corresponding clean distributions is an $(m,v,\Delta)$ hyper cube with $v = w\cdot 2^{-l}$. To see this first let $\left\lbrace \X_j \right\rbrace_{j = 1}^m$ be a partition of $\A^{\sharp}$ into singletons and let $\X_0 = \X \backslash \A^{\sharp}$. Note that this is possible since $\A^{\sharp}$ is of cardinality $m$. Moreover, we have $\marginalDistribution(\X_j) =v = w\cdot 2^{-l}$ for each $j \in [m]$. Define $\xi:\X \rightarrow [0,1]$ by
\begin{align*}
\xi(x) = &\begin{cases} -1 &\text{ if } x= 0\\
+1 &\text{ if } x=1\\
\Delta &\text{ if } x \in \A^{\sharp}\\
0 &\text{ if }x \in \A\backslash \A^{\sharp}.
\end{cases}
\end{align*}
It follows that the set of clean distributions $\left\lbrace \Prob^g_{\text{clean}}\right\rbrace_{g \in \G}$ is precisely the $(m,v,\Delta)$ constructed in Definition \ref{defOfAudibertProbHypercube}. Hence, by applying Lemma \ref{audibertsVersionOfAssouadsLemmaForClassification} we see that for some $\Prob \in \left\lbrace \Prob^g\right\rbrace_{g \in \G}$ we have
\begin{align}\label{audibertsLemmaAppliedLB}
\ExpectationOverNoisySamples\left[ \risk\left(\classifier\right) \right] -  \risk\left(\oracle\right)=\ExpectationOverCleanSamples\left[ \risk\left(\classifier\right) \right] -  \risk\left(\oracle\right) \geq \frac{1-\Delta\cdot \sqrt{n v}}{2} \cdot (m v \Delta).
\end{align}
Here we have used the fact that for $g \in \G$ we have $\Prob^g_{\text{clean}}=\Prob^g_{\text{corr}}$. 

To complete the proof we select the parameters $l \in \N$ with $l \geq 2$, $w \leq 1/3$, $\Delta \leq 1$, $m \leq 2^{l-1}$ so as to approximately maximise the lower bound in (\ref{audibertsLemmaAppliedLB}) whilst satisfying the conditions of Lemma \ref{propertiesCheckLemmaForNoNoiseLB}. To do so we take $l =  \left\lceil \frac{\dimensionExponent \tailsExponent}{\tailsExponent(2\holderExponent+\dimensionExponent)+\marginExponent\holderExponent} \cdot \frac{\log(2n)}{\log 2}\right\rceil+1$ and $\Delta = \left(2^{-l}\right)^{\frac{\holderExponent}{\dimensionExponent}}$ so that $l \geq 2$, $\Delta \leq 1$ and 
\begin{align}\label{upperLowerBoundsDeltaPfOfNoiseFreeLB}
\frac{1}{4^{{\frac{\holderExponent}{\dimensionExponent}}}}\cdot \left(\frac{1}{2n}\right)^{\frac{\holderExponent \tailsExponent}{\tailsExponent(2\holderExponent+\dimensionExponent)+\marginExponent\holderExponent} }\leq \Delta \leq \left(\frac{1}{2n}\right)^{\frac{\holderExponent \tailsExponent}{\tailsExponent(2\holderExponent+\dimensionExponent)+\marginExponent\holderExponent} }.
\end{align}
Let $w= \frac{1}{3} \cdot \Delta^{\frac{\marginExponent}{\tailsExponent}}$ and $m = \left\lfloor \min\left\lbrace \frac{1}{2},\frac{1}{2^{\marginExponent}},\frac{1}{24^{\tailsExponent}}\right\rbrace \cdot \Delta^{-\frac{\marginExponent\holderExponent+\tailsExponent(\dimensionExponent-\marginExponent\holderExponent)}{\tailsExponent \holderExponent}}\right\rfloor$. One can verify that with these choices we have $m\leq 2^{l-1}$, $\frac{m\cdot w}{2^l}\leq \min\left\lbrace \left(\frac{\Delta}{2}\right)^{\marginExponent},\left(\frac{w}{8}\right)^{\tailsExponent}\right\rbrace$ and $\Delta \leq 2^{-(l-1)\cdot (\holderExponent/\dimensionExponent)}$. Thus, by Lemma \ref{propertiesCheckLemmaForNoNoiseLB} we have  $\Prob^g \in \distributionClass\left(\Gamma\right)$ for all $g \in \G$. 

Since $\marginExponent \cdot \holderExponent \leq \dimensionExponent$ and $\Delta$ decreases towards zero, there exists $N_0 \in \N$, determined by $\Gamma$, such that for all $n \geq N_0$ we have $\min\left\lbrace \frac{1}{2},\frac{1}{2^{\marginExponent}},\frac{1}{24^{\tailsExponent}}\right\rbrace \cdot \Delta^{-\frac{\marginExponent\holderExponent+\tailsExponent(\dimensionExponent-\marginExponent\holderExponent)}{\tailsExponent \holderExponent}} \geq 2$. We have $v=w\cdot 2^{-l}=\frac{1}{3} \cdot \Delta^{\frac{\marginExponent \holderExponent+\tailsExponent \dimensionExponent}{\tailsExponent\holderExponent}}$, so by (\ref{upperLowerBoundsDeltaPfOfNoiseFreeLB})
$\Delta^2 \cdot n \cdot v \leq 1/6$. Thus, by (\ref{audibertsLemmaAppliedLB}) we see that there exists a constants $K_j$, depending only upon $\dimensionExponent,(\marginExponent,\marginConstant),(\holderExponent,\holderConstant), (\tailsExponent,\tailsThreshold,\tailsConstant), ( \fastLimitsExponent, \fastLimitsThreshold, \fastLimitsConstant)$, such that for all $n \geq N_0$ and any $\noisySample$ measureable classifier there exists a distribution $\Prob \in \left\lbrace \Prob^g \right\rbrace_{ g \in \G}\subset \distributionClass\left(\Gamma\right)$ with
\begin{align*}
\ExpectationOverNoisySamples\left[ \risk\left(\classifier\right) \right] -  \risk\left(\oracle\right) &\geq \KConstant \cdot (mv\Delta) \geq  \KConstant \cdot \Delta^{-\frac{\marginExponent\holderExponent+\tailsExponent(\dimensionExponent-\marginExponent\holderExponent)}{\tailsExponent \holderExponent}} \cdot  \Delta^{\frac{\marginExponent \holderExponent+\tailsExponent \dimensionExponent}{\tailsExponent\holderExponent}} \cdot \Delta \\& = \addtocounter{Kcounter}{-1} \KConstant \cdot \Delta^{1+\marginExponent} \geq \KConstant \cdot \left(\frac{1}{n}\right)^{\frac{\holderExponent \tailsExponent(\marginExponent+1)}{\tailsExponent(2\holderExponent+\dimensionExponent)+\marginExponent\holderExponent} }.
\end{align*}
This proves the claim (\ref{keyClaimInlowerBoundWithKnownNoise}) and completes the proof of Proposition \ref{lowerBoundWithKnownNoise}.
\QEDA

We can now complete the proof of Theorem \ref{mainMinimaxThmLowerBound}.
\paragraph{Proof of Theorem \ref{mainMinimaxThmLowerBound}}
Theorem \ref{mainMinimaxThmLowerBound} follows immediately from Propositions \ref{lowerBoundWithUnknownNoise} and \ref{lowerBoundWithKnownNoise}.
\QEDA

\section{Proof of the upper bound}\label{proofOfHPUpperBoundAppendix}

In this section we prove Theorem \ref{mainUpperBoundThm}. We with an elementary lemma.

\begin{lemma}\label{elementaryRatioLemma} Suppose that $\estProbFlipZeroToOne, \estProbFlipOneToZero \in \left[0,1\right)$ with $\estProbFlipZeroToOne+\estProbFlipOneToZero<1$. Let $\noisy[{\est[\regressionFunction]}]:\X \rightarrow [0,1]$ be an estimate of $\noisyRegressionFunction$ and define $\est[\regressionFunction] :\X \rightarrow [0,1]$ by $\est[\regressionFunction](x):= \left(\noisy[{\est[\regressionFunction]}](x)-\estProbFlipZeroToOne\right)/\left(1-\estProbFlipZeroToOne-\estProbFlipOneToZero\right)$. Suppose that $\probFlipZeroToOne+\probFlipOneToZero<1$ and $\max\left\lbrace\left| \estProbFlipZeroToOne-\probFlipZeroToOne \right|, \left| \estProbFlipOneToZero-\probFlipOneToZero \right| \right\rbrace  \leq \left(1-\probFlipZeroToOne-\probFlipOneToZero\right)/4$. Then for all $x \in \X$ we have
\begin{align*}
\left| \est[\regressionFunction](x)-\regressionFunction(x)\right| \leq 8 \cdot \left(1-\probFlipZeroToOne-\probFlipOneToZero\right)^{-1} \cdot \max\left\lbrace \left| \estNoisyRegressionFunction(x) - \noisyRegressionFunction(x)\right|, \left| \estProbFlipZeroToOne-\probFlipZeroToOne \right|, \left| \estProbFlipOneToZero-\probFlipOneToZero \right| \right\rbrace.
\end{align*}
\end{lemma}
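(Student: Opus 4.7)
The plan is straightforward algebraic manipulation. Set $D := 1-\probFlipZeroToOne-\probFlipOneToZero$ and $\est{D} := 1-\estProbFlipZeroToOne-\estProbFlipOneToZero$. By inverting the relation (\ref{writingCorruptedRegressionFunctionInTermsOfRegressionFunctionEq}), $\regressionFunction(x) = (\noisyRegressionFunction(x)-\probFlipZeroToOne)/D$, and by definition $\est[\regressionFunction](x) = (\estNoisyRegressionFunction(x)-\estProbFlipZeroToOne)/\est{D}$. Let $M$ denote the maximum on the right-hand side of the bound to be proved. First I would record two easy consequences of the hypothesis $\max\{|\estProbFlipZeroToOne-\probFlipZeroToOne|,|\estProbFlipOneToZero-\probFlipOneToZero|\} \leq D/4$: namely, $|\est{D}-D| \leq |\estProbFlipZeroToOne-\probFlipZeroToOne|+|\estProbFlipOneToZero-\probFlipOneToZero| \leq D/2$ (so $\est{D} \geq D/2 > 0$), and the telescoping bound
\[
|\estNoisyRegressionFunction(x)-\estProbFlipZeroToOne| \leq |\estNoisyRegressionFunction(x)-\noisyRegressionFunction(x)|+|\noisyRegressionFunction(x)-\probFlipZeroToOne|+|\probFlipZeroToOne-\estProbFlipZeroToOne| \leq 2M + D \leq \tfrac{3}{2}D,
\]
using the crucial fact that $\noisyRegressionFunction(x)-\probFlipZeroToOne = D\cdot \regressionFunction(x) \in [0,D]$.

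Next, I would split the difference through a common-denominator trick:
\[
\est[\regressionFunction](x)-\regressionFunction(x) = \bigl(\estNoisyRegressionFunction(x)-\estProbFlipZeroToOne\bigr)\left(\frac{1}{\est{D}}-\frac{1}{D}\right) + \frac{\bigl(\estNoisyRegressionFunction(x)-\noisyRegressionFunction(x)\bigr)-\bigl(\estProbFlipZeroToOne-\probFlipZeroToOne\bigr)}{D}.
\]
The second term is bounded in absolute value by $2M/D$ by the triangle inequality. For the first term, $|1/\est{D}-1/D| = |D-\est{D}|/(\est{D}\cdot D) \leq 2M/(\est{D} D) \leq 4M/D^{2}$, and combining with the $\tfrac{3}{2}D$ bound on the numerator gives $(3D/2)\cdot(2M/(\est{D}\,D)) = 3M/\est{D} \leq 6M/D$. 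Adding the two terms yields $|\est[\regressionFunction](x)-\regressionFunction(x)| \leq 8M/D$, as required.

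There is no real obstacle here; the only delicate point is resisting the temptation to bound $|\estNoisyRegressionFunction(x)-\estProbFlipZeroToOne|$ crudely by $1$, which would only give an $O(M/D^{2})$ bound. Exploiting the structural identity $\noisyRegressionFunction(x)-\probFlipZeroToOne = D\cdot\regressionFunction(x) \in [0,D]$ is what yields the sharper $O(M/D)$ dependence that the lemma asserts.
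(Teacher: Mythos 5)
There is a genuine gap at the step ``$2M+D \leq \tfrac{3}{2}D$''. This inequality requires $M \leq D/4$, but $M$ includes the term $\left|\estNoisyRegressionFunction(x)-\noisyRegressionFunction(x)\right|$, and the lemma's hypothesis only bounds $\left|\estProbFlipZeroToOne-\probFlipZeroToOne\right|$ and $\left|\estProbFlipOneToZero-\probFlipOneToZero\right|$ by $D/4$; it places no smallness restriction on the regression-estimation error at $x$. Without the $\tfrac{3}{2}D$ control on the numerator $\left|\estNoisyRegressionFunction(x)-\estProbFlipZeroToOne\right|$, your first term contributes an $O(M^2/D^2)$ piece, which exceeds the claimed $8M/D$ precisely when $M > D/4$ — the very case you have not ruled out. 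The clean bound $\left|\est[\regressionFunction](x)\right| \leq 1$ does not rescue this either, since $\est[\regressionFunction]$ as defined in the lemma is an unclipped ratio.

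The fix — and what the paper in effect does — is to put the \emph{clean} quantity, rather than the estimated one, in the cross-term. Writing $D = 1-\probFlipZeroToOne-\probFlipOneToZero$ and $\hat{D} = 1-\estProbFlipZeroToOne-\estProbFlipOneToZero$, split as
\begin{align*}
\est[\regressionFunction](x)-\regressionFunction(x) = \left(\noisyRegressionFunction(x)-\probFlipZeroToOne\right)\left(\frac{1}{\hat{D}}-\frac{1}{D}\right) + \frac{\left(\estNoisyRegressionFunction(x)-\noisyRegressionFunction(x)\right)-\left(\estProbFlipZeroToOne-\probFlipZeroToOne\right)}{\hat{D}}.
\end{align*}
Here $\left|\noisyRegressionFunction(x)-\probFlipZeroToOne\right| = D\cdot\regressionFunction(x) \leq D$ \emph{unconditionally}, so the first term is at most $D\cdot\frac{2M}{\hat{D}D} = \frac{2M}{\hat{D}} \leq \frac{4M}{D}$, the second is at most $\frac{2M}{\hat{D}} \leq \frac{4M}{D}$, and you get $\frac{8M}{D}$ with no further assumption on $M$. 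The paper packages this as the elementary fact that for $\hat{a},a\in[-1,1]$, $\hat{b},b>0$ with $\left|\hat{b}-b\right|\leq b/2$ and $\left|a/b\right|\leq 1$ one has $\left|\hat{a}/\hat{b}-a/b\right| \leq (4/b)\max\{\left|\hat{a}-a\right|,\left|\hat{b}-b\right|\}$; the hypothesis $\left|a/b\right|\leq 1$ is exactly $\regressionFunction(x)\in[0,1]$, which is structural rather than a condition on the estimation error, and this is what makes the argument close. You correctly identified $\noisyRegressionFunction(x)-\probFlipZeroToOne = D\regressionFunction(x)\in[0,D]$ as the crucial identity, but then applied it to the wrong factor.
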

\begin{proof}
An elementary computation shows that given $\hat{a},a \in [-1,1]$ and $\hat{b},b \in \left(0,\infty\right)$ with $|\hat{b}-b| \leq b/2$ and $|a/b|\leq 1$ we have
\begin{align*}
\left|\frac{\hat{a}}{\hat{b}}-\frac{a}{b}\right| \leq \frac{4}{b}\cdot \max\left\lbrace |\hat{a}-a|, |\hat{b}-b| \right\rbrace.
\end{align*}
The lemma now follows from $\noisyRegressionFunction(x) = \left(1-\probFlipZeroToOne-\probFlipOneToZero\right) \cdot \regressionFunction(x)+ \probFlipZeroToOne$ (\ref{writingCorruptedRegressionFunctionInTermsOfRegressionFunctionEq}) by taking $\hat{a} = \noisy[\hat{\regressionFunction}](x)-\estProbFlipZeroToOne$, $a =\noisyRegressionFunction(x)-\probFlipZeroToOne$,\hspace{3mm}   $\hat{b} = 1-\estProbFlipZeroToOne-\estProbFlipOneToZero$ and $b = 1-\probFlipZeroToOne-\probFlipOneToZero$.
\end{proof}

\paragraph{Proof of Theorem \ref{mainUpperBoundThm}} Throughout the proof we let $K_l$ denote constants whose value depends solely upon
$\dimensionExponent,(\marginExponent,\marginConstant),(\holderExponent,\holderConstant), (\tailsExponent,\tailsThreshold,\tailsConstant), ( \fastLimitsExponent, \fastLimitsThreshold, \fastLimitsConstant)$. First we introduce a data-dependent subset $\goodDeltaSet\subset \X$ consisting of points where $\noisy[{\est[\regressionFunction]}](x)$ provides a good estimate of $\noisyRegressionFunction(x)$,
\begin{align*}
\goodDeltaSet:= \left\lbrace x \in \X: \left| \noisy[{\est[\regressionFunction]}](x)-\noisyRegressionFunction(x) \right|\leq (8 \sqrt{2}) \cdot   {\holderConstant}^{\frac{\dimensionExponent}{2\holderExponent+\dimensionExponent}} \cdot\left( \frac{\log(12n/\delta^2)}{\densityFunction(x)\cdot n} \right)^{\frac{\holderExponent}{2\holderExponent+\dimensionExponent}}\right\rbrace.
\end{align*}
By (\ref{writingCorruptedRegressionFunctionInTermsOfRegressionFunctionEq}) combined with the H\"{o}lder assumption (Assumption \ref{holderAssumption}) on $\regressionFunction$, $\noisyRegressionFunction$ also satisfies the H\"{o}lder assumption with parameters $(\holderExponent, \holderConstant)$. By Theorem \ref{knnRegressionBoundLepskiK} we have $\ExpectationOverNoisySamples\left[\one\left\lbrace x \notin \goodDeltaSet\right\rbrace\right] \leq \delta^2/3$, for each $x \in \suppMarginalDistribution$, where $\ExpectationOverNoisySamples$ denote the expectation over the corrupted sample $\noisySample$. Hence, by Fubini's theorem we have
\begin{align*}
\ExpectationOverNoisySamples\left[\marginalDistribution\left(\X\backslash \goodDeltaSet \right)\right]=\ExpectationOverNoisySamples\left[\int\one\left\lbrace x \notin \goodDeltaSet\right\rbrace d\marginalDistribution(x)\right]=\int\ExpectationOverNoisySamples\left[\one\left\lbrace x \notin \goodDeltaSet\right\rbrace\right]d\marginalDistribution(x) \leq \delta^2/3.
\end{align*}
Hence, by Markov's inequality we have $\marginalDistribution\left(\X\backslash \goodDeltaSet \right)\leq \delta$ with probability at most $1-\delta/3$ over $\noisySample$. Now let $\epsDeltaN:= \log(n/\delta)/n$. Recall that $M(f)$ denotes the maximum of an arbitrary function $f$. By  (\ref{writingCorruptedRegressionFunctionInTermsOfRegressionFunctionEq}) we have $\probFlipZeroToOne = 1-M(1-\noisyRegressionFunction)$ and $\probFlipOneToZero = 1-M(\noisyRegressionFunction)$. Hence, by Theorem \ref{funcMaxThm} both of the following bounds hold with probability at least $1-2\delta/3$ over $\noisySample$,
\begin{align} 
\left| \estProbFlipZeroToOne-\probFlipZeroToOne\right| &= \left|\hat{M}_{{n},{\delta/3}}\left(1-\noisy[\regressionFunction]\right) -M(1-\noisyRegressionFunction)\right| &\leq   \KConstant \cdot \epsDeltaN^{\frac{\fastLimitsExponent \holderExponent}{\fastLimitsExponent(2\holderExponent+\dimensionExponent)+\holderExponent}},\nonumber \\ \addtocounter{Kcounter}{-1}
\left| \estProbFlipOneToZero-\probFlipOneToZero\right|  &= \left|\hat{M}_{{n},{\delta/3}}\left(\noisy[\regressionFunction]\right) -M(\noisyRegressionFunction)\right|  &\leq   \KConstant \cdot \epsDeltaN^{\frac{\fastLimitsExponent \holderExponent}{\fastLimitsExponent(2\holderExponent+\dimensionExponent)+\holderExponent}}.\label{noiseProbEstBoundsInMainUpperBound}
\end{align}
Thus, applying the union bound once again we have both $\marginalDistribution\left(\X\backslash \goodDeltaSet\right) \leq \delta$ and the two bounds in (\ref{noiseProbEstBoundsInMainUpperBound}), simultaneously, with probability at least $1-\delta$ over $\noisySample$. Hence, to complete the proof of Theorem \ref{mainUpperBoundThm} it suffices to assume $\marginalDistribution\left(\X\backslash \goodDeltaSet\right) \leq \delta$ and (\ref{noiseProbEstBoundsInMainUpperBound}), and deduce the following bound,
\begin{align}\label{conclusionInRiskBoundMaxFormulation}
\risk\left(\est[\decisionRule]_{n,\delta} \right) - \risk\left( \oracle\right) \leq \frac{ \KConstant}{  \left(1-\probFlipZeroToOne-\probFlipOneToZero\right)^{1+\marginExponent}}  \cdot \max\left\lbrace \epsDeltaN^{\frac{\tailsExponent\holderExponent(\marginExponent+1)}{\tailsExponent(2\holderExponent+\dimensionExponent)+\marginExponent\holderExponent}},\epsDeltaN^{\frac{\fastLimitsExponent \holderExponent(\marginExponent+1)}{\fastLimitsExponent(2\holderExponent+\dimensionExponent)+\holderExponent}} \right\rbrace+\delta.
\end{align}

We can rewrite $\est[\decisionRule]_{n,\delta}:\X\rightarrow \Y$ as $\est[\decisionRule]_{n,\delta}(x)=\one\left\lbrace \est[\regressionFunction](x)\geq 1/2\right\rbrace$, where \[\est[\regressionFunction](x):= \left(\noisy[{\est[\regressionFunction]}](x)-\estProbFlipZeroToOne\right)/\left(1-\estProbFlipZeroToOne-\estProbFlipOneToZero\right).\]
Note also that $\regressionFunction(x) =\left(\noisyRegressionFunction(x)-\probFlipZeroToOne\right)/\left(1-\probFlipZeroToOne-\probFlipOneToZero\right)$. Hence, by Lemma \ref{elementaryRatioLemma} for $x \in \goodDeltaSet$ we have,
\begin{align}\label{regEstimatorBoundInGoodDeltaSet}
\left|\est[\regressionFunction](x) - \regressionFunction(x)\right| \leq \frac{ \KConstant}{ 1-\probFlipZeroToOne-\probFlipOneToZero}  \cdot \max\left\lbrace \left( \frac{\epsDeltaN}{\densityFunction(x)} \right)^{\frac{\holderExponent}{2\holderExponent+\dimensionExponent}},\epsDeltaN^{\frac{\fastLimitsExponent \holderExponent}{\fastLimitsExponent(2\holderExponent+\dimensionExponent)+\holderExponent}} \right\rbrace.
\end{align}
Choose $\theta^0_*(n,\delta):=\min\{\tailsThreshold,\epsDeltaN^{\frac{\holderExponent}{\fastLimitsExponent(2\holderExponent+\dimensionExponent)+\holderExponent}}\}$ so that 
\begin{align*}
\epsDeltaN^{\frac{\fastLimitsExponent \holderExponent}{\fastLimitsExponent(2\holderExponent+\dimensionExponent)+\holderExponent}} \leq \left( {\epsDeltaN}/{\theta^0_*(n,\delta)} \right)^{\frac{\holderExponent}{2\holderExponent+\dimensionExponent}}\leq \tailsThreshold^{-\frac{\holderExponent}{2\holderExponent+\dimensionExponent}} \cdot \epsDeltaN^{\frac{\fastLimitsExponent \holderExponent}{\fastLimitsExponent(2\holderExponent+\dimensionExponent)+\holderExponent}}.
\end{align*}
Let $\theta \in \left(0,\theta_*^0(n,\delta)\right]$ be a parameter, whose value will be made precise shortly. We define $\goodDeltaSet^0:=\left\lbrace x \in \goodDeltaSet: \densityFunction(x) \geq \theta \right\rbrace$ and for each $j \geq 1$ we let \[
\goodDeltaSet^j:=\left\lbrace x \in \goodDeltaSet: 2^{1-j}\cdot \theta >\densityFunction(x) \geq 2^{-j}\cdot\theta \right\rbrace.\]
Since $\est[\decisionRule]_{n,\delta}(x)=\one\left\lbrace \est[\regressionFunction](x)\geq 1/2\right\rbrace$ and $\oracle(x)= \one\left\lbrace \regressionFunction(x)\geq 1/2\right\rbrace$ we see that for $x \in \goodDeltaSet^j$ with $\est[\decisionRule]_{n,\delta}(x) \neq \oracle(x)$,
\addtocounter{Kcounter}{-1}
\begin{align}
\left|\regressionFunction(x)-\frac{1}{2}\right|&\leq \left|\est[\regressionFunction](x) - \regressionFunction(x)\right| \nonumber\\
&\leq   \KConstant\cdot  \left(1-\probFlipZeroToOne-\probFlipOneToZero\right)^{-1}  \cdot \max\left\lbrace \left( \frac{2^j\cdot \epsDeltaN}{\theta} \right)^{\frac{\holderExponent}{2\holderExponent+\dimensionExponent}}, \epsDeltaN^{\frac{\fastLimitsExponent \holderExponent}{\fastLimitsExponent(2\holderExponent+\dimensionExponent)+\holderExponent}}\right\rbrace\nonumber \\ \addtocounter{Kcounter}{-1}
&\leq   \KConstant\cdot  \left(1-\probFlipZeroToOne-\probFlipOneToZero\right)^{-1}  \cdot \left( \frac{2^j\cdot \epsDeltaN}{\theta} \right)^{\frac{\holderExponent}{2\holderExponent+\dimensionExponent}}.\label{distanceFromHalfOnJSetWhenPredictingBadly}
\end{align}
The second inequality follows from (\ref{regEstimatorBoundInGoodDeltaSet}) combined with the definition of $\goodDeltaSet^j$ and the third inequality follows from the fact that $\theta  \leq 2^j\cdot \theta_*^0(n,\delta)$, so $\epsDeltaN^{\frac{\fastLimitsExponent \holderExponent}{\fastLimitsExponent(2\holderExponent+\dimensionExponent)+\holderExponent}} \leq \left( {2^j\cdot\epsDeltaN}/{ \theta} \right)^{\frac{\holderExponent}{2\holderExponent+\dimensionExponent}}$. Hence, by the margin assumption we have
\begin{align}\label{excessRiskOnGoodDeltaSet0}
\int_{\goodDeltaSet^0}\left|\regressionFunction(x)-\frac{1}{2}\right| d\marginalDistribution(x) \cdot \one\left\lbrace \est[\decisionRule]_{n,\delta}(x) \neq \oracle(x) \right\rbrace& \leq\frac{ \KConstant}{  \left(1-\probFlipZeroToOne-\probFlipOneToZero\right)^{1+\marginExponent}}  \cdot \left( \frac{\epsDeltaN}{\theta} \right)^{\frac{\holderExponent(1+\marginExponent)}{2\holderExponent+\dimensionExponent}}.
\end{align}
By the tail assumption, for $j\geq 1$ we have $\marginalDistribution\left(\goodDeltaSet^j\right) \leq \tailsConstant \cdot \left( 2^{1-j}\cdot\theta\right)^{\tailsExponent}$ and so
\begin{align}\label{excessRiskOnGoodDeltaSetJ}
\int_{\goodDeltaSet^j}\left|\regressionFunction(x)-\frac{1}{2}\right| \cdot \one\left\lbrace \est[\decisionRule]_{n,\delta}(x) \neq \oracle(x) \right\rbrace d\marginalDistribution(x)& \leq\frac{ \KConstant \cdot 2^{-j\left(\tailsExponent-\frac{\holderExponent}{2\holderExponent+\dimensionExponent}\right)}} { 1-\probFlipZeroToOne-\probFlipOneToZero} \cdot \theta^{\tailsExponent}\cdot \left( \frac{ \epsDeltaN}{\theta} \right)^{\frac{\holderExponent}{2\holderExponent+\dimensionExponent}}.
\end{align}
Combining (\ref{excessRiskOnGoodDeltaSet0}) and (\ref{excessRiskOnGoodDeltaSetJ}) with $\marginalDistribution\left(\X\backslash\goodDeltaSet\right)\leq \delta$ we see that
\begin{align}
&\risk\left(\est[\decisionRule]_{n,\delta} \right) - \risk\left( \oracle\right)\nonumber\\ & = 2 \int \left|\regressionFunction(x)-\frac{1}{2}\right| \cdot \one\left\lbrace \est[\decisionRule]_{n,\delta}(x) \neq \oracle(x) \right\rbrace d\marginalDistribution(x)\nonumber\\
& \leq  2\cdot \sum_{j = 0}^{\infty}\int_{\goodDeltaSet^j}\left|\regressionFunction(x)-\frac{1}{2}\right| \cdot \one\left\lbrace \est[\decisionRule]_{n,\delta}(x)\neq \oracle(x) \right\rbrace d\marginalDistribution(x)+\marginalDistribution\left(\X\backslash\goodDeltaSet\right)\nonumber \\
& \leq \frac{ \KConstant}{  \left(1-\probFlipZeroToOne-\probFlipOneToZero\right)^{1+\marginExponent}}  \cdot \left(\left( \frac{\epsDeltaN}{\theta} \right)^{\frac{\holderExponent(1+\marginExponent)}{2\holderExponent+\dimensionExponent}}+\theta^{\tailsExponent}\cdot \left( \frac{ \epsDeltaN}{\theta} \right)^{\frac{\holderExponent}{2\holderExponent+\dimensionExponent}}\right)+\delta,\label{tailsBalancedRiskUpperBoundThm}
\end{align}
where we used the assumption that $\tailsExponent > \holderExponent/(2\holderExponent+\dimensionExponent)$ so $\sum_{j=1}^{\infty} 2^{-j\left(\tailsExponent-\frac{\holderExponent}{2\holderExponent+\dimensionExponent}\right)}<\infty$. To complete the proof we define ${\theta}_*^1(n,\delta) = \epsDeltaN^{\frac{\marginExponent\holderExponent}{\tailsExponent(2\holderExponent+\dimensionExponent)+\marginExponent\holderExponent}} \in (0,1)$ so that the two terms in (\ref{tailsBalancedRiskUpperBoundThm}) are balanced. If $\theta_*^1(n,\delta)\leq \theta_*^0(n,\delta)$ then (\ref{tailsBalancedRiskUpperBoundThm}) holds with $\theta = {\theta}_*^1(n,\delta)$, which implies (\ref{conclusionInRiskBoundMaxFormulation}) If on the other hand $\theta_*^1(n,\delta)> \theta_*^0(n,\delta)$ then with $\theta = \theta_*^0(n,\delta)$, (\ref{tailsBalancedRiskUpperBoundThm}) holds and the term $\left( {\epsDeltaN}/{\theta} \right)^{\frac{\holderExponent(1+\marginExponent)}{2\holderExponent+\dimensionExponent}}$ dominates the $\theta^{\tailsExponent}\cdot \left({ \epsDeltaN}/{\theta} \right)^{\frac{\holderExponent}{2\holderExponent+\dimensionExponent}}$ term, which also implies (\ref{conclusionInRiskBoundMaxFormulation}). This completes the proof of (\ref{conclusionInRiskBoundMaxFormulation}) which implies Theorem \ref{mainUpperBoundThm}.

\QEDA

\section{Proof of the regression bound}\label{proofOfKNNRegressionBoundsAppendix}

In this section we prove Theorem \ref{knnRegressionBoundLepskiK}. We begin by proving the supporting Lemmas \ref{closeNeighboursLemma} \& \ref{knnEstIsCloseToItsConditionalExpectationLemma} which were also used in the proof of Theorem \ref{funcMaxThm}. We then prove Theorem \ref{knnRegressionBoundGeneralk}, a high probability bound for a deterministic $k$. We then deduce Theorem \ref{knnRegressionBoundLepskiK}.

\begin{lem}\label{closeNeighboursLemma} Suppose that $\marginalDistribution$ satisfies the minimal mass assumption with parameters $(\dimensionExponent, \densityFunction)$. Given any $n \in \N$, $\delta \in (0,1)$, $x \in \X$ and $k \in \N \cap [8 \log(1/\delta), \densityFunction(x) \cdot (n/2)]$, with probability at least $1-\delta$ over $\sample_f$ we have
$\dist\left(x,X_{\tau_{n,k}(x)}\right)< \left( {2k}/\left({\densityFunction(x) \cdot n}\right) \right)^{\frac{1}{\dimensionExponent}}$.
\end{lem}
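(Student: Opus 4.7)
The plan is a standard multiplicative Chernoff argument combined with the minimal mass assumption. Set $r := \left(2k/(\densityFunction(x)\cdot n)\right)^{1/\dimensionExponent}$; the upper bound $k \leq \densityFunction(x)\cdot n/2$ ensures $r \leq 1$, so the minimal mass assumption applies and yields $\marginalDistribution(B_r(x)) \geq \densityFunction(x)\cdot r^{\dimensionExponent} = 2k/n$.

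Next I would translate the nearest-neighbour distance event into a count. Let $N := \sum_{i \in [n]} \one\{X_i \in B_r(x)\}$, which is Binomial with mean $\mu := n\cdot \marginalDistribution(B_r(x)) \geq 2k$. Then $\{\dist(x, X_{\tau_{n,k}(x)}) \geq r\}$ is exactly the event $\{N < k\}$, since the $k$-th nearest neighbour lies strictly outside $B_r(x)$ iff fewer than $k$ of the $X_i$ fall in $B_r(x)$.

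I would then bound $\Prob[N < k]$ by the multiplicative Chernoff inequality. Since $k \leq \mu/2$, we have $\Prob[N < k] \leq \Prob[N \leq \mu/2] \leq \exp(-\mu/8) \leq \exp(-k/4)$. The hypothesis $k \geq 8\log(1/\delta)$ comfortably gives $\exp(-k/4) \leq \delta$, which establishes the bound with probability at least $1-\delta$.

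The argument is entirely routine; the only point that requires a little care is verifying that the range of admissible $k$ makes both $r\leq 1$ (needed to invoke Assumption \ref{minimalMassAssumption}) and $\mu \geq 2k$ hold simultaneously, and that the constant $8$ in $k \geq 8\log(1/\delta)$ is compatible with the constant appearing in the Chernoff tail. No new ideas are needed beyond this bookkeeping.
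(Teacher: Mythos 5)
Your proof is correct and follows essentially the same route as the paper's: set $r=(2k/(\densityFunction(x)\cdot n))^{1/\dimensionExponent}$, invoke the minimal mass assumption (using $k\le\densityFunction(x)\cdot n/2$ so $r\le 1$) to get $\marginalDistribution(B_r(x))\ge 2k/n$, identify $\{\dist(x,X_{\tau_{n,k}(x)})\ge r\}$ with the binomial tail event $\{N<k\}$, and finish by the multiplicative Chernoff bound. Your intermediate tail bound $\exp(-k/4)$ is in fact slightly sharper than the paper's stated $\exp(-k/8)$, but both are comfortably below $\delta$ under $k\ge 8\log(1/\delta)$, so there is no material difference.
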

The proof of Lemma \ref{closeNeighboursLemma} is similar to \cite[Lemma 8]{chaudhuri2014rates}.

\paragraph{Proof of Lemma \ref{closeNeighboursLemma}} By the minimal mass assumption combined with the fact that $k \leq \densityFunction(x) \cdot (n/2)$, if we take $r=(2k/(\densityFunction(x)\cdot n))^{\frac{1}{\dimensionExponent}}$ then we have $\marginalDistribution(B_r(x))\geq 2k/n$. Let $\Prob_{\bm{X}}$ denote the marginal distribution over $\bm{X}=\{X_i\}_{i \in [n]}$. Applying the multicative Chernoff bound we have
\begin{align*}
\Prob_{\bm{X}}\left[ \dist(x,X_{\tau_{n,k}(x)}) \geq r\right]
 & = \Prob_{\bm{X}}\left[ \sum_{i=1}^n\one\left\lbrace X_i \in B_r(x)\right\rbrace < k  \right]\\
 & \leq \Prob_{\bm{X}}\left[ \sum_{i=1}^n\one\left\lbrace X_i \in B_r(x)\right\rbrace < \frac{n}{2} \cdot \marginalDistribution(B_r(x))  \right]\\
 & \leq \exp\left(-\frac{n}{8}\cdot \marginalDistribution(B_r(x))\right) \leq \exp(-k/8)\leq \delta.
\end{align*}
\QEDA

 Let $\bm{X}=\left\lbrace X_i\right\rbrace_{i \in [n]}$, $\bm{Z}=\left\lbrace Z_i\right\rbrace_{i \in [n]}$ and $\Prob_{\bm{Z}|\bm{X}}$ denote the conditional probability over $\bm{Z}$, conditioned on $\bm{X}$, with $(X_i,Z_i) \sim \Prob_f$. 

\begin{lem}\label{knnEstIsCloseToItsConditionalExpectationLemma} For all $n \in \N$, $\delta \in (0,1)$, $x \in \X$, $\bm{X} \in \X^n$ and $k \in [n]$ we have,
\begin{align*}
\Prob_{\bm{Z}|\bm{X}}\left[
\left|  \hat{f}_{n,k}(x)- \frac{1}{k}\sum_{q \in [k]}f\left(X_{\tau_{n,q}(x)}\right)\right| \geq \sqrt{\frac{\log(2/\delta)}{2k}}\right] \leq \delta.
\end{align*}
\end{lem}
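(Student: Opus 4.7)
The plan is to apply Hoeffding's inequality to the average defining $\hat{f}_{n,k}(x)$ after conditioning on the design $\bm{X}$. Once we condition on $\bm{X}$, the permutation $\{\tau_{n,q}(x)\}_{q\in[n]}$ induced by the distances $\rho(x,X_i)$ becomes a fixed (deterministic) reordering of $[n]$, so $\{Z_{\tau_{n,q}(x)}\}_{q\in[k]}$ is simply a fixed-size subcollection of the $Z_i$'s.

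First I would observe that since $(X_i,Z_i) \sim \Prob_f$ are i.i.d., the conditional distribution of $\bm{Z}$ given $\bm{X}$ factorises: the $Z_i$ are conditionally independent with $\E[Z_i \mid \bm{X}] = \E[Z_i \mid X_i] = f(X_i)$ and $Z_i \in [0,1]$. Consequently, conditional on $\bm{X}$, the variables $Z_{\tau_{n,1}(x)},\dots,Z_{\tau_{n,k}(x)}$ are independent, each bounded in $[0,1]$, with
\[
\E\!\left[Z_{\tau_{n,q}(x)} \,\middle|\, \bm{X}\right] = f(X_{\tau_{n,q}(x)}).
\]

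Next I would apply Hoeffding's inequality to the conditionally independent, $[0,1]$-bounded summands $Z_{\tau_{n,q}(x)}$ for $q\in[k]$: with $S_k := \frac{1}{k}\sum_{q\in[k]} Z_{\tau_{n,q}(x)} = \hat{f}_{n,k}(x)$ and conditional mean $\bar{\mu}_k := \frac{1}{k}\sum_{q\in[k]} f(X_{\tau_{n,q}(x)})$, Hoeffding gives, for any $t>0$,
\[
\Prob_{\bm{Z}|\bm{X}}\!\left[\,\left| S_k - \bar{\mu}_k \right| \geq t\,\right] \;\leq\; 2\exp(-2k t^2).
\]
Setting $t = \sqrt{\log(2/\delta)/(2k)}$ bounds the right-hand side by $\delta$, which is exactly the claim.

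I don't anticipate any real obstacle: the only subtlety is making sure the reader sees that conditioning on the full vector $\bm{X}$ turns the random indices $\tau_{n,q}(x)$ into deterministic ones, so that the sum is genuinely a sum of independent bounded variables and Hoeffding applies cleanly. The bound is uniform in the fixed realisation of $\bm{X}$, so the stated probability (with $\bm{X}$ held fixed as in the lemma hypothesis) follows directly.
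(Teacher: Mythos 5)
Your proposal is correct and takes essentially the same approach as the paper: condition on $\bm{X}$ so the nearest-neighbour indices are fixed, note that the $Z_{\tau_{n,q}(x)}$ are conditionally independent with conditional means $f(X_{\tau_{n,q}(x)})$ and values in $[0,1]$, and apply Hoeffding's inequality with $t=\sqrt{\log(2/\delta)/(2k)}$. You spell out the "random indices become deterministic after conditioning" point a little more explicitly than the paper does, but the argument is the same.
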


\paragraph{Proof} Note that $\hat{f}_{n,k}(x) = \frac{1}{k}\sum_{q \in [k]}Z_{\tau_{n,q}(x)}$ and the random variables $Z_{\tau_{n,q}(x)}$ are conditionally independent given $\bm{X}$. In addition, for each $q \in [k]$ we have $\E_{\bm{Z}|\bm{X}}\left[Z_{\tau_{n,q}(x)}\right] = f\left(X_{\tau_{n,q}(x)}\right)$. Hence, by Hoeffding's inequality we have
\begin{align*}
&\Prob_{\bm{Z}|\bm{X}}\left[
\left|  \hat{f}_{n,k}(x)- \frac{1}{k}\sum_{q \in [k]}f\left(X_{\tau_{n,q}(x)}\right)\right| \geq \sqrt{\frac{\log(2/\delta)}{2k}}\right]\\
&=
\Prob_{\bm{Z}|\bm{X}}\left[
\left| \frac{1}{k}\sum_{q \in [k]}Z_{\tau_{n,q}(x)}- \E_{\bm{Z}|\bm{X}}\left[\frac{1}{k}\sum_{q \in [k]}Z_{\tau_{n,q}(x)}\right]\right| \geq \sqrt{\frac{\log(2/\delta)}{2k}}\right] \leq \delta.    
\end{align*}

\QEDA

We have the following high probability performance bound.
\begin{theorem}\label{knnRegressionBoundGeneralk} Suppose that $f$ satisfies the H\"{o}lder assumption with parameters $(\holderExponent, \holderConstant)$ and $\marginalDistribution$ satisfies the minimal mass assumption with parameters $(\dimensionExponent, \densityFunction)$. Given any $n \in \N$, $\delta \in (0,1)$, $x \in \X$ and $k \in \N \cap [8 \log(2/\delta),\densityFunction(x) \cdot (n/2)]$, with probability at least $1-\delta$ over $\sample_f$ we have
\begin{align*}
\left| \hat{f}_{n,k}(x)-f(x) \right|< \sqrt{\frac{\log(4/\delta)}{2k}}+\holderConstant\cdot \left( \frac{2k}{\densityFunction(x) \cdot n} \right)^{\frac{\holderExponent}{\dimensionExponent}}.
\end{align*}
\end{theorem}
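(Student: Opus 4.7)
The plan is a textbook bias-variance decomposition, leveraging the two supporting lemmas that have already been established. I would write $\hat{f}_{n,k}(x) - f(x)$ as the sum of a stochastic term and a bias term:
\[
\hat{f}_{n,k}(x) - f(x) = \underbrace{\hat{f}_{n,k}(x) - \frac{1}{k}\sum_{q\in[k]} f\bigl(X_{\tau_{n,q}(x)}\bigr)}_{\text{variance}} + \underbrace{\frac{1}{k}\sum_{q\in[k]} f\bigl(X_{\tau_{n,q}(x)}\bigr) - f(x)}_{\text{bias}},
\]
and control each piece on a high-probability event obtained by a union bound at level $\delta/2$ each.

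For the variance term, I would invoke Lemma \ref{knnEstIsCloseToItsConditionalExpectationLemma} with confidence $\delta/2$, which yields, with probability at least $1-\delta/2$ over $\bm{Z}$ conditioned on $\bm{X}$,
\[
\left|\hat{f}_{n,k}(x) - \frac{1}{k}\sum_{q\in[k]} f\bigl(X_{\tau_{n,q}(x)}\bigr)\right| < \sqrt{\frac{\log(4/\delta)}{2k}}.
\]
Since this bound does not depend on the realisation of $\bm{X}$, integrating out $\bm{X}$ preserves the $1-\delta/2$ unconditional probability.

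For the bias term, I would apply Lemma \ref{closeNeighboursLemma} at confidence $\delta/2$. This requires $k \in \N \cap [8\log(2/\delta), \densityFunction(x)\cdot n/2]$, which is exactly the hypothesis. The conclusion is that with probability at least $1-\delta/2$, $\dist(x, X_{\tau_{n,k}(x)}) < r := (2k/(\densityFunction(x)\cdot n))^{1/\dimensionExponent}$. Note that the hypothesis $k \leq \densityFunction(x)\cdot n/2$ also gives $r \leq 1$, so that $r < 1$ (strictly, using the strict inequality from the lemma) and hence $\dist(x, X_{\tau_{n,q}(x)}) < 1$ for all $q \in [k]$, which is needed to invoke the H\"older assumption. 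Combining with the H\"older condition and the triangle inequality:
\[
\left|\frac{1}{k}\sum_{q\in[k]} f\bigl(X_{\tau_{n,q}(x)}\bigr) - f(x)\right| \leq \max_{q\in[k]} \holderConstant\cdot \dist\bigl(x, X_{\tau_{n,q}(x)}\bigr)^{\holderExponent} \leq \holderConstant \cdot r^{\holderExponent} = \holderConstant\cdot\left(\frac{2k}{\densityFunction(x)\cdot n}\right)^{\holderExponent/\dimensionExponent}.
\]

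Finally, a union bound shows both events hold simultaneously with probability at least $1-\delta$, and combining via the triangle inequality yields the claim. There is no real obstacle here; the only care needed is threading the hypotheses of both lemmas correctly (in particular $r < 1$ to apply H\"older, and the lower bound $k \geq 8\log(2/\delta)$ to apply Lemma \ref{closeNeighboursLemma} at level $\delta/2$).
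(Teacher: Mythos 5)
Your proposal is correct and follows the same bias-variance decomposition the paper uses: split off the conditional mean, bound the stochastic term via Lemma \ref{knnEstIsCloseToItsConditionalExpectationLemma}, bound the bias via Lemma \ref{closeNeighboursLemma} combined with the H\"older condition, and finish with a union bound at level $\delta/2$ for each piece. Your careful observation that $k \leq \densityFunction(x)\cdot n/2$ forces $r < 1$, so the H\"older assumption is applicable, is a detail the paper leaves implicit, and your calibration of $\delta/2$ in Lemma \ref{knnEstIsCloseToItsConditionalExpectationLemma} correctly produces the $\log(4/\delta)$ appearing in the theorem statement.
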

The proof of Theorem \ref{knnRegressionBoundGeneralk} is broadly similar to the proof of \cite[Theorem 1]{kpotufe2011k} adapted to our assumptions.

\paragraph{Proof of Theorem \ref{knnRegressionBoundGeneralk}} By Lemmas \ref{closeNeighboursLemma}, \ref{knnEstIsCloseToItsConditionalExpectationLemma} and the union bound, with probability at least $1-\delta$ over $\sample_f$, we have $\dist\left(x,X_{\tau_{n,k}(x)}\right)< \left( {2k}/\left({\densityFunction(x) \cdot n}\right) \right)^{\frac{1}{\dimensionExponent}}$ and 
\begin{align*}
\left|  \hat{f}_{n,k}(x)- \frac{1}{k}\sum_{q \in [k]}f\left(X_{\tau_{n,q}(x)}\right)\right| < \sqrt{\frac{\log(2/\delta)}{2k}}.
\end{align*}
By the H\"{o}lder assumption, combined with $\dist\left(x,X_{\tau_{n,q}(x)}\right) \leq \dist\left(x,X_{\tau_{n,k}(x)}\right)< \left( {2k}/\left({\densityFunction(x) \cdot n}\right) \right)^{\frac{1}{\dimensionExponent}}$, for $q \in [k]$ we have \[\left| f(X_{\tau_{n,q}(x)})-f(x)\right| \leq \holderConstant\cdot \left( \frac{2k}{\densityFunction(x) \cdot n} \right)^{\frac{\holderExponent}{\dimensionExponent}}.\]
Hence, the theorem follows by the triangle inequality.
\QEDA.

\paragraph{Proof of Theorem \ref{knnRegressionBoundLepskiK}} By Theorem \ref{knnRegressionBoundGeneralk} combined with the union bound we see that with probability at least $1-\delta$, the following holds simultaneously for all $k \in \N \cap [8 \log (2n/\delta), \densityFunction(x)\cdot n/2]$
\begin{align}\label{generalKRegApplicationEq}
\left| \hat{f}_{n,k}(x)-f(x) \right|&< \sqrt{\frac{\log(4n/\delta)}{2k}}+\holderConstant\cdot \left( \frac{2k}{\densityFunction(x) \cdot n} \right)^{\frac{\holderExponent}{\dimensionExponent}}.
\end{align}
We choose $\tilde{k} \in \N$ so maximally so that the first term in (\ref{generalKRegApplicationEq}) bounds the second,
\[\tilde{k}:= \left\lfloor \frac{1}{2} \cdot \left(\densityFunction(x)\cdot n\right)^{\frac{2\holderExponent}{2\holderExponent+\dimensionExponent}} \cdot \left( \frac{\log(4n/\delta)}{\holderConstant^2}\right)^{\frac{\dimensionExponent}{2\holderExponent+\dimensionExponent}}\right\rfloor.\]
We may assume without loss of generality that $ 8\log(2n/\delta)\leq \tilde{k}\leq \densityFunction(x)\cdot n/2$, since otherwise the RHS in (\ref{statementOfLepskiKNNIneq}) is trivial. Thus, we have
\begin{align}\label{tildeKBoundseq}
\frac{1}{4} \cdot \left(\densityFunction(x)\cdot n\right)^{\frac{2\holderExponent}{2\holderExponent+\dimensionExponent}} \cdot \left(\frac{ \log(4n/\delta)}{\holderConstant^2}\right)^{\frac{\dimensionExponent}{2\holderExponent+\dimensionExponent}}\leq \tilde{k} \leq \frac{1}{2} \cdot \left(\densityFunction(x)\cdot n\right)^{\frac{2\holderExponent}{2\holderExponent+\dimensionExponent}} \cdot \left(\frac{ \log(4n/\delta)}{\holderConstant^2}\right)^{\frac{\dimensionExponent}{2\holderExponent+\dimensionExponent}}.
\end{align} 
By (\ref{generalKRegApplicationEq}) combined with the upper bound in (\ref{tildeKBoundseq}) we see that for $q \in \N \cap [8 \log (2n/\delta), \tilde{k}]$ we have
\begin{align*}
\left| \hat{f}_{n,q}(x)-f(x) \right|&< \sqrt{\frac{\log(4n/\delta)}{2q}}+\holderConstant\cdot \left( \frac{2q}{\densityFunction(x) \cdot n} \right)^{\frac{\holderExponent}{\dimensionExponent}}\leq \sqrt{\frac{2\log(4n/\delta)}{q}}.
\end{align*}
Hence, $f(x) \in   \bigcap_{q \in \N \cap [8\log(2n/\delta),\tilde{k}]}   \hat{\mathcal{I}}_{n,q,\delta}(x) \neq \emptyset$, so $\tilde{k} \leq \hat{k}_{n,\delta}(x)$. Moreover, by the construction of $\hat{k}_{n,\delta}(x)$ we must have \[\hat{\mathcal{I}}_{n,\hat{k}_{n,\delta}(x),\delta}(x)\cap \hat{\mathcal{I}}_{n,\tilde{k},\delta}(x) \neq \emptyset.\]
Combining this with the fact that $\hat{f}_{n,\delta}(x) \in \hat{\mathcal{I}}_{n,\hat{k}_{n,\delta}(x),\delta}(x)$, $f(x) \in  \hat{\mathcal{I}}_{n,\tilde{k},\delta}(x)$ and each interval $ \hat{\mathcal{I}}_{n,q,\delta}(x)$ is of diameter $2\sqrt{2\log(4n/\delta)/q}$ we have
\begin{align*}
\left|\hat{f}_{n,\delta}(x)-f(x)\right| &\leq 2\sqrt{\frac{2\log(4n/\delta)}{\hat{k}_{n,\delta}(x)}}+2\sqrt{\frac{2\log(4n/\delta)}{\tilde{k}}} \\
&\leq (8 \sqrt{2}) \cdot \sqrt{\frac{\log(4n/\delta)}{\tilde{4k}}} \leq (8 \sqrt{2}) \cdot {\holderConstant}^{\frac{\dimensionExponent}{2\holderExponent+\dimensionExponent}} \cdot\left( \frac{\log(4n/\delta)}{\densityFunction(x)\cdot n} \right)^{\frac{\holderExponent}{2\holderExponent+\dimensionExponent}},
\end{align*}
where the final inequality follows from the lower bound in (\ref{tildeKBoundseq}).

\QEDA

\end{document}